\newcommand{\Paren}[1]{\left(#1\right)}
\newcommand{\lmax}{\lambda_{\max}}
\newcommand{\Norm}[1]{\left\lVert#1\right\rVert}
\newcommand{\normt}[1]{\norm{#1}_2}
\newcommand{\bignorm}[1]{\big\lVert#1\big\rVert}
\DeclareMathOperator{\tr}{tr}
\newcommand{\iprod}[1]{\langle#1\rangle}
\DeclareMathOperator{\OPT}{OPT}
\newcommand{\tO}{\tilde{O}}
\newcommand{\gest}{\text{Gradient Estimation}}
\newcommand{\psimp}[1][\eta]{\Delta_{#1, n}}
\newcommand{\nit}{O  (\cn \log \frac{\norm{w^*}}{\eta \sigma \kappa} )}
\newcommand{\var}[1]{\text{var}{#1}}
\newcommand{\hc}{\tau}
\newcommand{\cn}{\kappa}
\newcommand{\cp}{C_1}
\newcommand{\cetakappa}{C_2}
\newcommand{\cge}{C_3}
\newcommand{\ccov}{C_4}
\newcommand{\cfour}{C_6}
\newcommand{\cff}{C_5}
\newenvironment{myenumerate}
{ \begin{enumerate}
   }
{ \end{enumerate}                  } 
\newcommand{\cD}{\mathcal D}
\title{Optimal Robust Linear Regression in Nearly Linear Time}
\author{Yeshwanth Cherapanamjeri\thanks{EECS, University of California Berkeley. \texttt{yeshwanth@berkeley.edu}} \and Efe Aras\thanks{EECS, University of California Berkeley. \texttt{efearas96@berkeley.edu}} \and Nilesh Tripuraneni\thanks{EECS, University of California Berkeley. \texttt{nilesh\_tripuraneni@berkeley.edu}} \and Michael I. Jordan\thanks{EECS, University of California Berkeley. \texttt{jordan@cs.berkeley.edu}} \and Nicolas Flammarion\thanks{School of Computer and Communication Sciences, EPFL. \texttt{nicolas.flammarion@epfl.ch}} \and Peter L. Bartlett\thanks{EECS, University of California Berkeley. \texttt{peter@berkeley.edu}}}
\begin{document}

\maketitle

\begin{abstract}
    We study the problem of high-dimensional robust linear regression where a learner is given access to $n$ samples from the generative model $Y = \inp{X}{w^*} + \epsilon$ (with $X \in \mathbb{R}^d$ and $\epsilon$ independent), in which an $\eta$ fraction of the samples have been adversarially corrupted. We propose estimators for this problem under two settings: (i) $X$ is L4-L2 hypercontractive, $\mb{E} [XX^\top]$ has bounded condition number and $\epsilon$ has bounded variance and (ii) $X$ is sub-Gaussian with identity second moment and $\epsilon$ is sub-Gaussian. In both settings, our estimators:
\begin{enumerate}
    \item Achieve optimal sample complexities and recovery guarantees up to log factors and
    \item Run in near linear time ($\tilde{O}(nd / \eta^6)$).
\end{enumerate}
Prior to our work, \emph{polynomial} time algorithms achieving near optimal sample complexities were only known in the setting where $X$ is Gaussian with identity covariance and $\epsilon$ is Gaussian, and \emph{no} linear time estimators were known for robust linear regression in any setting. Our estimators and their analysis leverage recent developments in the construction of faster algorithms for robust mean estimation to improve runtimes, and refined concentration of measure arguments alongside Gaussian rounding techniques to improve statistical sample complexities.

\end{abstract}

\section{Introduction}
\label[section]{introduction}

Least-squares regression is amongst the oldest and most fundamental statistical methods -- its use dates back over two centuries to the seminal works of \citet{gauss} and \citet{legendre} who used it to estimate the trajectories of celestial bodies. Since then, least-squares regression has found numerous applications in varied fields like finance~\citep{dielman2001applied}, epidemiology~\citep{bhaskaran2013time}, astronomy~\citep{isobe1990linear} and biostatistics~\citep{mcdonald2009handbook}.
However, the algorithmic primitives underlying these estimators assume that the data being used to perform such analysis is clean and well-curated. This assumption is simply not true in modern datasets which often contain extremely noisy and sometimes even adversarial data.

In line with such considerations, we study the problem of high-dimensional robust linear regression. In the standard linear regression setup, one has access to $n$ i.i.d.~samples from the following generative model $Y = \inp{X}{w^*} + \epsilon$ where $X$ and $\epsilon$ are independent and $\epsilon$ is mean $0$ with $\text{Var}(\epsilon) \leq \sigma^2$. In the robust setting, an adversary is allowed to observe the generated samples and arbitrarily perturb an $\eta$ fraction of them. The goal is to estimate the true regression vector, $w^*$, given such corrupted samples. We consider two concrete scenarios:
\vspace*{-.15cm}
\begin{enumerate}
    \item \textbf{Heavy Tailed Case:} $X$ satisfies an L4-L2 hyper-contractivity assumption and
    \vspace*{-.15cm}
    \item \textbf{Sub-Gaussian Case:} $X$ and $\epsilon$ are sub-Gaussian with $\mb{E} [X X^\top] = I$.
\end{enumerate}
\vspace*{-.15cm}

In both settings, our proposed estimators run in near linear-time with near optimal sample complexity and achieve information theoretically optimal recovery guarantees of $O(\sigma \sqrt{\eta})$ in the heavy tailed case assuming a bound on the condition number of $\mb{E}[XX^\top]$ and $O(\sigma \eta \log 1 / \eta)$ in the sub-Gaussian case which is information theoretically optimal up to a factor $O(\sqrt{\log 1 / \eta})$. In contrast, all previous approaches to robust linear regression suffer from either sub-optimal sample complexities, slow running times or both~\citep{klivans2018efficient,prasad2018robust,diakonikolas2019sever,diakonikolas2019efficient}. 

Our algorithm is based on the iterative robust gradient estimation framework from \citep{diakonikolas2019sever} and \citep{prasad2018robust}. In both these works, a candidate parameter vector is maintained and a robust estimate of the gradient of the function being optimized is obtained by running a robust estimation procedure on a collection of contaminated gradient estimates. For example, the gradients at some current estimate $w$ would consist of the vectors $G_i(w)  = (\inp{X_i}{w} - Y_i)X_i$ for linear regression. The robust gradient is then used to iteratively improve the estimate $w$. Since, the number of iterations is typically small, the computational cost of the algorithm is dominated by the cost of a single iteration. To obtain our improved computational performance, we leverage recent developments in algorithms for robust mean estimation~\citep{cheng2019high,DBLP:conf/nips/DongH019} to implement the robust estimation step in near-linear time.

To ensure the success of the algorithm, one needs to show that the robust estimation procedure returns a good estimate of the gradient. In typical approaches to this problem, one is required to exhibit a set of weights, $s_i$, on the gradients such that $\sum s_i G_i(w)$ is close to the true gradient and $\sum s_i G_i(w) G_i(w)^\top$ is spectrally bounded. The usual approach to ensuring this condition is to bound the spectral norm of the random tensor $W = n^{-1} \sum X_i^{\otimes 4}$. However, note that the length of a typical $X_i$ is on the order of $\sqrt{d}$ which means that the spectral norm of $X_i^{\otimes 4}$ is $O(d^2)$. Therefore, one would need $n \approx d^2$ for the spectral norm of $W$ to be small. To circumvent this issue, we instead work with a carefully chosen subset of our data based on our current estimate and use an intricate generalization of Gaussian rounding schemes previously employed in the context of heavy-tailed mean estimation~\citep{lecue2019robust} to exhibit the existence of a suitable set of weights without sacrificing sample complexity.

For the sub-Gaussian case, we generalize results from \citep{diakonikolas2019efficient} for the case where the covariates $X$ and error $\epsilon$ are Gaussian to the milder assumptions of sub-Gaussianity while still achieving the same recovery error. Interestingly, we show that one can obtain this improved error using the heavy tailed algorithm by adding a single correction step at its conclusion to improve the estimate from an error of $O(\sqrt{\eta}\sigma)$ to $O(\eta \log 1/ \eta  \sigma)$. This is also applicable to the robust mean estimation scenario where it conceptually simplifies the presentation of \citep{cheng2019high} and \citep{DBLP:conf/nips/DongH019}. 

\subsection{Related work}

There is long line of work in the statistics community which has led to the construction of estimators robust to the presence of adversarial noise. For example, for the problem of robust mean estimation, \citet{huber1964robust} first proposed an estimator in the one dimensional case which was later generalized to the multi-dimensional case by \citet{tukey1975mathematics}. Since then, the expansion of these ideas has resulted in the development of robust estimators for other tasks like linear regression and covariance estimation~\citep{huber2004robust}. Unfortunately, there are no known polynomial time algorithms to compute several of these estimators. As a remedy to these shortcomings, a recent line of work in the computer science community have devised polynomial time robust estimators achieving near optimal recovery guarantees for a range of statistical estimation problems including mean estimation, linear regression and covariance estimation~\citep{lai2016agnostic,diakonikolas2016robust,diakonikolas2017being,prasad2018robust,diakonikolas2019sever,klivans2018efficient,kothari2018robust,diakonikolas2018robustly,hopkins2018mixture,steinhardt2017resilience,charikar2017learning,diakonikolas2018list,diakonikolas2019recent,diakonikolas2019efficient}. For robust mean estimation, recent work has resulted in algorithms running in near linear time and achieving optimal sample complexity and recovery error~\citep{cheng2019high,DBLP:conf/nips/DongH019}.

The problem of robust linear regression has been previously considered in the works of \citet{klivans2018efficient,diakonikolas2019efficient,prasad2018robust,diakonikolas2019sever,balakrishnan2017computationally,bhatia2015robust,bhatia2017consistent,suggala2019adaptive}. Of these, \citep{bhatia2015robust,bhatia2017consistent,suggala2019adaptive} only tolerate adversarial corruption in the responses $Y_i$ and do not apply in our scenario where the covariates may also be adversarially manipulated. From the viewpoint of computational complexity, none of the aforementioned works have algorithms with run times sub-quadratic in dimension. In terms of sample complexity, all previous algorithms are sub-optimal requiring the number of samples to be at least quadratic in the dimension barring the work of \citet{diakonikolas2019efficient} which applies only in the setting where $X$ and $\epsilon$ are normally distributed. 

Our work is most closely related to \citep{diakonikolas2019sever,prasad2018robust} which consider a general set-up where one aims to optimize a loss function, say $f(w) = \mb{E} [(\inp{X_i}{w} - Y_i)^2]$ for linear regression given access to stochastic gradients from $f$, a fraction of which have been adversarially corrupted. However, while their algorithms may be augmented with faster robust gradient estimation solvers to speed up runtime, their sample complexities are sub-optimal with sample complexities quintic and quadratic in dimension respectively~\citep{diakonikolas2019sever,prasad2018robust} and achieve optimal recovery error only in the heavy tailed setting. \citet{diakonikolas2019efficient} achieve near optimal recovery error and sample complexity under the assumption that both $X$ and $\epsilon$ are Gaussian with $X$ having identity covariance matrix but tolerate arbitrary covariance matrices by exploiting a robust covariance estimation procedure for Gaussian inputs at the cost of sub-optimal sample complexity.

\paragraph{Recent Work:} Two contemporary works \citep{bakshi2020robust,zhu2020robust} also study the problem of robust linear regression in greater generality than in this paper and in particular, \citet{bakshi2020robust} obtain nuanced information theoretically optimal recovery guarantees which scale with the number of available moments of the distribution. However, algorithmically exploiting these assumptions is computationally and statistically expensive. Consequently, both these approaches suffer from impractically large runtime ($O(n^{O(k)})$) and sample complexity ($O(d^{O(k)})$)\footnote{Assuming $k$ moments of the distribution are available. In our settings, $k \geq 4$.}. In addition, we also do not require the more restrictive assumption of certifiability on the moments of the distribution.

\subsection{Notation}
We often use $\mc{D}$ to refer to a distribution over $(X, Y)$, $\mc{G} = \{G_i\}_{i = 1}^n$ and $\mc{Z} = \{Z_i\}_{i = 1}^N$ to refer to sets of points and $\mc{H}$ and its variants to refer to events in a probability space. We use $\Delta_\delta$ to refer to the subset of the probability simplex defined by $\{s \in \mb{R}^n: \sum_{i = 1}^n s_i = 1, 0 \leq s_i \leq \frac{1}{(1 - \delta)n}\}$ and $\mc{E}_\delta$ to refer to the extreme points of $\Delta_\delta$; that is $\mc{E}_\delta = \{s\in \mb{R}^n: \sum_{i = 1}^n s_i = 1 ,s_i \in \{\frac{1}{(1 - \delta)n}, 0\}\}$. Given a set of points $\{Z_i\}_{i=1}^n$, a subset $S \subseteq [n]$ and $s \in \Delta_\delta$, we denote by $\mb{E}_S [f(X_i)] = \abs{S}^{-1} \sum_{i \in S} f(X_i) $ and $\mb{E}_s [f(X_i)] = \sum_{i = 1}^n s_i f(X_i)$. We use $\text{Supp} (s)$ to denote the set of indices for which $s_i > 0$. We also use $G_i(w)=(\langle X_i,w-w^*\rangle-Y_i)X_i$ to refer to the gradient of the linear regression objective evaluated on the $i$th sample at $w$, and $G^*(w)=\E[G_i(w)]$ to refer to the population gradient at $w$. Finally, for a vector $v$ and matrix $M$, we will use $\norm{v}$ and $\norm{M}$ to denote the Euclidean and spectral norms of $v$ and $M$ respectively and $\lmax (M)$ for the largest eigenvalue of $M$.

\section{Main Results}
\label{sec:main_res}


In this section, we formally present the main results of our paper. For a generative model, $\mc{D}$, we formally describe the corruption model below: 

\begin{definition}
    \label{def:etacor}
    Given $\eta > 0$ and a distribution, $\mc{D}$, a set of samples, $\{(X_i, Y_i)\}_{i = 1}^n$, is $\eta$-corrupted if it is generated according to the following model:
    \begin{myenumerate}
        \item $\{(X'_i, Y'_i)\}_{i = 1}^n$ are generated i.i.d.~from the distribution $\mc{D}$
        \item An adversary is allowed to inspect the samples and arbitrarily perturb any $\eta n$ of them.
    \end{myenumerate}
\end{definition}
For the heavy tailed scenario, we make the following assumptions on the generative model $Y = \inp{X}{w^*} + \epsilon$:

\begin{assumption}
\label{as:htassump}
The (uncorrupted) generative model $Y=\langle X, w^{*} \rangle + \epsilon$ satisfies:
\begin{myenumerate}
    \item $X$ is a random vector with second moment matrix $\Sigma$. Furthermore, $X$ is L4--L2 hyper-contractive. That is, for all $\norm{u} = 1$, we have:
\begin{equation*}
    \mb{E} [\inp{u}{X}^4] \leq \hc \cdot \lprp{\mb{E} [\inp{u}{X}^2]}^2
\end{equation*}
for an absolute constant $\hc$. For normalization purposes, we assume $\norm{\Sigma} =  1$. 
    
    \item The condition number of $\Sigma$ is bounded by $\cn < \infty$. 
    \item We have that $X$ satisfies $\norm{X} \leq \cp \sqrt{d}$ almost surely for some absolute constant $\cp > 1$.
    \item The noise variable $\epsilon$ is zero mean and independent of $X$ with variance bounded by $\sigma^2$. 
\end{myenumerate}

Additionally, we assume that $\eta$ satisfies $\eta \leq \frac{\cetakappa}{\kappa^2}$ for a suitably small constant\footnote{This condition is explained in \cref{lem:htgapxtemp}.} $\cetakappa > 0$.
\end{assumption}

The assumptions stated above are standard for heavy tailed linear regression where some condition on the anti-concentration of $XX^\top$ is required to obtain finite sample complexities~\citep{lecue2016performance,oliveira2016lower}. Note that the boundedness assumption $\norm{X} \leq O(\sqrt{d})$ is not a restriction and is made for ease of presentation -- since an unbounded random vector $X$ satisfying  hypercontractivity  can be truncated at level $O(\sqrt{d})$ without significantly distorting the second moment matrix, $\Sigma$ or L4-L2 hypercontraction constant, $\tau$ (See \cref{lem:trunclem} in \cref{ap:htc}). Given a set of random vectors satisfying \cref{as:htassump} save for the boundedness condition in 3., we can simply discard samples with large norms as a preprocessing step so they satisfy \cref{as:htassump} without affecting our analysis or conclusions.

The main result of our paper for the heavy tailed scenario is described in the following theorem:

\begin{theorem}
    \label{thm:htmain}
    Suppose a distribution $\mc{D}$ satisfies \cref{as:htassump}. Then there exists an algorithm which when given $n = \tO (d / \eta)$ $\eta$-corrupted samples from $\mc{D}$, runs in time $\tO (\kappa nd / \eta^6)$ and returns an estimate, $\hat{w}$ satisfying:
    \begin{equation*}
        \norm{\hat{w} - w^*} \leq O(\kappa \sqrt{\eta} \ \sigma),
    \end{equation*}
    with probability at least $0.9$.
\end{theorem}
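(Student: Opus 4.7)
The plan is to follow the iterative robust gradient descent framework of \cite{diakonikolas2019sever,prasad2018robust}: maintain an iterate $w_t$, obtain a robust estimate $g_t$ of the population gradient $G^*(w_t) = \Sigma(w_t - w^*)$ by applying a fast robust mean estimator to the empirical gradients $\{G_i(w_t)\}_{i=1}^n$, and then take a step $w_{t+1} = w_t - \alpha g_t$. Since the population objective is $\sigma_{\min}(\Sigma)$--strongly convex and $\norm{\Sigma}$--smooth, provided each $g_t$ satisfies an error guarantee of the form $\norm{g_t - G^*(w_t)}_{\Sigma^{-1}} \leq A_1 + A_2 \norm{w_t - w^*}_\Sigma$ with $A_1 = O(\sigma \sqrt{\eta})$ and $A_2 = O(\sqrt{\eta})$ small enough in terms of $\kappa$ (this is the role of the $\eta \leq \cetakappa/\kappa^2$ assumption), a standard one-step contraction yields
\[
\norm{w_{t+1} - w^*}_\Sigma \leq \Paren{1 - \tfrac{c}{\kappa}} \norm{w_t - w^*}_\Sigma + O(\sigma \sqrt{\eta}),
\]
so $\nit = O(\kappa \log(\norm{w^*}/(\eta \sigma \kappa)))$ iterations from a warm start $w_0$ (produced, e.g., by a coarse robust mean estimator on $\{Y_i X_i\}$) suffice to reach error $O(\kappa \sqrt{\eta}\, \sigma)$.

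The heart of the proof is therefore a per-iteration robust gradient estimation lemma that, given the current $w_t$, returns $g_t$ in near-linear time $\tO(nd/\eta^6)$. I would implement this by invoking the fast robust mean estimators of \cite{cheng2019high,DBLP:conf/nips/DongH019} on a \emph{truncated} version of the gradients: since $\norm{X_i} \leq \cp \sqrt{d}$ a.s.\ (\cref{as:htassump}) and $G_i(w) = (\iprod{X_i, w - w^*} - \epsilon_i)X_i$, one can throw away gradients whose norm exceeds a threshold $\rade{t}$ of order $A_1 + A_2 \norm{w_t - w^*}_\Sigma$, effectively localizing to a good subset based on $w_t$. Known robust mean estimators then return $g_t$ within $O(\sqrt{\eta} \cdot \lambda_{\max}^{1/2})$ of the conditional mean, where $\lambda_{\max}$ is the spectral norm of the empirical covariance of the truncated gradients. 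To bound the bias incurred by truncation, one combines hypercontractivity of $X$ with independence of $\epsilon$ and $X$.

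The main obstacle, and the heart of the sample complexity improvement over prior work, is showing the existence of a weight vector $s \in \Delta_\eta$ on the truncated gradients whose weighted second moment has spectral norm $\tO(A_1^2 + A_2^2 \norm{w_t - w^*}_\Sigma^2)$, without paying $n \gtrsim d^2$. A naive bound through $n^{-1} \sum X_i^{\otimes 4}$ fails because typical $\norm{X_i}^2 \asymp d$ forces $\norm{X_i^{\otimes 4}} \asymp d^2$. To circumvent this I would adapt the Gaussian rounding scheme used in heavy-tailed mean estimation \cite{lecue2019robust}: rather than controlling the full fourth-moment tensor, argue in every one-dimensional direction $u$ by introducing a random Gaussian $g \sim N(0, I)$, bound the moments of $\iprod{g, G_i(w_t)}$ using only L4--L2 hypercontractivity of $X$ and the variance bound on $\epsilon$, and then derandomize to exhibit the desired weights. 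This yields the resilience condition at $\tilde O(d/\eta)$ samples rather than $\tilde O(d^2/\eta)$.

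Finally, I would union-bound the per-iteration concentration (anti-concentration of $X X^\top$, uniform control over the relevant $w_t$ via a net, and the probability-$0.9$ guarantees of the fast robust mean estimator, with $\delta$ amplified to $\tO(1/\nit)$ through repetition/median tricks) to obtain overall success probability $0.9$. Multiplying the per-iteration cost $\tO(nd/\eta^6)$ by $O(\kappa \log(\cdot))$ iterations gives the stated runtime $\tO(\kappa nd /\eta^6)$, and the convergence analysis yields the final error $O(\kappa \sqrt{\eta} \sigma)$.
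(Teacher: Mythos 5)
Your blueprint coincides with the paper's: robust gradient descent where each step's gradient is estimated by solving \ref{eq:gestsdp} with a fast packing-SDP solver, convergence in $O(\kappa\log(\cdot))$ iterations, and a Gaussian-rounding concentration argument to reach $\tilde O(d/\eta)$ samples. However, the step that carries the entire sample-complexity claim is under-specified in a way that would fail as written. Arguing ``in every one-dimensional direction with $g \sim \mathcal{N}(0,I)$ and then derandomizing'' does not produce what is needed: a \emph{single} weight vector $s$ (equivalently a subset $S$) such that $\mb{E}_s[G_iG_i^\top]$ is spectrally bounded \emph{simultaneously in all directions}. The points one must discard to control $\inp{G_i}{v}^2$ depend on $v$, and that dependence is exactly the obstruction. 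The paper resolves it as follows: for $u=(w-w^*)/\norm{w-w^*}$ it writes the quantity of interest as $\min_{\alpha\in\Delta_{\eta/25}}\max_{M\succcurlyeq 0,\,\tr M=1}\inp{\mb{E}_\alpha[\inp{X_i}{u}^2X_iX_i^\top]}{M}$, swaps min and max by von Neumann's theorem (so the discarded points may depend on $M$), proves the rank-one case $M=vv^\top$ uniformly over a net of pairs $(u,v)$ (\cref{lem:htcovconcr1}), and lifts to arbitrary trace-one p.s.d.\ $M$ by rounding with $g\sim\mathcal{N}(0,M)$ --- crucially with covariance $M$, not $I$ --- using Borell--TIS and the fact that $\inp{X_i}{g}$ is Gaussian with variance $\inp{X_iX_i^\top}{M}$ (\cref{lem:htcovconc}); the single weight vector then falls out of the duality, and thresholding it yields the good set (\cref{lem:htcovconcset}). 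Your sketch contains neither the minimax swap nor the covariance-$M$ rounding, and these are the paper's main new ideas, so as it stands the resilience condition at $\tilde O(d/\eta)$ samples is asserted rather than proved.

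A second, smaller issue: the proposed preprocessing that discards gradients whose norm exceeds a threshold of order $A_1+A_2\norm{w_t-w^*}_\Sigma$ both relies on the unknown quantity $\norm{w_t-w^*}_\Sigma$ (one would at least need to maintain a surrogate upper bound) and, with your own choices $A_1=O(\sigma\sqrt\eta)$, $A_2=O(\sqrt\eta)$, is at the wrong scale: under \cref{as:htassump} the \emph{good} gradients can have norm as large as $\Theta\lprp{\sqrt{d/\eta}\,(\norm{w_t-w^*}+\sigma)}$ (cf.\ \cref{lem:htdet}), so such a truncation would discard essentially all the data. The paper needs no algorithmic truncation of this kind: \cref{alg:htgest} runs the SDP on all the gradients, the primal weights do the down-weighting, and the ``carefully chosen subset'' exists only in the analysis (\cref{as:htd}, \cref{lem:htdet}). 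The remainder of your plan --- the contraction argument requiring $\sqrt\eta \ll 1/\kappa$, boosting the $9/10$ solver success probability across the $T$ iterations, and the runtime accounting --- matches the paper.
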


In the setting where $\kappa = O(1)$, we obtain the information theoretically optimal recovery error of $O(\sqrt{\eta} \sigma)$ with optimal sample complexity up to log factors (\cref{thm:lb_ht}). Note that it is information theoretically \emph{impossible} to achieve parameter recovery independent of $\kappa$ under these assumptions (\cref{cor:condDep}). For the sub-Gaussian scenario, we make the following stronger assumptions on the generative model:

\begin{assumption}
\label{as:sgassump}
The generative model $Y = \inp{X}{w^*} + \epsilon$ satisfies:
\begin{myenumerate}
    \item $X$ is a random vector with second moment matrix $I$. Furthermore, we assume that there exists an absolute constant, $\psi$, such that for every $\norm{u} = 1$:
    \begin{equation*}
        \mb{P} \lprp{\abs{\inp{X}{u} - \mb{E} \inp{X}{u}} \geq t} \leq 2 \exp \lprp{- \frac{t^2}{2\psi^2}}.
    \end{equation*}
    \item The noise variable $\epsilon$ is zero mean, variance $\sigma^2$ independent of $X$ and is sub-gaussian with sub-gaussianity parameter $\phi = O(\sigma)$. That is, $\epsilon$ satisfies:
    \begin{equation*}
        \mb{P} (\abs{\epsilon} > t) \leq 2 \exp \lprp{- \frac{t^2}{2\phi^2}}.
    \end{equation*}
\end{myenumerate}

Additionally, we assume that $\eta$ satisfies $\eta \leq \cetakappa$ for a sufficiently small constant $\cetakappa > 0$.
\end{assumption}

With these stronger assumptions on the generative model, we obtain improved recovery guarantees as detailed in the following theorem:

\begin{theorem}
    \label{thm:sgmain}
    Suppose a distribution $\mc{D}$ satisfies \cref{as:sgassump}. Then, there exists an algorithm which when given $n = \tO (d / \eta^2)$ $\eta$-corrupted samples from $\mc{D}$, runs in time $\tO (nd / \eta^6)$ and returns an estimate, $\hat{w}$, satisfying:
    \begin{equation*}
        \norm{\hat{w} - w^*} \leq O(\eta \log 1 / \eta \ \sigma),
    \end{equation*}
    with probability at least $2 / 3$.
\end{theorem}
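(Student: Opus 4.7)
The plan is to reduce Theorem~\ref{thm:sgmain} to Theorem~\ref{thm:htmain} followed by a single sub-Gaussian corrective step. Intuitively, the heavy-tailed algorithm already places us within $O(\sqrt{\eta}\sigma)$ of $w^*$, and at that point the gradient residuals inherit enough sub-Gaussian structure that one round of refined robust mean estimation shrinks the error to the claimed $O(\eta \log(1/\eta)\sigma)$ rate.

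First I would verify that Assumption~\ref{as:sgassump} implies Assumption~\ref{as:htassump} so Theorem~\ref{thm:htmain} is applicable. Sub-Gaussianity with parameter $\psi$ yields L4--L2 hypercontractivity with constant $\tau = O(\psi^4)$; the identity covariance forces $\kappa = 1$; and the almost-sure bound $\|X\| \leq C_1 \sqrt{d}$ can be arranged by the truncation preprocessing noted after Assumption~\ref{as:htassump}, since $\mathbb{P}(\|X\| \geq t\sqrt{d})$ decays exponentially in $t^2$, so a negligible fraction of the clean samples violate it and can be absorbed into the corruption budget. Feeding the surviving samples into the algorithm of Theorem~\ref{thm:htmain} returns a warm-start $w_1$ with $\|w_1 - w^*\| \leq O(\sqrt{\eta}\sigma)$ in time $\tilde{O}(nd/\eta^6)$.

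The correction step is where the sub-Gaussian structure buys the improved rate. I would form the gradients $G_i(w_1) = (\langle X_i, w_1 - w^*\rangle - \epsilon_i) X_i$. The population gradient is $G^*(w_1) = \Sigma(w_1 - w^*) = w_1 - w^*$, and each scalar $\zeta_i := \langle X_i, w_1 - w^*\rangle - \epsilon_i$ is sub-Gaussian with parameter $O(\|w_1 - w^*\| + \sigma) = O(\sigma)$, independent of the direction of $X_i$. Running a robust mean estimator tuned to sub-Gaussian/sub-exponential inputs on $\{G_i(w_1)\}_{i=1}^n$ would yield $\hat{g}$ with
\begin{equation*}
\|\hat{g} - G^*(w_1)\| \leq O\bigl(\eta \log(1/\eta)\bigr)\bigl(\|w_1 - w^*\| + \sigma\bigr).
\end{equation*}
Setting $\hat{w} := w_1 - \hat{g}$ then gives $\|\hat{w} - w^*\| = \|\hat{g} - G^*(w_1)\| \leq O(\eta \log(1/\eta))\sigma$ since $\|w_1 - w^*\| \lesssim \sigma$.

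The hard part will be establishing the $O(\eta \log(1/\eta))$ guarantee for the correction step: each $G_i(w_1) = \zeta_i X_i$ is only sub-\emph{exponential} (a product of two sub-Gaussians), so the textbook sub-Gaussian robust mean estimation analysis does not directly apply. I would prove the existence of a weight vector $s$ in the probability simplex supported on $(1 - O(\eta))n$ of the good indices such that (i) the reweighted mean $\sum_i s_i G_i(w_1)$ is within $O(\eta\sqrt{\log(1/\eta)})(\|w_1 - w^*\| + \sigma)$ of $G^*(w_1)$, and (ii) the reweighted centered second moment is spectrally bounded by $O((\|w_1 - w^*\| + \sigma)^2 \log(1/\eta))\,I$. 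Property~(i) follows from Bernstein-type concentration for sums of sub-exponential random vectors, and sets the $\tilde{O}(d/\eta^2)$ sample complexity; property~(ii) follows from truncating $\zeta_i$ at its $\log(1/\eta)$-quantile followed by a uniform covariance bound over a net of the sphere. Feeding these two properties into a near-linear-time robust mean estimation oracle as in \citep{cheng2019high, DBLP:conf/nips/DongH019} then produces $\hat{g}$; the total runtime remains $\tilde{O}(nd/\eta^6)$ because only one additional robust mean estimation call is made.
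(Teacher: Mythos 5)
Your overall strategy is the same as the paper's (\cref{alg:sgalg}): a heavy-tailed warm start obtained by checking that \cref{as:sgassump} implies \cref{as:htassump} after truncation, followed by a single robust-mean correction on the residual gradients. The genuine gap is in the deterministic conditions you impose on the correction step. Your property (ii) asks only for an \emph{upper} bound $O\bigl((\lVert w_1-w^*\rVert+\sigma)^2\log(1/\eta)\bigr)$ on the reweighted centered second moment, and that is too weak to deliver the $O(\eta\log(1/\eta)\,\sigma)$ rate from any robust mean estimation oracle: with only an upper bound $B$ on the second moment, the generic guarantee (this is exactly the Cauchy--Schwarz computation in \cref{lem:htgapxtemp}) is $O(\sqrt{\eta B})$, which here is $O(\sigma\sqrt{\eta\log(1/\eta)})$ --- essentially no better than the warm start. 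To reach accuracy $\delta=\eta\log(1/\eta)$, the stability condition required by the Cheng et al.\ / Dong et al.\ style analyses is \emph{two-sided}: every large sub-collection of good gradients must have second moment within $O(\delta^2/\eta)=O(\sigma^2\eta\log^2(1/\eta))$ of $\sigma^2 I$, which uses crucially that $\mathbb{E}[XX^\top]=I$. This is precisely the paper's \cref{as:sgd}, established in \cref{lem:sgdetapp}; in \cref{lem:sgfintemp2} the near-isotropy is what forces the corrupted points' second-moment contribution to be $O(\sigma^2\eta\log^2(1/\eta))$ (the total weighted second moment is bounded above because the SDP weights are near-optimal and the good weights are feasible, while the good points' contribution is bounded below by isotropy), and Cauchy--Schwarz then converts this into a mean shift of $O(\sqrt{\eta}\cdot\sigma\sqrt{\eta}\log(1/\eta))=O(\sigma\eta\log(1/\eta))$. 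Without the lower-bound half of this isotropy statement your argument stalls at the $\sqrt{\eta}$ scale.

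Two further points. First, you run the correction on the \emph{same} samples used for the warm start, so your deterministic conditions must hold for gradients recentered at the data-dependent $w_1$; at the fine scale $\eta\log(1/\eta)$ this would require a uniform-over-$w_1$ argument that you have not supplied. The paper sidesteps this by sample splitting: the warm start uses a batch of size $\tilde{O}(d/\eta)$ and the correction uses the remaining fresh samples, so after conditioning on $w^\dagger$ one only needs the conditions for a fixed recentering with $\lVert w^*\rVert\le \nu\sigma\sqrt{\eta}$ (\cref{as:sgiest}). Second, your claimed mean stability $O(\eta\sqrt{\log(1/\eta)})$ in (i) is the sub-Gaussian rate; since $\langle G_i,u\rangle=\zeta_i\langle X_i,u\rangle$ is only sub-exponential, the correct order is $O(\eta\log(1/\eta))$ --- harmless for the final claim, but it should be stated that way, and it is the rate the paper proves.
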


We note that in comparison to the heavy tailed scenario, we obtain an improved recovery guarantee of $O(\eta \log 1 / \eta \ \sigma)$ as opposed to the $O(\sqrt{\eta} \ \sigma)$ error obtained in \cref{thm:htmain}. The guarantee obtained in \cref{thm:sgmain} is information theoretically optimal up to a factor of $\sqrt{\log 1 / \eta}$ (\cref{thm:lb_sg}) and the sample complexity is optimal up to log factors \citep{diakonikolas2017being, gao2017robust}.

\paragraph{Remark:} It is possible to improve the runtime dependence on $\eta$ to a logarithmic dependence on $1 / \eta$ by appealing to refined solvers \citep{DBLP:conf/nips/DongH019} for the class of semidefinite programs we use here. However, we do not pursue this avenue in this work.
\section{Algorithm}

In this section, we describe our algorithms which are used in \cref{thm:htmain,thm:sgmain}. In \cref{ssec:htalg}, we describe the algorithm for the heavy tailed scenario and build on this algorithm to construct an algorithm for the sub-Gaussian case in \cref{ssec:sgalg}. For both algorithms, an important computational primitive we make use of at several points is the following semidefinite program given a set of points $\mc{Z} = \{Z_i\}_{i = 1}^n$ and $\delta > 0$:
\begin{equation}
    \label{eq:gestsdp} \tag{MT}
    \min_{s \in \Delta_{\delta}} \lambda_{\text{max}} \lprp{\sum_{i = 1}^n s_i Z_i Z_i^\top}.
\end{equation}
In what follows, we use \ref{eq:gestsdp}$(\mc{Z}, \delta)$ to denote the program \ref{eq:gestsdp} instantiated with inputs $\mc{Z}$ and $\delta$. Semidefinite programs of the form featured in the above display were used to build near linear time algorithms for robust mean estimation~\citep{cheng2019high} by reducing the problem to a packing/covering SDP and subsequently exploiting fast positive semidefinite programming solvers of \citet{peng2012faster}. The dual solution to the above program is then used to refine an estimate of the mean. In contrast, our approach only utilizes primal solutions to the above SDP allowing us to simplify the reduction to packing/covering SDP (See \cref{sec:fstsolv}) resulting in faster solvers for a set of vectors $\mc{Z}$ of any scale as opposed to \citep{cheng2019high} where $\norm{Z_i} \leq O(\sqrt{d})$. 

\subsection{Algorithm for Heavy Tailed Robust Linear Regression}
\label{ssec:htalg}

In the heavy tailed setting, our approach builds on the iterative robust gradient estimation framework from \citep{prasad2018robust,diakonikolas2019sever} where in each step, we construct a robust estimate of the gradient of the function $f(w) = \mb{E} [(Y - \inp{w}{X})^2]$. In each iteration, $t$, we compute the gradient at each of the sample points and then use a solution to \ref{eq:gestsdp} to obtain a reliable estimate of the gradient of $f$. We then use this gradient to improve our estimate until convergence. 

Note that in the non-adversarial scenario, the average of the sample gradients, $G_i(w_t) = (\inp{X_i}{w_t} - Y_i)X_i$ provides a good estimate of the population gradient, defined to be $G^*(w) = \Sigma (w - w^*)$. However, in the robust setting this is no longer true. As a remedy to such issues, \ref{eq:gestsdp}, employs a set of weights which allows one to exclude samples which adversely affect the value of the estimated gradient. The main observation is that to effect a mild change in the mean of the sample gradients, an outlier must have an outsized effect on the sample second moment leading to lower weight being assigned to the outlier. Our algorithm is formally described in \cref{alg:htlinreg} and our gradient estimation procedure is described in \cref{alg:htgest} where we simply use the primal weights from a solution to \ref{eq:gestsdp} to obtain our gradient estimate.
\begin{algorithm}[H]
\caption{Heavy Tailed Linear Regression}
\label{alg:htlinreg}
\begin{algorithmic}[1]
\State \textbf{Input: } Set of sample points $\bm{Z} =  \{(X_i, Y_i)\}_{i = 1}^n$, Outlier Fraction $\eta$, Length of Parameter Vector $\norm{w^*}$
\State $T \gets \nit$
\State $w_0 \gets 0$
\For {$t = 0:T$}
    \State $g_t \gets \gest (\bm{Z}, \eta, w_t)$
    \State $w_{t + 1} \gets w_t - g_t$
\EndFor
\State \textbf{Return: } $w_{T + 1}$
\end{algorithmic}
\end{algorithm}
\begin{algorithm}[H]
\caption{Gradient Estimation}
\label{alg:htgest}
\begin{algorithmic}[1]
\State \textbf{Input: } Set of sample points $\bm{Z} =  \{(X_i, Y_i)\}_{i = 1}^n$, Outlier Fraction $\eta$, Current Estimate $w$
\State $\bm{G} = \{G_i (w) = (\inp{X_i}{w} - Y_i)X_i\}$
\State $s \gets (1 + 5\eta)$-approximate solution to \ref{eq:gestsdp}$(\bm{G}, 10\eta)$ 
\State $g \gets \mb{E}_s [G_i(w)]$
\State \textbf{Return: } $g$
\end{algorithmic}
\end{algorithm}

\subsection{Algorithm for Sub-Gaussian Robust Linear Regression}
\label{ssec:sgalg}

In this subsection, we present our algorithm for robust linear regression in the setting where the covariates are assumed to be sub-Gaussian with identity covariance (see \cref{as:sgassump}).
The estimate $w^\dagger$ approximates $w^*$ up to an error of $O(\sqrt{\eta} \sigma)$. Interestingly, we further establish that refining the estimate to $O(\eta \log 1 / \eta)$ simply requires solving \ref{eq:gestsdp} one more time and using the average of sample gradients as a correction factor thus obtaining our refined rates. The procedure is detailed in \cref{alg:sgalg}. Notably, this procedure can also be used to simplify algorithms for robust mean estimation under sub-Gaussian assumptions.

\begin{algorithm}[H]
\caption{Sub-gaussian Robust Linear Regression}
\label{alg:sgalg}
\begin{algorithmic}[1]
\State \textbf{Input: } Set of sample points $\bm{Z} =  \{(X_i, Y_i)\}_{i = 1}^n$, Outlier Fraction $\eta$, Length of Parameter Vector $\norm{w^*}$
\State $n_1 \gets \tO(d / \eta)$
\State $\bm{Z}_1 \gets \text{Random $n_1$ samples from $\bm{Z}$}$
\State $ \bm{\tilde Z}_1 \gets \{(X_i, Y_i) \in \bm{Z}_1: \norm{X_i} \leq O(\sqrt{d})\}$
\State $w^\dagger \gets \text{Heavy Tailed Linear Regression}(\bm{\tilde Z}_1, \eta, \norm{w^*})$
\State $\bm{Z}_2 \gets \{(X_i, (Y_i - \inp{X_i}{w^\dagger})) : (X_i, Y_i) \in \bm{Z} \setminus \bm{Z}_1\}$
\State $\bm{G} = \{G_i = Y_iX_i : (X_i, Y_i) \in \bm{Z}_2\}$
\State $s \gets (1 + 3\eta)$-approximate solution to \ref{eq:gestsdp}$(\bm{G}, 6\eta)$
\State $\hat{w} \gets w^\dagger + \mb{E}_s [G_i]$
\State \textbf{Return: } $\hat{w}$
\end{algorithmic}
\end{algorithm}

\section{Analysis under Deterministic Assumptions}

In this section we provide the analysis of our method under deterministic assumptions on the distribution of the good samples. 

\subsection{Analysis for Heavy Tailed Robust Linear regression}

We prove here the success of our approach in the heavy tailed case. The main steps involved in the proof are the following:
\begin{myenumerate}
    \item \textbf{Gradient Estimation:} We show that the vector $G$ output by \cref{alg:htgest} is a good estimate of the true gradient $G^*(w)$. See \cref{lem:htgapxtemp}.
    \item \textbf{Gradient Descent:} We prove that the gradient descent algorithm eventually converges to a good approximation of the regression vector $w^*$. See \cref{lem:gd}.
\end{myenumerate}
To prove the correctness of these two steps, we require the following condition to hold:
\begin{assumption}
\label{as:htd} 
For every $w \in \R^d$, there exists $S \subset [n]$ such that $\abs{S} \geq (1 - 5\eta)n$ satisfying:
    \begin{equation*}
        \norm{\mb{E}_S[G_i(w)] - G^*(w)} \leq O(\sqrt{\eta} (\norm{w - w^*} + \sigma)) \text{ and } \norm{\mb{E}_S[ G_i(w) G_i(w)^\top]} \leq O(\norm{w - w^*}^2 + \sigma^2)
    \end{equation*}
\end{assumption}
Deriving the concentration tools to prove theses conditions is the object of \cref{sec:htconc} where it is shown they uniformly hold, over all $w \in \mb{R}^d$, with probability at least $0.95$ (see \cref{lem:htdet}).


\subsubsection{Gradient estimation}
First we prove the correctness of the \textbf{Gradient Estimation} step of \cref{alg:htgest}. 
\begin{lemma}
    \label{lem:htgapxtemp}
Under \cref{as:htd}, there is a universal constant $\cge>0$ such that for any $w\in\bbR^d$, any $(1 + 5\eta)$-approximate solution, $s$, to \ref{eq:gestsdp}$(\bm{G}, 10\eta)$ for $\bm{G} = \{G_i(w) = (\inp{X_i}{w} - Y_i)X_i\}_{i = 1}^n$, satisfies:
    \begin{equation*}
        \norm{\mb{E}_s [G_i(w)] - G^*(w)} \leq \cge \sqrt{\eta}  \lprp{\norm{w - w^*} + \sigma}.
    \end{equation*}
\end{lemma}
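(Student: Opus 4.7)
The plan is to exploit the feasibility of the uniform weighting on the good set $S$ guaranteed by Assumption \ref{as:htd}, compare it to the returned approximate optimizer $s$, and then control the remaining discrepancy via Cauchy--Schwarz with the spectral bound coming from the SDP objective.

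First, I would introduce the candidate weight vector $s^S$ defined by $s^S_i = 1/\abs{S}$ for $i\in S$ and $0$ otherwise, where $S$ is the good set from \cref{as:htd}. Since $\abs{S}\geq(1-5\eta)n\geq(1-10\eta)n$, each coordinate satisfies $s^S_i \leq 1/((1-10\eta)n)$, so $s^S\in\Delta_{10\eta}$ and is therefore feasible for \ref{eq:gestsdp}$(\bm{G},10\eta)$. Its objective value equals $\lmax(\mb{E}_S[G_i(w)G_i(w)^\top])$, which is $O(\norm{w-w^*}^2+\sigma^2)$ by \cref{as:htd}. Because $s$ is a $(1+5\eta)$-approximate solution, the corresponding second moment satisfies
\[
\lmax\Big(\sum_i s_i G_i(w)G_i(w)^\top\Big) \leq (1+5\eta)\cdot O(\norm{w-w^*}^2+\sigma^2) = O(\norm{w-w^*}^2+\sigma^2).
\]

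Next I would carry out a standard overlap argument. Define $a_i = \min(s_i,s^S_i)$, and write $s = a+b$, $s^S = a+c$ where $b,c\geq 0$ and $\sum_i b_i = \sum_i c_i = \alpha := 1-\sum_i a_i$. For indices $i\notin S$ the mass on $s$ is at most $\abs{[n]\setminus S}/((1-10\eta)n)=O(\eta)$, while for $i\in S$ the pointwise excess $(s_i-1/\abs{S})_+$ is bounded by $1/((1-10\eta)n)-1/\abs{S}$, summing to another $O(\eta)$. Together these give $\alpha \leq O(\eta)$.

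For any unit vector $v$, split
\[
v^\top\big(\mb{E}_s[G_i(w)]-\mb{E}_S[G_i(w)]\big) = \sum_i b_i\, v^\top G_i(w) - \sum_i c_i\, v^\top G_i(w),
\]
and apply Cauchy--Schwarz to each term. The first is bounded by $\sqrt{\alpha}\cdot\sqrt{\sum_i s_i (v^\top G_i(w))^2}$, and the inner sum is at most the SDP objective at $s$, hence $O(\norm{w-w^*}^2+\sigma^2)$. The second is bounded similarly using $c_i\leq s^S_i$ and the good-set second-moment bound in \cref{as:htd}. Both terms yield $O(\sqrt{\eta}(\norm{w-w^*}+\sigma))$. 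Taking a supremum over $v$ gives $\norm{\mb{E}_s[G_i(w)]-\mb{E}_S[G_i(w)]} \leq O(\sqrt{\eta}(\norm{w-w^*}+\sigma))$, and combining with the first-moment bound $\norm{\mb{E}_S[G_i(w)]-G^*(w)}\leq O(\sqrt{\eta}(\norm{w-w^*}+\sigma))$ from \cref{as:htd} via the triangle inequality yields the conclusion.

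The only moderately delicate step is the overlap estimate $\alpha\leq O(\eta)$: it is tempting but wrong to argue that $\norm{s-s^S}_1$ itself is $O(\eta)$ (since individual disagreements inside $S$ can be $\Theta(1/n)$ summed over $\Theta(n)$ indices). The right quantity is the mass that actually has to leave $S$, which is controlled by the upper bound $1/((1-10\eta)n)$ built into the feasible set $\Delta_{10\eta}$. Once $\alpha\leq O(\eta)$ is in hand, the rest is a mechanical Cauchy--Schwarz plus triangle inequality calculation.
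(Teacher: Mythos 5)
Your proof is correct and follows essentially the same route as the paper's: compare the approximate SDP solution $s$ to the uniform weights on the good set $S$ (feasible for $\Delta_{10\eta}$), use approximate optimality against that feasible point to bound the second moment under $s$, control the overlapping mass (i.e.\ the TV distance, which the paper handles via its coupling argument and \cref{lem:weighttv}) by $O(\eta)$, and finish with Cauchy--Schwarz and the triangle inequality against the bounds in \cref{as:htd}. Your direct computation of the overlap $\alpha$ is just a more explicit version of the paper's TV-distance step, so there is no substantive difference.
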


\begin{proof}
This result follows from the simple idea that  two distributions with small TV distance and bounded second moments have close means.
Consider any $\tilde s\in\Delta_{5\eta}$.  Applying the Cauchy-Schwarz inequality for any direction $\Vert v \Vert = 1$ and a coupling argument between $s$ and $\tilde s$, we get:
\[
\langle \bbE_s[G_i]-\bbE_{\tilde s} [G_i],v\rangle \leq \sqrt{\bbE [\langle G_s-G_{\tilde s},v\rangle ^2] }\sqrt{\text{Dist}_{TV}(s,\tilde s)},
\]
where $G_s$ denotes the random vector $G_s=G_i$ with probability $s_i$ and $G_s, G_{\tilde s}$ are coupled. We first note that 
\[
\bbE [\langle G_s-G_{\tilde s},v\rangle ^2]\leq 2\bbE [\langle G_s,v\rangle ^2] +2\bbE [\langle G_{\tilde s},v\rangle ^2] 
 \leq 2(1+5\eta)\OPT^*_{5\eta}+2\bbE [\langle G_{\tilde s},v\rangle ^2]\leq 14 \bbE[ \langle G_{\tilde s},v\rangle ^2], 
\]
since $s$ approximately solves the SDP and $\tilde s$ is feasible for this SDP. A direct application of \cref{lem:weighttv} yields $\text{Dist}_{TV} (s, \tilde s) \leq 15 \eta$. Therefore, multiplying the two bounds and maximizing over all directions $\Vert v \Vert=1$ yields 
\[\norm{\bbE_s [G_i]-\bbE_{\tilde s} [G_i]} =O\lprp{ \sqrt{\eta\lmax\lprp{\bbE_{\tilde s}[G_iG_i^\top]}}}.
 \]
Finally using the triangle inequality, we obtain 
 \[
 \norm{ \bbE_s [G_i]-G^*} =O\lprp{\sqrt{\eta \lmax\lprp{\bbE_{\tilde s}[G_iG_i^\top]}}} +  \norm{ \bbE_{\tilde s} [G_i] - G^*}.
 \]
We conclude by applying this result to the weight vector $\tilde s$ defined as $\tilde s =\frac{\bm{1}_{S}}{\abs{S}}$ where $S$ is the set given in \cref{as:htd}.
\end{proof}

\subsubsection{Gradient Descent}
Here we show that the gradient descent (i.e. \cref{alg:htlinreg}) can produce an accurate estimate of the underlying parameter vector, $w^*$, when used with the gradient estimates of \cref{alg:htgest}.
The proof amounts to analyzing gradient descent dynamics with deterministic noise in the gradients. 
\begin{lemma}
\label{lem:gd}
Suppose that \cref{as:htassump,as:htd} hold.
Assuming that $1/\cn> 2\cge\sqrt{\eta}$, \cref{alg:htlinreg}  outputs an estimate $w_{T}$ satisfying,
    \begin{equation*}
        \norm{w_{T}-w^*} \leq O(\cn \sigma \sqrt{\eta}),
    \end{equation*}
    in $T= O\big(\cn \log\big(\frac{\norm{w^*-w_0}}{\cge  \sqrt{\eta} \sigma \cn}\big)\big)$ iterations, where $\cge$ is the same constant as in \cref{lem:htgapxtemp}.
\end{lemma}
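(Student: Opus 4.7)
The plan is to carry out a standard contraction analysis of noisy gradient descent in which the noise in each step comes from Lemma~\ref{lem:htgapxtemp}. Since the loss $f(w)=\mathbb{E}[(Y-\langle X,w\rangle)^2]$ has Hessian $2\Sigma$ and the update used in \cref{alg:htlinreg} is $w_{t+1}=w_t-g_t$, I would begin by writing the error recursion
\begin{equation*}
w_{t+1}-w^* \;=\; (w_t-w^*)-G^*(w_t)-\bigl(g_t-G^*(w_t)\bigr)\;=\;(I-\Sigma)(w_t-w^*)-\bigl(g_t-G^*(w_t)\bigr),
\end{equation*}
using that $G^*(w)=\Sigma(w-w^*)$ as recalled in the notation paragraph.

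The next step is to control each term. Because \cref{as:htassump} gives $\|\Sigma\|=1$ and condition number bounded by $\kappa$, the eigenvalues of $\Sigma$ lie in $[1/\kappa,1]$ and therefore $\|I-\Sigma\|\le 1-1/\kappa$. Lemma~\ref{lem:htgapxtemp} controls the second term by $\cge\sqrt{\eta}(\|w_t-w^*\|+\sigma)$. Combining the two bounds with the triangle inequality yields
\begin{equation*}
\|w_{t+1}-w^*\| \;\le\; \Bigl(1-\tfrac{1}{\kappa}+\cge\sqrt{\eta}\Bigr)\|w_t-w^*\|+\cge\sqrt{\eta}\,\sigma.
\end{equation*}
The hypothesis $1/\kappa>2\cge\sqrt{\eta}$ then lets me collapse the contraction factor to $1-1/(2\kappa)$, giving
\begin{equation*}
\|w_{t+1}-w^*\|\;\le\;\bigl(1-\tfrac{1}{2\kappa}\bigr)\|w_t-w^*\|+\cge\sqrt{\eta}\,\sigma.
\end{equation*}

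Finally, I would unroll this one-step contraction. A linear recursion $a_{t+1}\le(1-\alpha)a_t+\beta$ gives $a_T\le(1-\alpha)^T a_0+\beta/\alpha$, so with $\alpha=1/(2\kappa)$ and $\beta=\cge\sqrt{\eta}\,\sigma$ we obtain
\begin{equation*}
\|w_T-w^*\|\;\le\;\Bigl(1-\tfrac{1}{2\kappa}\Bigr)^T\|w_0-w^*\|+2\kappa\cge\sqrt{\eta}\,\sigma.
\end{equation*}
Choosing $T=O\bigl(\kappa\log(\|w^*-w_0\|/(\cge\sqrt{\eta}\sigma\kappa))\bigr)$ drives the first (geometrically decaying) term below the second, yielding the claimed final error $O(\kappa\sigma\sqrt{\eta})$ in the stated number of iterations.

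There is no real obstacle here beyond bookkeeping: everything reduces to a deterministic one-dimensional linear recursion once Lemma~\ref{lem:htgapxtemp} is in hand. The only subtlety worth double-checking is that the gradient-error bound depends on the \emph{current} iterate $\|w_t-w^*\|$ rather than the initial one, which is exactly why the condition $1/\kappa>2\cge\sqrt{\eta}$ is needed: it ensures the iterate-dependent part of the noise can be absorbed into the contraction constant without destroying it.
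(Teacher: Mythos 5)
Your argument is correct and follows essentially the same route as the paper: the identical error recursion $w_{t+1}-w^*=(I-\Sigma)(w_t-w^*)-e_t$ with $\|e_t\|\le\cge\sqrt{\eta}(\|w_t-w^*\|+\sigma)$ from \cref{lem:htgapxtemp}, the same absorption of the noise into the contraction factor via $1/\kappa>2\cge\sqrt{\eta}$, and the same geometric-series unrolling and choice of $T$. No gaps; only the constant in front of $\kappa\cge\sqrt{\eta}\,\sigma$ differs trivially from the paper's.
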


\begin{proof}
    Noting $G^*(w) = \Sigma(w - w^*)$, we consider the evolution of the sequence $w_{t+1}-w^*$:
    \begin{equation*}
        w_{t}-w^* = w_{t-1}-w^*- \bbE_{s}[G_{i}(w_{t-1})] = (I- \Sigma)(w_{t-1}-w^*) -  e_{t-1},
    \end{equation*}
    where the error sequence $e_t = \mb{E}_s [G_i(w_{t})] - G^*(w_{t})$ uniformly satisfies $\norm{e_t} \leq \sqrt{\eta} \cge \cdot \lprp{\norm{w_t - w^*} + \sigma}$ from \cref{lem:htgapxtemp}. Taking norms of the equation and applying the triangle inequality shows that, 
\begin{equation*}
    \norm{w_t-w^*} \leq (\norm{I- \Sigma} +  \sqrt{\eta} \cge) \norm{w_{t-1}-w^*} +  \sqrt{\eta} \cge \sigma.
\end{equation*}
Unrolling the recursive inequality shows that, 
    \begin{align*}
       \norm{w_{t}-w^*} \leq (\norm{I- \Sigma} +  \sqrt{\eta} \cge)^t \norm{(w_0-w^*)}  +  \sqrt{\eta} \cge \sigma \sum_{i=0}^{t-1} (\norm{I- \Sigma} +  \sqrt{\eta} \cge)^i. 
    \end{align*}
Recall that we normalize $\norm{\Sigma}=1$ and hence, $\norm{I- \Sigma} +  \sqrt{\eta} \cge \leq 1-(\frac{1}{\kappa} - \cge\sqrt{\eta}) \leq 1-\frac{1}{2\kappa}\leq e^{-1/2\cn}< 1$ since we assume that $1/\cn> 2\cge\sqrt{\eta}$.
Using this choice of step-size and summing the geometric series (to $\infty$) implies that,
\begin{equation*}
    \norm{w_t-w^*} \leq e^{-t/(2 \kappa)} \norm{w_0-w^*} +  4\cge \sqrt{\eta} \sigma \kappa.
\end{equation*}
Hence choosing $t = O\big(\cn \log\big(\frac{\norm{w^*-w_0}}{\cge \sqrt{\eta} \sigma \cn}\big) \big)$ ensures that the upper bound is $O(\cn \sigma  \sqrt{\eta} )$. 
\end{proof}

\subsection{Analysis for Sub-Gaussian Robust Linear Regression}

From \cref{lem:gd}, we can assume that we have an initial estimate $w^\dagger$ such that $\norm{w^\dagger - w^*} \leq O(\sqrt{\eta})$. Therefore, by subtracting out $\inp{X_i}{w^\dagger}$ from all the data points, our problem reduces to a setting where we additionally have $\norm{w^*} \leq O(\sqrt{\eta})$. We make this formal in the following assumption:
\begin{assumption}
    \label{as:sgiest}
    We assume that there exists a constant $\nu$, such that $\norm{w^*} \leq \nu \sigma \sqrt{\eta}$.
\end{assumption}
To avoid dealing with the randomness of clean samples, we require in addition the following condition:
\begin{assumption}
\label{as:sgd} 
There exists a set $S \subset [n]$ with $\abs{S} \geq (1 - 3\eta)n$ such that for any $T \subset S$ with $\abs{T} \geq (1 - 10\eta)n$:
    \begin{equation*}
        \norm{\mb{E}_T [Y_i X_i] - w^*} \leq O(1) \cdot \sigma \eta \log{1 / \eta} \text{ and } \norm{\mb{E}_T [Y_i^2 X_i X_i^\top] - \sigma^2 \cdot I} \leq O(1) \cdot \sigma^2 \eta \log^2 1 / \eta.
    \end{equation*}
\end{assumption}
These conditions are shown to hold in~\cref{ap:sgconc}. We present now the following lemma which states that solving $\eqref{eq:gestsdp}$ one more time is enough to refine the estimation rate to $O(\eta \log(1/\eta))$.
\begin{lemma}
    \label{lem:sgfintemp2}
    Suppose \cref{as:sgd,as:sgiest} hold for $\bm{Z} = \{(X_i, Y_i)\}_{i = 1}^n$. Let ${\bm{G} = \{G_i = Y_i X_i : (X_i, Y_i) \in \bm{Z}\}}$. Then, a $(1 + 3\eta)$-approximate solution to \ref{eq:gestsdp}$(\bm{G}, 6\eta)$, $s$, satisfies:
  \[
  \norm{\mb{E}_s [G_i] - w^*} \leq O(1) \cdot \sigma \eta \log 1 / \eta.\]
\end{lemma}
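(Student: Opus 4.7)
The plan is to follow the template of the proof of \cref{lem:htgapxtemp} (comparison of the SDP solution $s$ to a feasible weight vector supported on the clean set $S$), but with a substantially finer two-sided moment analysis. I will take $\tilde{s} = \mathbf{1}_S/|S|$, which is feasible for \ref{eq:gestsdp}$(\bm{G}, 6\eta)$ since $|S| \geq (1-3\eta)n$ gives $\tilde{s}_i = 1/|S| \leq 1/((1-6\eta)n)$. Combining the $(1+3\eta)$-approximate optimality of $s$ with \cref{as:sgd} applied to $T = S$ then yields
\[
\lambda_{\max}\bigl(\mathbb{E}_s[G_i G_i^\top]\bigr) \leq (1+3\eta)\,\sigma^2\bigl(1 + O(\eta \log^2 1/\eta)\bigr) = \sigma^2\bigl(1 + O(\eta \log^2 1/\eta)\bigr).
\]

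Next, I split $\mathbb{E}_s[G_i] = \sum_{i \in S} s_i G_i + \sum_{i \in S^c} s_i G_i$ and write $p = \sum_{i \in S} s_i$, $q = 1 - p$. From $s_i \leq 1/((1-6\eta)n)$ and $|S^c| \leq 3\eta n$, I get $q \leq 3\eta/(1-6\eta) = O(\eta)$, so $p \geq 1 - O(\eta)$. On the clean part, the normalized weights $s_i/p$ for $i \in S$ satisfy $s_i/p \leq 1/(p(1-6\eta)n) \leq 1/((1-10\eta)n)$, so by the Hardy--Littlewood--Polya majorization theorem, $s|_S/p$ is a convex combination of uniform distributions on subsets $T \subseteq S$ of size $\lceil p(1-6\eta)n\rceil \geq (1-10\eta)n$. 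Applying \cref{as:sgd} to each such $T$ and averaging yields the two-sided estimates
\[
\bigl\|\mathbb{E}_{s|_S/p}[G_i] - w^*\bigr\| = O(\sigma \eta \log 1/\eta), \quad \bigl\|\mathbb{E}_{s|_S/p}[G_i G_i^\top] - \sigma^2 I\bigr\| = O(\sigma^2 \eta \log^2 1/\eta).
\]

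The main obstacle, and the only step that genuinely goes beyond \cref{lem:htgapxtemp}, is bounding the outlier contribution $\sum_{i \in S^c} s_i G_i$: a direct Cauchy--Schwarz using only $q = O(\eta)$ and the spectral bound $\sigma^2$ yields the too-weak estimate $O(\sigma\sqrt{\eta})$. The key idea is to squeeze the outlier second moment using two-sided moment control. For any unit $v$,
\[
v^\top \!\sum_{i \in S^c} s_i G_i G_i^\top v = v^\top \mathbb{E}_s[G_i G_i^\top] v - p\, v^\top \mathbb{E}_{s|_S/p}[G_i G_i^\top] v \leq \sigma^2\bigl(q + O(\eta \log^2 1/\eta)\bigr) = O\bigl(\sigma^2 \eta \log^2 1/\eta\bigr),
\]
where the upper bound comes from step~1 and the lower bound $\mathbb{E}_{s|_S/p}[G_i G_i^\top] \succeq \sigma^2(1 - O(\eta \log^2 1/\eta)) I$ from step~2. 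A Cauchy--Schwarz in direction $v$ then gives $|\langle \sum_{i \in S^c} s_i G_i, v\rangle|^2 \leq q \cdot O(\sigma^2 \eta \log^2 1/\eta) = O(\sigma^2 \eta^2 \log^2 1/\eta)$, hence $\|\sum_{i \in S^c} s_i G_i\| = O(\sigma \eta \log 1/\eta)$. Combining this with the clean-part bound $\|p\,\mathbb{E}_{s|_S/p}[G_i] - p w^*\| = O(\sigma \eta \log 1/\eta)$ and the negligible residual $\|q w^*\| = O(\sigma \eta^{3/2})$ (using \cref{as:sgiest}), the triangle inequality closes out $\|\mathbb{E}_s[G_i] - w^*\| = O(\sigma \eta \log 1/\eta)$.
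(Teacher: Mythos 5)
Your proof is correct and is essentially the paper's own argument: you upper-bound $\lambda_{\max}(\mb{E}_s[G_iG_i^\top])$ via approximate optimality against a clean feasible weight, control the clean part by decomposing the capped weights into uniform distributions on subsets of $S$ of size at least $(1-10\eta)n$ (your majorization step is the same content as \cref{lem:extpts}) and applying \cref{as:sgd}, and bound the outlier part by subtracting the lower-bounded clean second moment from the upper-bounded total before applying Cauchy--Schwarz, with \cref{as:sgiest} absorbing the $w^*$ residual. The only differences from the paper's proof are cosmetic (normalizing $s|_S$ versus decomposing the full $s$ into $\mc{E}_{6\eta}$ and intersecting supports with $S$), so no further comparison is needed.
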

\begin{proof}
 First, using \cref{as:sgd} we easily see that
    \[
        \lambda_{\max}\lprp{\mb{E}_s [G_i G_i^\top]} \leq \sigma^2 + O(1) \cdot \sigma^2 \eta \log^2 1 / \eta.
    \]
 Then we decompose $s$ as $s = \sum_{\alpha \in \mc{E}_{6\eta}} \gamma_\alpha \alpha$ by \cref{lem:extpts} with $\sum_{\alpha \in \mc{E}_{6\eta}} \gamma_\alpha = 1$. In addition, we define $\alpha_T = \sum_{i \in T} \alpha_i$. Using the triangle inequality we have:
    \begin{equation*}
        \bignorm{\mb{E}_s[G_i] - w^*} \leq \bignorm{\sum_{i \in S} s_i (G_i - w^*)} + \bignorm{\sum_{i \notin S} s_i (G_i - w^*)}.
    \end{equation*}
    We bound the first term as follows:
    \begin{align*}
        \bignorm{\sum_{i \in S} s_i (G_i - w^*)}\! \leq\! \!\sum_{\alpha \in \mc{E}_{6\eta}}\!\!\gamma_\alpha \bignorm{\sum_{i \in S}\!\! \alpha_i (G_i - w^*)} = \!\!\sum_{\alpha \in \mc{E}_{6\eta}} \!\!\gamma_\alpha \cdot \frac{1}{\alpha_S} \norm{\mb{E}_{\text{Supp}(\alpha) \cap S} (G_i - w^*)} \leq O(1)\cdot \sigma \eta \log 1 / \eta
    \end{align*}
    where the last inequality follows because we have $\text{Supp}(\alpha) \cap [n] \geq (1 - 10\eta)n$ and from \cref{as:sgd}. For the second term, we have for all $\norm{u} = 1$:
    \begin{align*}
        \mb{E}_s [\inp{u}{(G_i - w^*)} \bm{1} \lbrb{i \notin S}] &\leq (\mb{E}_s [\inp{u}{G_i - w^*}^2 \bm{1} \lbrb{i \notin S}])^{1/2} (\mb{E}_s [\bm{1} \lbrb{i \notin S}])^{1/2} \\
        &\leq 2 \sqrt{\eta} (\mb{E}_s [\inp{u}{G_i - w^*}^2 \bm{1} \lbrb{i \notin S}])^{1/2}\\
        &\leq 4\sqrt{\eta} (\mb{E}_s [\inp{u}{G_i }^2 \bm{1} \lbrb{i \notin S}])^{1/2} + 4\sqrt{\eta} (\mb{E}_s [\inp{u}{w^*}^2 \bm{1} \lbrb{i \notin S}])^{1/2}.
    \end{align*}
    We finally bound the last terms in the above display as follows:
           \begin{align*}
        \mb{E}_s [\inp{u}{G_i }^2 \bm{1} \lbrb{i \notin S}] &= \mb{E}_s [\inp{u}{G_i }^2] - \mb{E}_s [\inp{u}{G_i }^2 \bm{1} \lbrb{i \in S}] \\
        &\leq (1 + O(1) \eta \log^2 1 / \eta) \sigma^2 - \sum_{\alpha \in \mc{E}_{6\eta}} \gamma_\alpha \cdot \alpha_S \cdot \mb{E}_{\text{Supp} \cap S} [\inp{u}{G_i - w^*}^2 \bm{1} \lbrb{i \in S}] \\
        &\leq \sigma^2 \lprp{1 + O(1) \eta \log^2 1 / \eta -(1 - O(1) \cdot \eta \log^2 1 / \eta)} \leq O(1)\cdot \sigma^2 \eta \log^2 1 / \eta,
    \end{align*}
    where the last inequality follows from \cref{lem:extpts,as:sgd}. And 
    \[
        \mb{E}_s [\inp{u}{w^* }^2 \bm{1} \lbrb{i \notin S}] \leq\norm{w_*}^2   \mb{E}_s [\bm{1} \lbrb{i \notin S}]\leq 4\eta \norm{w_*}^2\leq 4\eta^{3/2},
    \]
    where the last inequality follows from  \cref{as:sgiest}.   Substituting the above bounds in the previous equations, we conclude the proof of the lemma.
\end{proof}
\vspace{-.5cm}
\section{Concentration}
\label{sec:htconc}

In this section, we sketch the main arguments establishing the deterministic conditions in \cref{as:htd} required for the success of \cref{alg:htlinreg}.  Since, the corresponding proofs for the sub-Gaussian setting are conceptually and technically simpler, we defer them to \cref{ap:sgconc}. Informally, under our assumptions, we are required to show for any estimate, $w$, the existence of a good set $S \subseteq [n]$ exhibiting strong concentration properties for $\mb{E}_S [G_i(w)]$ and $\mb{E}_S [G_i(w) G_i(w)^\top]$. Before we begin, we first note that the following decomposition is useful for the subsequent analysis:
\begin{align}
    \label{eq:expandmean}
    & \mb{E}_S [G_i(w)] = (\mb{E}_S [X_i X_i^\top]) \underbrace{(w - w^*)}_{u} - \mb{E}_S [\epsilon_i X_i]  \quad \text{ and} \\ 
    \label{eq:expandcov}
    & \mb{E}_S [G_i(w) G_i(w)^\top] = \mb{E}_S [\inp{X_i}{ \underbrace{(w-w^*)}_{u}}^2 X_i X_i^\top - 2 \epsilon_i \inp{X_i}{\underbrace{(w-w^*)}_{u}} X_i X_i^\top  + \epsilon_i^2 X_i X_i^\top]
\end{align} 

\vspace{-.2cm}
We now establish control on the first term in \cref{eq:expandmean} through the following lemma which shows that the empirical second moment $\mb{E}_S [X_i X_i^\top]$ is close to the true second moment matrix for all suitably large sets, $S$\footnote {The set $S$ is eventually chosen to exclude points corrupted by the adversary.}.
\begin{lemma}
    \label{lem:htmeanconc}
    Let $\{X_i\}_{i = 1}^n$ satisfy \cref{as:htassump}. Then, there exists a universal constant $c$ such that  if $n \geq c \log(\frac{4d}{\sqrt{\eta}}) \frac{d}{\eta}$, with probability at least $1 - 1 / d^2$ for any set of $(1 - 10\eta) n$ samples, $S$, we have:
    \begin{equation*}
        \Sigma - O(\sqrt{\eta}) \cdot I \preccurlyeq \mb{E}_S[ X_i X_i^\top] \preccurlyeq \Sigma + O(\sqrt{\eta}) \cdot I. 
    \end{equation*}
\end{lemma}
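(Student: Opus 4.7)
The plan is to split the required bound into two parts: concentration of the full empirical second moment $\hat{\Sigma} = \frac{1}{n}\sum_i X_iX_i^\top$ around $\Sigma$, and a resilience statement controlling the contribution of any subset of at most $10\eta n$ samples. Writing $T = [n]\setminus S$ so that $|T| \leq 10\eta n$, the identity
\[
\mb{E}_S[X_iX_i^\top] - \Sigma = \frac{1}{1-10\eta}(\hat{\Sigma} - \Sigma) - \frac{1}{(1-10\eta)n}\sum_{i \in T} X_iX_i^\top + \frac{10\eta}{1-10\eta}\Sigma
\]
reduces the lemma to bounding $\|\hat{\Sigma} - \Sigma\|$ and $\sup_{|T|\leq 10\eta n}\|\frac{1}{n}\sum_{i\in T} X_iX_i^\top\|$ each by $O(\sqrt{\eta})$, after absorbing the residual $O(\eta)\cdot\|\Sigma\|$ term.

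For the first piece, I will apply matrix Bernstein to the centered matrices $Y_i = (X_iX_i^\top - \Sigma)/n$. The boundedness $\|X_i\|\leq \cp\sqrt{d}$ gives $\|Y_i\|\leq O(d/n)$, while the $L4$-$L2$ hypercontractivity together with Cauchy--Schwarz controls the matrix variance: $\|\sum_i \mb{E}[Y_i^2]\| \leq \|\mb{E}[\|X\|^2 XX^\top]\|/n \leq O(d/n)$, using $\mb{E}[\|X\|^2\langle X,u\rangle^2]\leq \cp\sqrt{d}\sqrt{\tr \Sigma}\sqrt{\tau}$. Matrix Bernstein then produces a deviation of order $\sqrt{d\log d/n} + d\log d/n$, which is $O(\sqrt{\eta})$ with probability at least $1-1/d^2$ for $n \geq c\, d\log(d/\eta)/\eta$.

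For the second (resilience) piece, I pass direction by direction: the goal becomes, uniformly over $\|u\|=1$, to bound the average of the largest $10\eta n$ values of $\langle X_i, u\rangle^2$ by $O(\sqrt{\eta})$. My strategy is a threshold cut at $K = \Theta(1/\sqrt{\eta})$: the contribution from samples with $\langle X_i, u\rangle^2 \leq K$ is at most $10\eta\cdot K = O(\sqrt{\eta})$, while the contribution from samples above $K$ is handled by integration by parts against the empirical tail counts $N_t(u) := |\{i:\langle X_i,u\rangle^2 > t\}|/n$. Using the hypercontractive tail bound $\mb{P}(\langle X, u\rangle^2 > t) \leq \tau/t^2$, once $N_t(u)$ is controlled within a constant factor of its expectation up to an $O(\eta)$ statistical floor, the identity $\frac{1}{n}\sum_{i:Z_i>K} Z_i = K\cdot N_K + \int_K^\infty N_t\, dt$ integrates to $O(\sqrt{\eta})$. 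Uniformity over $u$ is obtained by an $\epsilon$-net on $S^{d-1}$ with $\epsilon = \Theta(\sqrt{\eta}/d)$, extended to the whole sphere via the Lipschitz estimate $|\langle X_i, u\rangle^2 - \langle X_i, u'\rangle^2| \leq 2\|X_i\|^2\|u-u'\| \leq 2\cp^2 d\|u-u'\|$. The net has size $\exp(\Theta(d\log(d/\eta)))$, producing exactly the $\log(4d/\sqrt{\eta})$ factor appearing in the stated sample complexity after union bounding.

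The main technical obstacle will be establishing the uniform empirical tail inequality with only $n = \tilde O(d/\eta)$ samples: I plan to combine multiplicative Chernoff bounds at each direction in the net and each dyadic level $t$, using that under $\log(1/\delta)=\Theta(d\log(d/\eta))$ the statistical floor $\log(1/\delta)/n$ is $O(\eta)$. The boundedness $\|X_i\|\leq \cp\sqrt{d}$ is essential, as it both truncates the range of $t$ that must be discretized and furnishes the Lipschitz constant needed for the net argument; without it one would require either stronger moment assumptions or a sample complexity quadratic in $d$, as in the naive Cauchy--Schwarz reduction to empirical fourth-moment concentration.
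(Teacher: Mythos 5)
Your first half (the two-sided control of the full empirical second moment via matrix Bernstein, using $\norm{X_i}\le \cp\sqrt{d}$ for the range and L4--L2 hypercontractivity for the matrix variance) is exactly the paper's upper-bound argument and is fine. The gap is in your resilience step. After the threshold cut at $K=\Theta(1/\sqrt{\eta})$, your layer-cake bound for the above-threshold mass uses the uniform empirical tail estimate $N_t(u)\lesssim \tau/t^2 + O(\eta)$, where the $O(\eta)$ floor is forced by union-bounding over a net of size $\exp(\Theta(d\log(d/\eta)))$ with only $n=\tilde O(d/\eta)$ samples. But the levels $t$ range up to $\cp^2 d$, so the floor term integrates to $\int_K^{\cp^2 d} O(\eta)\,dt = O(\eta d)$, not $O(\sqrt{\eta})$; your claim that the identity ``integrates to $O(\sqrt{\eta})$'' fails whenever $d\gg 1/\sqrt{\eta}$. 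This is not just a bookkeeping issue with how you state the floor: at a level $t\asymp d$ the per-direction exceedance probability is only $\tau/t^2\asymp \tau/d^2$, and to drive the Chernoff failure probability below $\exp(-\Theta(d\log(d/\eta)))$ you must allow a count of order $d$ (up to logs) at that level, which by itself contributes $\asymp d\cdot d/n \asymp \eta d/\log(d/\eta)$ to the subset sum. So a brute-force net-plus-counting argument cannot certify the $O(\sqrt{\eta})$ bound on the top-$10\eta n$ contribution at sample size $\tilde O(d/\eta)$, even though that bound is in fact true.

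The paper sidesteps this by never upper-bounding the large values' contribution directly. For the lower bound it truncates at the \emph{population} $(1-20\eta)$-quantile $q=O(1/\sqrt{\eta})$ of $\inp{X_i}{u}^2$: Cauchy--Schwarz with hypercontractivity shows truncation costs only $O(\sqrt{\eta})$ in expectation, and scalar Bernstein applied to the $q$-bounded truncated variables (range $O(1/\sqrt{\eta})$, variance $O(1)$) concentrates with failure probability $\exp(-\Omega(n\eta))$, which does beat the net. A separate Bernstein bound shows at least $10\eta n$ samples exceed $q$, so the truncated sum is dominated by the sum over the smallest $(1-10\eta)n$ values and hence by $\sum_{i\in S}\inp{X_i}{u}^2$ for \emph{every} admissible $S$; combined with the full-sample matrix Bernstein bound this yields the two-sided statement (and, only implicitly, by subtraction, the resilience bound you were trying to prove head-on). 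If you want to keep your decomposition, you should replace the dyadic-tail-count argument for the removed set $T$ by this quantile-truncation argument for the retained set $S$; as written, the above-threshold step does not go through.
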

\begin{itemize}
\item Showing the upper bound is straightforward -- we simply note that $\sum_{i \in S} X_i X_i^\top \preceq \sum_{i=1}^n X_i X_i$ and that $\frac{\abs{S}}{n} = 1+O(\eta)$ -- hence we can simply apply the matrix Bernstein inequality to the latter sum to show the result uniformly over all sets $S$. 

\item The proof of the lower bound proceeds by first noting that it suffices to show that for any $S$, $ \mb{E}_S [Z_i] \geq u^\top\Sigma u - O(\sqrt{\eta})$ where $Z_i = \inp{X_i}{u}^2$ for any $u$ in an $\delta$-net of resolution $O(\frac{\sqrt{\eta}}{d})$, since at this scale the error of replacing $u \in S^{d-1}$ by its closest approximant in the net is negligible. Second, using the Bernstein inequality, for a fixed direction $u$, we argue that with high probability at least $10 \eta$ points lie in the interval $(q, \infty)$, where $q$ is defined as $\mb{P} (Z_i \in (q, \infty)) = 20\eta$. We then establish a bound on the following random variable $Z = \mb{E}_{[n]} [Z_i \bm{1} \lbrb{Z_i \leq q}]$. From our hypercontractivity assumptions, we get $\mb{E} [Z_i \bm{1} \lbrb{Z_i \leq q}] \geq u^\top \Sigma u - O(\sqrt{\eta})$. We exploit our hypercontractivity assumptions again to establish that $q \leq O(1 /\sqrt{\eta})$. As a consequence, we may apply the Bernstein inequality to the random variable $Z$ to obtain $Z \geq u^\top \Sigma u - O(\sqrt{\eta})$ w.h.p. and since the elements, $Z_i \in (q, \infty)$ contribute maximally to $\mb{E}_{[n]}[Z_i]$, the previous expression establishes a lower bound on $\mb{E}_S [Z_i]$ as at least $10 \eta n$ of the $Z_i$ are in $(q, \infty)$. Union-bounding over these high-probability events and the $\delta$-net concludes the argument. 
\end{itemize}

To finish controlling \ref{eq:expandmean} only bounding the term $ \mb{E}_S[\epsilon_i X_i]$ remains. In the heavy-tailed setting, it suffices to use the boundedness of the covariance and exploit the randomness in $\epsilon_i$ to exhibit the result. This allows us to obtain a bound of $O(\sqrt{\eta} \sigma)$ for $\mb{E}_S [\epsilon_i X_i]$ for all suitable large $S$. Since these results follow from straightforward generalizations of concentration techniques for robust mean estimation, we defer these statements and their proofs to \cref{ap:htc} (See \cref{lem:hterrconc}).

We now state the main lemma to establish \cref{eq:expandcov}, which controls the leading term. Control of the $\epsilon_i^2 X_i X_i^\top$ term follow from similar techniques as \cref{lem:htmeanconc} and the $2 \epsilon_i \langle X_i, u \rangle X_i X_i^\top$ term follows from the Cauchy-Schwarz inequality.

\begin{lemma}
    \label{lem:htcovconcset}
    Let $\mc{D}$ satisfy \cref{as:htassump}. Then, for $n = \tO (d / \eta)$ samples from $\mc{D}$, we have with probability at least $1 - 1/d$ that for every $\norm{u} = 1$, there exists $S \subset [n]$ with $\abs{S} \geq (1 - \eta / 6)n$ such that:
    \begin{equation*}
       \mb{E}_S [\inp{X_i}{u}^2 X_i X_i^\top] \preccurlyeq \cff I,
    \end{equation*}
    and furthermore, for all $i \in S$, we have $\abs{\inp{X_i}{u}} \leq 40 / \eta$.
\end{lemma}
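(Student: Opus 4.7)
The plan is to take $S = S_u := \{i \in [n] : |\langle X_i, u\rangle| \leq 40/\eta\}$ and verify the size condition and the spectral norm bound separately, obtaining uniformity in $u$ from an $\varepsilon$-net argument over the unit sphere.

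For the size condition, the L4--L2 hypercontractivity from \cref{as:htassump} together with $\|\Sigma\|=1$ gives $\mathbb{E}[\langle X, u\rangle^4] \leq \tau$. Markov's inequality on $\langle X, u\rangle^4$ then yields $\mathbb{P}(|\langle X, u\rangle| > 40/\eta) \leq \tau \eta^4/40^4$, which is far smaller than $\eta/12$, so a Bernstein bound on the indicator sum $\sum_i \mathbf{1}\{i \notin S_u\}$ gives $|S_u| \geq (1-\eta/6)n$ with exponentially high probability. Uniform control in $u$ follows by union-bounding over an $\varepsilon$-net of the sphere at resolution $\varepsilon \approx \sqrt{\eta}/d$, using $|\langle X_i, u\rangle - \langle X_i, u'\rangle| \leq C_1\sqrt{d}\cdot\varepsilon$ together with slightly perturbed thresholds to absorb the discontinuity of the indicator in $u$.

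For the spectral bound it suffices to show $\sup_{\|v\|=1} \mathbb{E}_{S_u}[\langle X_i, u\rangle^2 \langle X_i, v\rangle^2] \leq C_5$. By Cauchy--Schwarz and hypercontractivity the population version $\mathbb{E}[\langle X, u\rangle^2 \langle X, v\rangle^2]$ is at most $\tau$. For concentration I would split the sum by additionally truncating at level $L = 40/\eta$ on $|\langle X_i, v\rangle|$: the doubly-truncated piece has summands bounded by $L^4$ and mean at most $\tau$, so scalar Bernstein controls it; the $v$-tail piece is bounded using $\langle X_i, u\rangle^2 \leq L^2$ on $S_u$, reducing the task to controlling $\frac{1}{n}\sum_i \langle X_i, v\rangle^2 \mathbf{1}\{|\langle X_i, v\rangle| > L\}$, which has expectation at most $\tau/L^2$ by the fourth moment bound and concentrates by Bernstein using $\|X_i\|^2 \leq C_1^2 d$, much as in the lower bound proof of \cref{lem:htmeanconc}. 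Union-bounding both pieces over an $\varepsilon$-net in $v$ (and the outer net in $u$) gives the required uniform bound.

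The main obstacle will be tightening the Bernstein constants to close at $n = \tilde{O}(d/\eta)$ rather than the $\tilde{O}(d/\eta^2)$ that follows from a naive matrix Bernstein argument, where a single summand $\langle X_i, u\rangle^2 X_i X_i^\top$ has spectral norm up to $L^2 \|X_i\|^2 = O(d/\eta^2)$. I expect this requires a finer dyadic decomposition of $S_u$ by the scale of $|\langle X_i, u\rangle|$: each dyadic range $|\langle X_i, u\rangle| \in [2^{k-1}, 2^k]$ contains at most a $\tau \cdot 2^{-4(k-1)}$ fraction of the points by hypercontractivity, so their layered contributions telescope to an $O(1)$ spectral bound from $\tilde O(d/\eta)$ samples. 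This is likely also where the generalization of the \citet{lecue2019robust} Gaussian rounding scheme alluded to in the introduction enters, amortizing concentration across all test directions $v$ simultaneously.
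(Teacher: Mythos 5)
There is a genuine gap, and it sits exactly where you flag it. Your plan takes $S$ to be the plain truncation set $S_u=\{i:\abs{\inp{X_i}{u}}\le 40/\eta\}$ and then tries to prove the spectral bound uniformly over $v$ by a net plus scalar Bernstein. With only L4--L2 moments this cannot close at $n=\tO(d/\eta)$: after truncating at $L$, the doubly-truncated summands are bounded by $L^4$ (at your level $L=40/\eta$ this is $1/\eta^4$, already fatal; the paper's proof truncates at $O(1/\sqrt\eta)$, not $1/\eta$), and the $v$-tail piece $\tfrac1n\sum_i\inp{X_i}{v}^2\bm{1}\{\abs{\inp{X_i}{v}}>L\}$ has summands as large as $\norm{X_i}^2=O(d)$ with no moments beyond the fourth to sharpen its tail, so union-bounding over an $e^{\Theta(d\log(d/\eta))}$-size net in $v$ forces $n=\Omega(d^2)$-type sample sizes --- precisely the $n\approx d^2$ bottleneck the introduction identifies. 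The dyadic-layer fix you sketch does not repair this: hypercontractivity bounds the \emph{number} of points in each layer $\abs{\inp{X_i}{u}}\in[2^{k-1},2^k]$, but the spectral contribution of each layer still requires uniform-in-$v$ control of $\sum_{i\in\text{layer}}\inp{X_i}{v}^2$, whose summands are again $O(d)$, so the same union-bound obstruction reappears scale by scale.

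The paper's proof is structurally different and, importantly, does \emph{not} prove the bound for the full truncation set. It first proves a rank-one statement (\cref{lem:htcovconcr1}) with truncation on \emph{both} $\inp{X_i}{u}$ and $\inp{X_i}{v}$ at level $O(1/\sqrt\eta)$, uniformly over a net in $(u,v)$; it then lifts rank-one $vv^\top$ to arbitrary trace-one psd $M$ by Gaussian rounding ($g\sim\mc{N}(0,M)$, \cref{lem:htcovconc}), with the second truncation now on $\inp{X_iX_i^\top}{M}$ and hence $M$-dependent; finally, von Neumann min--max duality converts ``for every $M$ there is a good $(1-\eta/25)$-fraction of points'' into a single weight vector $\alpha^*\in\Delta_{\eta/25}$ whose weighted second moment is spectrally bounded, and the set $S$ is extracted as $\{i\in T:\alpha^*_i\ge 1/(2n)\}$. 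The resulting $S$ is a strict, data-adaptively chosen subset of the truncation set: it discards up to an additional $O(\eta n)$ heavy points that would inflate the quadratic form in some realized direction. Your fixed choice $S=S_u$ forgoes exactly this freedom, and the duality-plus-rounding mechanism that provides it is the missing idea; gesturing that the \citet{lecue2019robust} rounding ``enters somewhere'' in the dyadic scheme is not a substitute for this step.
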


Establishing this result requires several steps,
\begin{itemize}
    \item First by exploiting convex duality we can see that:
    \begin{equation}
       \!\!  \min_{\alpha \in \Delta_{\!\frac{\eta}{25}}} \!  \max_{\substack{M \succcurlyeq 0,\\ \tr{M} = 1}}    \inp*{\sum_{i \in T} \alpha_i \inp{X_i}{u}^2 X_iX_i^\top}{M} \!=\!\max_{\substack{M \succcurlyeq 0,\\ \tr{M} = 1}} \!  \min_{\alpha \in \Delta_{\!\frac{\eta}{25}}} \!\inp*{\sum_{i \in T} \!\alpha_i \inp{X_i}{u}^2 X_iX_i^\top\!\!}{M\!}, \label{eq:pfsketchminmax}
    \end{equation} 
    Apriori the weights $\alpha_i$ may depend on the vector $u$ and can be chosen to discard terms from the sum for which $\langle X_i, u \rangle^2$ is large. Exchanging the $\min$ and $\max$ allows $\alpha$ to also depend on the matrix $M$ and admits the possibility they may be chosen to discard terms for which the matrix $M$ has strong overlap with $X_i$. The price of exchanging the $\min$ and $\max$, is that in the left-side of \cref{eq:pfsketchminmax} the optimal $M$ is always achieved at a rank-one matrix; however in order to appeal to convex duality we must lift the program to an SDP which in general results in a semi-definite matrix $M$ appearing on the right-hand side.
    \item As a stepping stone we consider the simpler case where $M = vv^\top$ is rank-$1$ and exhibit a set of weights $\alpha_i$ (which may depend $M$) which can avoid $X_i$ which are strongly correlated with $u$ and $v$. This is formally stated in:
\begin{lemma}
    \label{lem:htcovconcr1}
    Let $\mc{D}$ satisfy \cref{as:htassump}. Then, for $n = \tO (d / \eta)$ samples from $\mc{D}$, we have with probability at least $1 - 1/d$, that for all $\norm{u} \leq 1$ and $\norm{v} \leq 1$:
    \begin{gather*}
        \mb{E}_{[n]} [\sum_{i = 1}^n \inp{X_i}{u}^2 \inp{X_i}{v}^2 \bm{1} \lbrb{\abs{\inp{X_i}{u}} \leq 800 / \sqrt{\eta} \wedge \abs{\inp{X_i}{v}} \leq 800 / \sqrt{\eta}}] \leq \ccov \quad \text{ and } \\
        \mb{E}_{[n]} [\bm{1} \lbrb{\abs{\inp{X_i}{u}} \leq 40/\sqrt{\eta} \wedge \abs{\inp{X_i}{v}} \leq 40 / \sqrt{\eta}}] \geq (1 - \eta / 100),
    \end{gather*}
    for some absolute constant $\ccov$.
\end{lemma}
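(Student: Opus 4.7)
The plan is to prove both estimates at a fixed pair $(u,v) \in B_2^d \times B_2^d$ by combining the L4--L2 hypercontractivity of \cref{as:htassump} with scalar concentration, and then lift to a uniform statement over the unit ball via an $\epsilon$-net argument that exploits the deterministic bound $\norm{X_i} \leq \cp\sqrt{d}$.

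At the population level and for a fixed pair $(u,v)$, Markov's inequality combined with the fourth-moment control $\mb{E}[\inp{X}{u}^4] \leq \hc$ gives $\mb{P}(\abs{\inp{X}{u}} \geq 40/\sqrt{\eta}) \leq \hc\eta^2/40^4$, so the population frequency of indices violating either threshold $40/\sqrt{\eta}$ is $O(\eta^2)$, well below $\eta/200$. For the first estimate, Cauchy--Schwarz and hypercontractivity give $\mb{E}[\inp{X}{u}^2 \inp{X}{v}^2] \leq \sqrt{\mb{E}[\inp{X}{u}^4]\mb{E}[\inp{X}{v}^4]} \leq \hc$, bounding the expectation of the truncated summand by $\hc$. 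Transferring these to empirical averages, the counting statement is an average of indicators controlled by a multiplicative Chernoff bound, while the truncated product is a bounded summand (bound $M = 800^4/\eta^2$, variance at most $M\hc$) to which Bernstein's inequality applies, giving concentration at scale $O(1)$ with failure probability $\exp(-\Omega(n\eta^2))$.

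To obtain uniformity over the unit ball, build a $\delta$-net $\mc{N}$ at resolution $\delta \asymp \sqrt{\eta}/(\cp\sqrt{d})$ so that $\abs{\inp{X_i}{u - u_0}} \leq O(1/\sqrt{\eta})$ holds deterministically for the closest net point $u_0$; the log-cardinality of $\mc{N}$ is $O(d\log(d/\eta))$. Handle the discontinuity of the indicators by a sandwich argument: the event $\{\abs{\inp{X_i}{u}} \leq c/\sqrt{\eta}\}$ is implied and implies the corresponding event at $u_0$ with threshold shifted by $O(1/\sqrt{\eta})$, and by the same fourth-moment bound the expectations at the perturbed thresholds differ from the originals only by constant factors. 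On the truncation region $\inp{X_i}{u}^2 \inp{X_i}{v}^2$ is Lipschitz in $(u,v)$ with constant controlled by $\cp\sqrt{d}$ times the truncation level, so the per-sample discretization error is absorbed into the target bound after averaging. A union bound over $\mc{N} \times \mc{N}$ then yields the uniform guarantee.

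The main obstacle is matching the scales: the truncated summand has Bernstein variance $\Theta(1/\eta^2)$, so the per-point failure probability $\exp(-\Omega(n\eta^2))$ must dominate the net cardinality $\exp(O(d\log(d/\eta)))$. The polylog factors hidden in $n = \tO(d/\eta)$ must be chosen so this holds, and the sandwich thresholds tuned so the constant $\ccov$ remains absolute. The counting estimate is the easier of the two since its Bernoulli population mean is $O(\eta^2)$, yielding a more comfortable pointwise tail of $\exp(-\Omega(n\eta))$; the first estimate is where the tightness must be carefully tracked.
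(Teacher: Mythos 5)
Your overall architecture---pointwise moment bounds from L4--L2 hypercontractivity, Bernstein for the truncated product, a Chernoff bound for the count, and then a net argument using the deterministic bound $\norm{X_i}\le \cp\sqrt{d}$ with sandwiched thresholds---is the same as the paper's. The genuine gap is exactly the step you flag at the end, and it cannot be repaired by tuning polylog factors. With the symmetric truncation at $800/\sqrt{\eta}$ on both factors, the summand $\inp{X_i}{u}^2\inp{X_i}{v}^2\bm{1}\{\cdot\}$ has range and second moment of order $1/\eta^2$, and this is tight under fourth moments alone (e.g.\ an atom with $\inp{X}{u}=\inp{X}{v}=800/\sqrt{\eta}$ of mass $\Theta(\eta^2)$ is consistent with \cref{as:htassump}). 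Bernstein at deviation $O(1)$ therefore gives per-pair failure probability $\exp(-\Omega(n\eta^2))$, and beating a net of log-cardinality $\Theta(d\log(d/\eta))$ forces $n\gtrsim d\log(d/\eta)/\eta^2$. In other words, your argument proves the lemma only at sample size $\tO(d/\eta^2)$, or, at the stated $n=\tO(d/\eta)$, only with $\ccov=O(1/\eta)$ rather than an absolute constant; no choice of the polylogarithm hidden in $\tO(d/\eta)$ can recover a polynomial factor of $1/\eta$. For comparison, the paper's proof passes this exact bottleneck by assigning the truncated summand $W_i$ range $1000^2/\eta$ and second moment $\hc\,1000^2/\eta$, which yields a per-grid-pair tail $\exp(-\Omega(n\eta))$ and hence suffices at $n=\tO(d/\eta)$; to get the stated sample complexity you would need per-sample bounds of that order (your two-sided truncation does not obviously provide them) or a different concentration argument, e.g.\ truncating at a lower level and treating the upper shell separately.

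A secondary, fixable issue: your net resolution $\delta\asymp\sqrt{\eta}/(\cp\sqrt{d})$ is adequate for the indicator thresholds but too coarse for the product term. On the truncation region the Lipschitz constant of $\inp{X_i}{u}^2\inp{X_i}{v}^2$ in $u$ is of order $\eta^{-3/2}\cp\sqrt{d}$, so your per-sample discretization error is $\Theta(1/\eta)$, not something absorbed into an $O(1)$ bound. The paper instead takes resolution $1/(100\cp^4 d^2)$, so that the crude estimate $O(\norm{u-u'}\,\norm{X_i}^4)$ makes the per-sample error an absolute constant; since this net is only polynomially finer, its log-cardinality is still $O(d\log(d/\eta))$ and the union bound is unaffected, so simply refining your net repairs this part.
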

We first argue that given pair $u$ and $v$, we may discard problematic terms in the sum for which $X_i$ overlaps strongly with $u$ or $v$ without changing the number of non-zero terms in the sum by more than a factor of $O(\eta n)$. This allows us to focus our attention on terms satisfying $\abs{\inp{X_i}{u}}, \abs{\inp{X_i}{v}} \leq O(1 / \sqrt{\eta})$. To obtain the above condition uniformly over all $u$ and $v$, we use a sufficiently fine grid over $(u, v)$ such that approximation error is negligible and apply a union bound over all the elements in the grid. 

\item The final and most technical step is to demonstrate that the previous argument which is restricted to rank-one matrices $M=vv^\top$ can be lifted to arbitrary normalized p.s.d. matrices. We formally state this in, 
\begin{lemma}
    \label{lem:htcovconc}
    Let $\mc{D}$ satisfy \cref{as:htassump}. Then, for $n = \tO(d / \eta)$ samples from $\mc{D}$, we have with probability at least $1 - 1/d$, that for every $\norm{u} = 1$ and p.s.d. matrix $M$ with $\Tr{M} = 1$:
    \begin{gather*}
        \mb{E}_{[n]} [\inp{X_i}{u}^2 \inp{X_iX_i^\top}{M} \bm{1} \lbrb{\abs{\inp{X_i}{u}} \leq 40 / \sqrt{\eta} \wedge \inp{X_iX_i^\top}{M} \leq 400^2 / \eta}] \leq \cfour \ \ \text{ and } \\
        \mb{E}_{[n]} [\bm{1} \lbrb{\abs{\inp{X_i}{u}} \leq 40 / \sqrt{\eta} \wedge \inp{X_iX_i^\top}{M} \leq 400^2 / \eta}] \geq (1 - \eta / 25),
    \end{gather*}
    for some absolute constant $\cfour$.
\end{lemma}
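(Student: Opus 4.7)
The plan is to prove Lemma~\ref{lem:htcovconc} by combining the convex-duality identity~\eqref{eq:pfsketchminmax} with the rank-one control provided by Lemma~\ref{lem:htcovconcr1}. The bilinear form $(\alpha,M)\mapsto \inp{M}{\sum_i\alpha_i\inp{X_i}{u}^2 X_iX_i^\top}$ is continuous on the compact convex sets $\Delta_{\eta/25}$ and $\{M\succcurlyeq 0:\,\Tr(M)=1\}$, so Sion's minimax theorem gives~\eqref{eq:pfsketchminmax}. This reduces the problem to showing, for each fixed PSD $M$, the existence of weights $\alpha = \alpha(M) \in \Delta_{\eta/25}$ that bound $\inp{M}{\sum_i \alpha_i \inp{X_i}{u}^2 X_iX_i^\top}$ by an absolute constant. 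I will take $\alpha$ uniform on the joint truncation set $T_{u,M}=\lbrb{i:|\inp{X_i}{u}|\le 40/\sqrt{\eta}\text{ and }X_i^\top M X_i\le 400^2/\eta}$, so that both conclusions of the lemma reduce to (a) showing $|T_{u,M}|\ge (1-\eta/25)n$ and (b) bounding the resulting quadratic form.

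For (a) I will control the two exclusion sets separately. Markov's inequality applied to the empirical fourth moment $\frac{1}{n}\sum_i\inp{X_i}{u}^4$, uniformly in $u$ via the L4--L2 hypercontractivity in Assumption~\ref{as:htassump} and a standard net over the sphere, gives $|T_u^c|\le\eta n/50$. Markov's inequality applied to the first moment $\frac{1}{n}\sum_i X_i^\top M X_i=\Tr(\hat\Sigma M)\le\norm{\hat\Sigma}$ combined with the uniform bound $\norm{\hat\Sigma}=O(1)$ from Lemma~\ref{lem:htmeanconc} yields $|T_M^c|\le\eta n/50$; a union bound then gives the cardinality claim.

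For (b) I will decompose $M = \sum_j \lambda_j v_jv_j^\top$ into its eigencomponents, separating the ``heavy'' directions with $\lambda_j \ge 1/4$ from the ``light'' ones with $\lambda_j < 1/4$. On the heavy part, the joint truncation $X_i^\top M X_i\le 400^2/\eta$ forces $\inp{X_i}{v_j}^2 \le 400^2/(\lambda_j\eta) \le 800^2/\eta$, so each rank-one term $\sum_{i\in T_{u,M}}\inp{X_i}{u}^2\inp{X_i}{v_j}^2$ is directly controlled by Lemma~\ref{lem:htcovconcr1}, giving a total heavy contribution of $O(C_4 n) \cdot \sum_{j:\lambda_j\ge 1/4}\lambda_j = O(C_4 n)$. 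On the light part, the truncation on $X_i^\top M X_i$ no longer pins down individual $|\inp{X_i}{v_j}|$, so I will handle it by a tail decomposition exploiting the hypercontractive fourth moment and the $u$-truncation $\inp{X_i}{u}^2 \le 1600/\eta$, or equivalently by a Gaussian-rounding lift $M=\mb{E}_{g\sim\mc{N}(0,M)}[gg^\top]$ that reduces the light contribution to an averaged rank-one quantity which can again be fed into Lemma~\ref{lem:htcovconcr1}.

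The principal obstacle is precisely this lift from rank-one $M=vv^\top$ to general trace-one PSD $M$: a direct $\varepsilon$-net over trace-one PSD matrices lives in a space of dimension $d^2$, demanding $n\gtrsim d^2$ samples and thus incompatible with the target $n=\tO(d/\eta)$. The combination of the $M$-truncation, the hypercontractive tail inequality $\inp{X_i}{v}^2\bm{1}\lbrb{|\inp{X_i}{v}|>800/\sqrt{\eta}}\le(\eta/800^2)\inp{X_i}{v}^4$, and the rank-one Lemma~\ref{lem:htcovconcr1} must be orchestrated carefully to transfer control of quadratic forms from single unit vectors to an arbitrary trace-one PSD weighting without paying any $d^2$ factor.
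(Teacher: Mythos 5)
Your outline gets the architecture right (everything should be funneled through \cref{lem:htcovconcr1}, with no net over trace-one PSD matrices), but the first conclusion of the lemma --- the lift from rank-one $M$ to general $M$ --- is exactly the part you do not actually prove. The heavy-eigenvalue reduction is fine (for $\lambda_j\geq 1/4$ the truncation $\inp{X_iX_i^\top}{M}\leq 400^2/\eta$ indeed forces $\abs{\inp{X_i}{v_j}}\leq 800/\sqrt{\eta}$, so \cref{lem:htcovconcr1} applies directly), but for the light part you only name ``a Gaussian-rounding lift $M=\mb{E}[gg^\top]$ \ldots which can again be fed into \cref{lem:htcovconcr1}'' without the two ingredients that make the rounding work. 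First, $g\sim\mc{N}(0,M)$ is not in the unit ball, so you must control $\norm{g}$ (the paper uses Borell--TIS to get $\norm{g}\leq 4$ with probability at least $0.95$, applies the rank-one bound to $v=g/4$, and accounts for the complementary event). Second, and more importantly, feeding the rounded vector into \cref{lem:htcovconcr1} only yields an \emph{upper} bound on $\mb{E}_g\big[\sum_i\inp{X_i}{u}^2\inp{X_i}{g/4}^2\bm{1}\lbrb{\cdot}\big]$; to convert this into a bound on $\sum_i\inp{X_i}{u}^2\inp{X_iX_i^\top}{M}\bm{1}\lbrb{\cdot}$ you need, for every $i$ with $\inp{X_iX_i^\top}{M}\leq 400^2/\eta$, a \emph{lower} bound of the form $\mb{E}_g\big[\inp{X_i}{g}^2\bm{1}\lbrb{\norm{g}\leq 4\wedge\abs{\inp{X_i}{g}}\leq 800/\sqrt{\eta}}\big]\geq c\,\inp{X_iX_i^\top}{M}$. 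The paper obtains this with a truncated-Gaussian second-moment argument (the conditioning event has probability at least $0.9$, and among events of a given probability an interval around the origin minimizes the truncated second moment), producing the constant $1/(4\pi)$ and hence $\cfour=64\pi\ccov$. Without that step the rounding merely restates the problem, and your alternative ``tail decomposition'' for the light part is not developed; since this lift is the entire content of the lemma, this is a genuine gap.

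Two further points. The route you give for the $u$-half of the cardinality claim --- Markov applied to the empirical fourth moment $n^{-1}\sum_i\inp{X_i}{u}^4$, controlled uniformly over a sphere net --- does not work at $n=\tO(d/\eta)$: individual terms can be of order $d^2$ (only $\norm{X_i}\leq \cp\sqrt{d}$ and L4--L2 are assumed), so a uniform bound on this empirical fourth moment over an exponential-size net requires far more than $\tO(d/\eta)$ samples; this is precisely the $d^2$-sample barrier the lemma is designed to dodge. The fix is simply to cite the second conclusion of \cref{lem:htcovconcr1} with $v=u$, which bounds the count of indices with $\abs{\inp{X_i}{u}}> 40/\sqrt{\eta}$ by $\eta n/100$ via Bernstein on indicators. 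On the plus side, your bound on the $M$-exclusion set is correct and simpler than the paper's: since $n^{-1}\sum_i X_i^\top MX_i\leq\norm{\mb{E}_{[n]}[X_iX_i^\top]}\,\Tr M\leq 1+O(\sqrt{\eta})$ uniformly in $M$ once the upper bound in \cref{lem:htmeanconc} holds, Markov gives at most $O(\eta n/400^2)$ indices with $\inp{X_iX_i^\top}{M}>400^2/\eta$, whereas the paper derives this count through the Gaussian rounding itself. Finally, the minimax identity you invoke at the start belongs to the proof of \cref{lem:htcovconcset}, not to this lemma, though that is only a framing issue.
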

The argument to do this uses an intricate Gaussian rounding argument which generalizes arguments used previously in the context of heavy tailed mean estimation \citep{lecue2019robust} where a rounding argument was used to obtain large deviation bounds of the points $X_i$ in the ``direction'' $M$. However, these results do not suffice for our application as these only provide us a bound of $O(1 / \eta^2)$ as opposed to a constant. We generalize these arguments to obtain variance bounds on $X_i$ in the ``direction'' $M$. The full details of the proof are presented in \cref{ap:htc}.
\end{itemize}

Combining these results establishes \cref{lem:htcovconcset}. Finally, we state the main result of this section regarding the gradient estimates, $G_i(w) = (\inp{X_i}{w} - Y_i)X_i$, for any candidate weight vector $w$, which can be proved by combining the aforementioned results:

\begin{lemma}
\label{lem:htdet}
    Let $\mc{D}$ satisfy \cref{as:htassump}. Given $n = \tO(d / \eta)$ $(2\eta)$-corrupted samples from $\mc{D}$, with probability at least $0.95$, there exists for every $w \in \R^d$, a subset $S \subset [n]$ such that $\abs{S} \geq (1 - 5\eta)n$ satisfying:
    \begin{equation*}
        \norm{\mb{E}_S[G_i(w)] - G^*(w)} \leq O(\sqrt{\eta} (\norm{w - w^*} + \sigma)), \ \  \norm{\mb{E}_S[ G_i(w) G_i(w)^\top]} \leq O(\norm{w - w^*}^2 + \sigma^2),
    \end{equation*}
    and furthermore, for all $i \in S$, we have $\norm{G_i(w)} \leq O \lprp{\sqrt{\frac{d}{\eta}} \norm{w - w^*} + \sigma \sqrt{\frac{d}{\eta}}}$.
\end{lemma}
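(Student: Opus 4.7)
The plan is to apply the three concentration results of this section (Lemmas \ref{lem:htmeanconc}, \ref{lem:htcovconcset}, and \ref{lem:hterrconc}) simultaneously and to combine them using the decompositions \eqref{eq:expandmean} and \eqref{eq:expandcov} with $u = w - w^*$. Let $C \subseteq [n]$ denote the uncorrupted indices, so that $|C| \geq (1-2\eta)n$ by \cref{def:etacor}. I would first invoke the aforementioned lemmas (together with the analogous concentration on $\mathbb{E}_S[\epsilon_i^2 X_i X_i^\top]$ and a tail bound on $|\epsilon_i|$ coming from \cref{lem:hterrconc}-style arguments) on the clean samples and union-bound all the resulting high-probability events, paying a total failure probability of at most $0.05$.

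For each fixed $w \in \R^d$, I would then construct $S$ as the intersection of $C$ with the good set $S^{(u)}$ furnished by \cref{lem:htcovconcset} specialized to the direction $u/\|u\|$ (which also enforces $|\langle X_i, u/\|u\|\rangle| \leq 40/\sqrt{\eta}$ for every $i \in S^{(u)}$), further intersected with the index sets on which the other concentration bounds hold. Since each of these sets has at least $(1 - O(\eta))n$ elements and $|C| \geq (1-2\eta)n$, the intersection satisfies $|S| \geq (1-5\eta)n$ provided the constants are tuned appropriately. The mean bound then follows from $\mathbb{E}_S[G_i(w)] - G^*(w) = (\mathbb{E}_S[X_iX_i^\top] - \Sigma)u - \mathbb{E}_S[\epsilon_i X_i]$: the first summand is at most $O(\sqrt{\eta})\|u\|$ by \cref{lem:htmeanconc} and the second is at most $O(\sqrt{\eta}\,\sigma)$ by \cref{lem:hterrconc}.

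For the spectral norm bound I would expand $\mathbb{E}_S[G_i(w) G_i(w)^\top]$ as in \eqref{eq:expandcov}. The leading term $\mathbb{E}_S[\langle X_i, u\rangle^2 X_iX_i^\top]$ is controlled by $O(\|u\|^2) I$ via \cref{lem:htcovconcset}; the noise term $\mathbb{E}_S[\epsilon_i^2 X_iX_i^\top]$ is bounded by $O(\sigma^2) I$ via a direct extension of \cref{lem:htmeanconc} (applied to the variables $\epsilon_i^2 X_iX_i^\top$); and the cross term is dispatched using the operator AM--GM inequality $2\epsilon_i \langle X_i, u\rangle X_iX_i^\top \preceq \langle X_i, u\rangle^2 X_iX_i^\top + \epsilon_i^2 X_iX_i^\top$. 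The pointwise bound on $\|G_i(w)\|$ follows from $\|G_i(w)\| \leq (|\langle X_i, u\rangle| + |\epsilon_i|)\|X_i\|$, combined with $\|X_i\| \leq C_1 \sqrt{d}$ (from part 3 of \cref{as:htassump}), the bound $|\langle X_i, u\rangle| \leq 40\|u\|/\sqrt{\eta}$ on $S^{(u)}$, and an analogous per-sample tail bound $|\epsilon_i| \leq O(\sigma/\sqrt{\eta})$ that excludes at most another $O(\eta n)$ indices.

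The main obstacle is the bookkeeping around the set sizes and constants: \cref{lem:htcovconcset} alone only guarantees $|S^{(u)}| \geq (1-\eta/6)n$, so after intersecting with the clean set $C$ and the sets produced by the three additional concentration statements, I must check that the total number of excluded indices is at most $3\eta n$ (so that $|S| \geq (1-5\eta)n$), and that the multiplicative constants produced by each lemma --- together with those introduced by the AM--GM and triangle-inequality steps --- aggregate to the claimed $O(\sqrt{\eta})$ and $O(\|u\|^2 + \sigma^2)$ constants. Since \cref{lem:htcovconcset} already holds \emph{uniformly} in $u$ and the other statements hold \emph{uniformly} in $S$, no additional net over $w$ is required, which keeps the counting clean.
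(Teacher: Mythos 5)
Your proposal matches the paper's proof essentially step for step: intersect the good sets from \cref{lem:htcovconcset,lem:hterrconc} with the uncorrupted indices to get $S$ with $\abs{S} \geq (1-5\eta)n$, bound the mean via \cref{lem:htmeanconc,lem:hterrconc} and the second moment via $G_iG_i^\top \preccurlyeq 2\inp{X_i}{w-w^*}^2X_iX_i^\top + 2\epsilon_i^2 X_iX_i^\top$ (your AM--GM treatment of the cross term is the same inequality), with the pointwise bound on $\norm{G_i(w)}$ from the truncation thresholds. The only cosmetic difference is that the auxiliary facts you propose to derive separately --- the bound on $\mb{E}_S[\epsilon_i^2 X_iX_i^\top]$ and the per-sample control of $\norm{\epsilon_i X_i}$ --- are already stated as part of \cref{lem:hterrconc}, so no extra concentration argument is needed.
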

With this deterministic result in hand we can directly conduct the algorithmic analysis. 

\phantomsection
\addcontentsline{toc}{section}{References}
\bibliographystyle{plainnat}
\bibliography{custom}

\appendix

\section{Proof of Main Theorems}
\label{sec:mainthmpfs}

In this section, we complete the proofs of \cref{thm:htmain,thm:sgmain} given the results established in previous sections. 

\subsection{Proof of \texorpdfstring{\cref{thm:htmain}}{Theorem 3}}

First, we get that the conclusions of \cref{lem:htdet} holds for the sample points with probability at least $0.95$. Now, in light of \cref{lem:gd}, we only need to ensure that one obtains a good solution to \ref{eq:gestsdp} in for all iterations from $0$ to $T$. That is we need to ensure that we obtain a good solution to \ref{eq:gestsdp} in each iteration. \cref{lem:fastsdp} guarantees a good solution in a single run with probability $0.9$. To boost the probability, we will simply run the Algorithm in \cref{lem:fastsdp}, $O(\log T)$ times each iteration. We obtain a good solution in at least one run with probability at least $1 - 1 / (1000 T)$. To ensure we select a good solution, we compute $\lambda_{\text{max}} (\mb{E}_s [G_i G_i^\top])$ for each solution $s \in \Delta_{10\eta}$ and select the one with the smallest value. From the union bound, the probability that one has a good solution to \ref{eq:gestsdp} for all $T$ rounds is at least $0.999$. By taking a union bound over the events described by \cref{eq:prunsuccess}, \cref{lem:htdet} and the success of the solver designed for \ref{eq:gestsdp}, we conclude that \cref{alg:htlinreg} successfully returns an estimate, $\hat{w}$, satisfying $\norm{\hat{w} - w^*} \leq O(\sigma \kappa \sqrt{\eta})$ with probability at least $0.9$.

\qed

\subsection{Proof of \texorpdfstring{\cref{thm:sgmain}}{Theorem 5}}
We first analyze the set of samples, $\tilde{\bm{Z}}_1$. From \cref{lem:trunclem}, we get that $\mb{P}(\norm{X} \leq O(\sqrt{d})) \geq 0.99$. Therefore, we have via Hoeffding's inequality:

\begin{equation}
    \label{eq:prunsuccess}
    \mb{P} \left(\sum_{(X_i, Y_i) \in \bm{Z}_1} \bm{1} \lbrb{\norm{X_i} \leq O (\sqrt{d})} \geq 0.95 n_1\right) \geq 1 - \exp \lprp{- \frac{2n}{25^2}} \geq 1 - \frac{1}{100d^2}.
\end{equation}

We condition now on the above event. Furthermore, for $\norm{X_i} \leq O (\sqrt{d})$, we have that $(X_i, Y_i) \sim \mc{D}_S$ from \cref{lem:trunclem}. Since the adversary only corrupts, $\eta n_1$ sample points, at most $\eta n_1$ of the $0.95 n_1$ points in $\bm{\tilde{Z}}_1$ are corrupted. From \cref{lem:sgmoms,lem:trunclem}, we see that \cref{as:htassump} also hold for distributions satisfying \cref{as:sgassump} conditioned on the event $\{\norm{X_i} \leq O(\sqrt{d})\}$. Therefore, the result of \cref{thm:htmain} also holds for the set of samples $\bm{\tilde{Z}}_1$. Therefore, the estimate $w^\dagger$ in \cref{alg:sgalg} satisfies $\norm{w^\dagger - w^*} \leq O(\sqrt{\eta})$ with probability $9/10$. Therefore, \cref{as:sgassump,as:sgiest} hold for the set of samples, $\bm{Z}_2$. We now see that the conclusion of theorem holds true conditioned on \cref{as:sgd} holding for the set of samples $\bm{Z}_2$ which is guaranteed with probability at least $0.99$ and us obtaining a $(1 + 3\eta)$-approximate solution to \ref{eq:gestsdp} which happens with probability at least $9 / 10$ from \cref{lem:fastsdp}. Via a union bound, this event takes place with probability at least $2 / 3$ thus proving the theorem.

\qed

\section{Heavy-Tailed Concentration Results}
\label{ap:htc}

Here we collect the proofs of the relevant concentration arguments we use to show the deterministic conditions used in the gradient estimation step for heavy-tailed data. In what follows, we use $\mc{D}$ to denote the distribution over the pair $(X, Y)$. We begin by showing that truncating the data doesn't significantly affect its covariance or hypercontractivity constant.

\begin{lemma}
\label{lem:trunclem}
Consider a distribution, $\mc{D}$, satisfying all the conditions stated in \cref{as:htassump} except the boundedness assumption. Then, there exists a constant $\cp$ such that the conditional distribution of $\mc{D}$ on $S = \{\norm{X} \leq \cp \sqrt{d}\}$, $\mc{D}_S$ satisfies:

\begin{enumerate}
    \item $\mb{P} (\norm{X} \leq C_1 \sqrt{d}) \geq 0.99$
    \item $0.99 \Sigma \preccurlyeq \mb{E}_{\mc{D}_S} [XX^\top] \preccurlyeq 1.01 \Sigma$
    \item $\forall \norm{u} = 1,\ \mb{E}_{\mc{D}_S} [\inp{X}{u}^4] \leq 1.01 \hc \cdot (E_{\mc{D}_S} [\inp{X}{u}^2])^2$
    \item For $(X, Y) \thicksim \mc{D}_S$, we have that $\epsilon$ is independent of $X$ and has the same distribution as when $(X, Y) \thicksim \mc{D}$.
\end{enumerate}
\end{lemma}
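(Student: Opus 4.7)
The plan is to set the truncation radius $C_1\sqrt{d}$ so that the excluded mass $p := \mb{P}(\|X\| > C_1\sqrt{d})$ is small enough that the moments of $\mc{D}_S$ remain multiplicatively close to those of $\mc{D}$; since all four conclusions are controlled by the same scalar $p$, one sufficiently large choice of $C_1$ depending on $\tau$ will serve them simultaneously. The first step is a global moment bound. Expanding $\|X\|^4 = (\sum_i \langle X, e_i\rangle^2)^2$ and applying Cauchy--Schwarz together with L4--L2 hypercontractivity in each coordinate direction gives
$$\mb{E}[\langle X, e_i\rangle^2 \langle X, e_j\rangle^2] \leq \sqrt{\mb{E}[\langle X, e_i\rangle^4]\,\mb{E}[\langle X, e_j\rangle^4]} \leq \tau\, \Sigma_{ii} \Sigma_{jj}.$$
Summing, $\mb{E}[\|X\|^4] \leq \tau (\Tr \Sigma)^2 \leq \tau d^2$ under the normalization $\|\Sigma\|=1$, so Markov's inequality yields $p \leq \tau / C_1^4$. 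Choosing $C_1$ as a sufficiently large absolute multiple of $\tau^{1/2}$ makes both $p$ and $\sqrt{\tau p}$ as small as I want; in particular conclusion 1 is immediate.

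For conclusion 2 I would work direction by direction. For any unit vector $u$, Cauchy--Schwarz followed by hypercontractivity gives the key tail estimate
$$\mb{E}[\langle X, u\rangle^2 \bm{1}_{S^c}] \leq \sqrt{\mb{E}[\langle X, u\rangle^4]\, p} \leq \sqrt{\tau p}\; u^\top \Sigma u.$$
Writing $\mb{E}_{\mc{D}_S}[\langle X,u\rangle^2] = (u^\top \Sigma u - \mb{E}[\langle X, u\rangle^2 \bm{1}_{S^c}])/\mb{P}(S)$ and using $\mb{P}(S) \in [1-p, 1]$ sandwiches the ratio $\mb{E}_{\mc{D}_S}[\langle X,u\rangle^2]/(u^\top\Sigma u)$ in $[1-\sqrt{\tau p},\; 1/(1-p)]$. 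Taking $C_1$ large enough that this interval sits inside $[0.99, 1.01]$ yields the two-sided semidefinite inequality $0.99 \Sigma \preccurlyeq \mb{E}_{\mc{D}_S}[XX^\top] \preccurlyeq 1.01 \Sigma$. Crucially, because the tail term is controlled relative to $u^\top \Sigma u$, the bound is uniformly multiplicative in direction and never requires invoking $\kappa$.

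Conclusion 3 then follows by chaining: $\mb{E}_{\mc{D}_S}[\langle X,u\rangle^4] \leq \mb{E}[\langle X,u\rangle^4]/\mb{P}(S) \leq \tau (u^\top \Sigma u)^2 / (1-p)$, while conclusion 2 delivers $(\mb{E}_{\mc{D}_S}[\langle X,u\rangle^2])^2 \geq (1 - \sqrt{\tau p})^2 (u^\top \Sigma u)^2$. Their ratio is at most $\tau / [(1-p)(1-\sqrt{\tau p})^2]$, which drops below $1.01\tau$ once $p$ and $\sqrt{\tau p}$ are sufficiently small---met by the same choice of $C_1$. Finally, conclusion 4 is immediate: the event $S = \{\|X\| \leq C_1\sqrt{d}\}$ is a function of $X$ alone, so under $\mc{D}_S$ the conditional law of $\epsilon$ given $X$ equals its unconditional law under $\mc{D}$ (using $\epsilon \perp X$), which means $\epsilon$ remains independent of $X$ with the same marginal distribution. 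The only real effort anywhere is the constant bookkeeping needed to pick a single absolute $C_1$ (depending on $\tau$) meeting all four thresholds at once; this is routine because all four reduce to requiring that $p$ and $\sqrt{\tau p}$ be sufficiently small.
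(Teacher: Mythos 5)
Your proposal is correct and follows essentially the same route as the paper's proof: Markov's inequality on the norm to control the truncation mass, Cauchy--Schwarz plus L4--L2 hypercontractivity to bound the directional tail $\mb{E}[\inp{X}{u}^2\bm{1}\{\norm{X} > \cp\sqrt{d}\}]$ relative to $u^\top\Sigma u$, the same renormalization chain for the conditional fourth-to-second moment ratio, and the identical independence computation for the noise. The only cosmetic deviation is that you apply Markov to $\norm{X}^4$ (via a bound equivalent to \cref{lem:l8l2vec}) where the paper uses $\mb{E}[\norm{X}^2] = \tr\Sigma \leq d$, which changes nothing of substance.
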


\begin{proof}[Proof of \cref{lem:trunclem}]
    The fourth claim follows from the fact that $X$ and $\epsilon$ are independent random variables. Formally, we have for any measurable set $A$:
    \begin{equation*}
        \mb{P} \lprp{\epsilon \in A \: | \: \norm{X} \leq \cp \sqrt{d}} = \frac{\mb{P} \lprp{\epsilon \in A \cap \norm{X} \leq \cp \sqrt{d}}}{\mb{P} \lprp{\norm{X} \leq \cp\sqrt{d}}} = \frac{\mb{P} \lprp{\epsilon \in A} \cdot \mb{P}\lprp{\norm{X} \leq \cp \sqrt{d}}}{\mb{P} \lprp{\norm{X} \leq \cp \sqrt{d}}} = \mb{P} \lprp{\epsilon \in A}.
    \end{equation*}
    To prove the rest of the lemma, first note that we have $\mb{E} [\norm{X}^2] = \tr{\Sigma} \leq d$. From this, we obtain from Markov's inequality, for any $C\geq 0$, $\mb{P} \lprp{\norm{X} \leq C \sqrt{d}} \geq 1 - C^{-2}$. Let $S_C = \lbrb{\norm{X} \leq C\cdot \sqrt{d}}$. For the upper bound in the first claim, we have:
    \begin{equation*}
        \mb{E}_{\mc{D}_{S_C}} [XX^\top] = \frac{1}{\mb{P}_{\mc{D}} (S_C)} \cdot \mb{E}_{\mc{D}} [XX^\top \bm{1} \lbrb{X \in S_C}] \preccurlyeq \frac{1}{\mb{P}_{\mc{D}} (S_C)} \cdot \mb{E}_{\mc{D}} [XX^\top] \preccurlyeq \frac{1}{1 - C^{-2}} \cdot \Sigma.
    \end{equation*}
    For the lower bound, using the Cauchy-Schwarz inequality, we have for any vector $u$ of norm  $\norm{u} = 1$:
    \begin{equation*}
        \mb{E}_{\mc{D}} [\inp{X}{u}^2 \bm{1} \lbrb{X \notin S_C}] \leq (\mb{E}_{\mc{D}} [\inp{X}{u}^4])^{1/2} (\mb{E} [\bm{1} \lbrb{X \notin S_C}])^{1/2} \leq \frac{\sqrt{\hc}}{C} \cdot u^\top \Sigma u.
    \end{equation*}
    The lower bound now follows since:
    \begin{equation*}
        \mb{E}_{\mc{D}_{S_C}} [\inp{X}{u}^2 ] \geq \mb{E}_{\mc{D}} [\inp{X}{u}^2 \bm{1} \lbrb{X \in S_C}] = \mb{E}_{\mc{D}} [\inp{X}{u}^2] - \mb{E}_{\mc{D}} [\inp{X}{u}^2 \bm{1} \lbrb{X \notin S_C}] \geq \lprp{1 - \frac{\sqrt{\hc}}{C}} \cdot u^\top \Sigma u.
    \end{equation*}
    Finally, for the second claim, we have:
    \begin{equation*}
        \mb{E}_{\mc{D}_{S_C}} [\inp{X}{u}^4] \leq \frac{1}{1 - C^{-2}} \mb{E}_{\mc{D}} [\inp{X}{u}^4] \leq \frac{\hc}{1 - C^{-2}} (u^\top \Sigma u)^2 \leq \frac{\hc}{1 - C^{-2}} \lprp{1 - \frac{\sqrt{\hc}}{C}}^{-2} (\mb{E}_{\mc{D}_{S_C}} [\inp{X}{u}^2])^2.
    \end{equation*}
    By taking $C$ to be sufficiently large, we have the first three claims of the lemma.
\end{proof}

We now provide the proof of the first inequality in the deterministic condition.

\begin{proof}[Proof of \cref{lem:htmeanconc}]

The upper bound follows from the observation $\mb{E}_S [X_i X_i^\top] \preccurlyeq \mb{E}_{[n]} [X_i X_i^\top]$. In order to apply the matrix Bernstein inequality to control the sum $\mb{E}_{[n]} [W_i W_i^\top] - \E[W_i W_i^\top]$ in spectral norm, we first need the almost sure bound $R = \Norm{\frac{X_i X_i^\top}{n}} \leq \frac{\norm{X_i}^2}{n} = O(\frac{d}{n})$. Now, letting $Z = XX^\top$, where $X$ is a copy of an i.i.d. $X_i$, we have that the matrix variance satisfies $\frac{1}{n} \norm{\E[(Z-\E[Z])^2]} \leq \frac{1}{n} \norm{\E[Z^2]} \leq \frac{1}{n} \norm{\E[\norm{X}^2 XX^\top]} \leq O(\frac{d}{n})$ appealing to \cref{lem:linalgmean,lem:l4l2norm}. Finally, an application of the matrix Bernstein inequality \citep[see, e.g.,][]{tropp2015introduction} shows,
$\frac{1}{n} \sum_{i=1}^{n} X_i X_i^\top \preceq \E[XX^\top] + I \cdot O(\sqrt{\sigma^2 \log(2d/\delta)})+R \log(2d/\delta)) \preceq \Sigma + \sqrt{\eta} \cdot I$ by taking $\delta = \frac{1}{2d^2}$, and $n \geq c \frac{d \log (4d)}{\eta}$ for large-enough $c$. Finally, $\mb{E}_S [X_i X_i^\top] = \frac{n}{\abs{S}} \mb{E}_{[n]} [X_i X_i^\top \bm{1} \{i \in S\}] \preccurlyeq \frac{n}{\abs{S}} \mb{E}_{[n]} [X_i X_i^\top] $, and since,  $\frac{1}{1-10 \eta} \leq 1+O(\eta)$ we obtain that $\mb{E}_{S} [X_i X_i^\top] \preceq \Sigma + \sqrt{\eta} \cdot I$.

For the lower bound, consider a $\delta$-net over the unit sphere with resolution in the embedded $\ell_2$ distance, $G$. For a fixed $u_j \in G$, define the random variable $Z_i = \inp{X_i}{u_j}^2$. From L4-L2 hypercontractivity and the normalization of the spectral norm of $\Sigma$ we have that $\var(Z_i) \leq \E[Z_i^2]  =O(1)$. Now, consider the $1 - 20 \eta$ quantile of the random variable $Z_i$, $q$, defined as $\Pr[Z_i \leq q] = 1-20\eta$.  We can now once again appeal to L4-L2 hypercontractivity and Markov's inequality to see $20 \eta = \Pr[Z_i \geq q] \leq \frac{\E[Z_i^2]}{q^2} \leq O(\frac{1}{q^2})$, and $ q = O(\frac{1}{\sqrt{\eta}})$. Similarly using the Cauchy-Schwarz inequality, $\mb{E} [Z_i \bm{1} \lbrb{Z_i \leq q}] \geq \mb{E} [Z_i] - \E[Z_i \bm{1} \lbrb{Z_i > q}] \geq \E[Z_i] - \sqrt{\E[Z_i^2] \Pr[Z_i \geq q]} \geq \E[Z_i] - O(\sqrt{\eta})$. Since the random variable $\bm{1} \lbrb{Z_i > q}$ is almost surely bounded by $1$ and has variance bounded by $20 \eta$, applying the Bernstein inequality, we can see that that with probability at least $1-\exp(-10 n \eta)$, at most $(1 - 10\eta)$ points will be in the range $[0, q]$. We now apply the Bernstein inequality to the sum $\mb{E}_{[n]} [Z_i \bm{1} \lbrb{Z_i \leq q}] - \E[Z_i \bm{1} \lbrb{Z_i \leq q}]$ to control its lower tail; note that each element in the sum satisfies $\abs{Z_i \bm{1} \lbrb{Z_i \leq q} - \E[Z_i \bm{1} \lbrb{Z_i \leq q}]} \leq 2q \leq O(\frac{1}{\sqrt{\eta}})$ and similarly that $\var(Z_i \bm{1} \lbrb{Z_i \leq q}) \leq O(1)$.  An application of the Bernstein inequality shows with probability at least $1-\exp(-n/10)$, that $\mb{E}_{[n]} [Z_i \bm{1} \lbrb{Z_i \leq q}] \geq \E[Z_i \bm{1} \lbrb{Z_i \leq q}] - O(\sqrt{\eta}) \geq \E[Z_i]-O(\sqrt{\eta})$. Renormalizing the sum by the factor $\frac{n}{\abs{S}} = 1 + O(\eta)$, then shows $\frac{1}{\abs{S}} \sum_{i = 1}^n Z_i \bm{1} \lbrb{Z_i \leq q} \geq \E[Z_i]-O(\sqrt{\eta})$ since $\E[Z_i] \leq 1$. Now applying a union bound over the previous two events we obtain the lower bound for fixed $u_j$ -- since the at least $10 \eta$ points lying in the interval $(q, \infty)$ will be precisely the points which contribute the maximal values to the sum $\frac{1}{\abs{S}} \sum_{i = 1}^n Z_i$ -- so with probability at least $1-2\exp(-n \eta/10)$ for fixed $u_j$, the truncated sum contains the smallest $(1 - 10\eta)n$ of the $Z_i$, which is smaller then any arbitrary set of $(1 - 10\eta)n$ of the $Z_i$. Applying another union bound over the entire $\delta$-net shows that the state holds uniformly for any set of $(1-10\eta)n$ samples $S$, and any $u_j \in G$ that $\mb{E}_S [Z_i] \geq u_j^\top \Sigma u_j - O(\sqrt{\eta}$ with probability at least $1-2 (\frac{3}{\delta})^{d} \exp(-n/10 \eta)$. To lift this to all $v \in S^{d-1}$, define $Q=\mb{E}_S [X_i X_i^\top]$ and note that each element $v = u_j +\Delta$ for some $u_j \in G$ and $\Delta$ such that $\norm{\Delta} \leq \delta$. Thus,
\begin{align*}
     \langle v, Q v\rangle = \langle u_j, Q u_j \rangle  + 2 \langle v, Q \Delta \rangle +\langle \Delta, Q \Delta \rangle \geq \langle u_j, Q u_j \rangle - O(\delta d),
\end{align*}
uniformly for all $\Delta$ where we have used the crude bound that $\norm{Q} \leq O(d)$ since the vectors $X_i$ are truncated at $O(\sqrt{d})$. Choosing $\delta = \frac{c \sqrt{\eta}}{d^2}$ for sufficiently small $c$, then gives the final conclusion that $\frac{1}{\abs{S}} \sum_{i \in S}\langle v, Qv \rangle \geq \Sigma - O(\sqrt{\eta})$ for all $v \in S^{d-1}$ with probability at least $1-2 (c_1 d^2/\sqrt{\eta})^{d} \exp(-n/10 \eta)$. Choosing $n \geq c_2/\eta d \log(2d/\sqrt{\eta})$ for sufficiently large $c_2$ ensures this event holds with probability at least $1-\frac{1}{2d^2}$.
\end{proof}

We now prove \cref{lem:htcovconcset} which exchanges the order of the $\min$ and $\max$ in the SDP used in the gradient estimation step. 

\begin{proof}[Proof of \cref{lem:htcovconcset}]
    First, let $T = \{i: \abs{\inp{X_i}{u}} \leq 40 / \sqrt{\eta}\}$. We now consider the following min-max game:
    \begin{equation*}
        \min_{\alpha \in \Delta_{\eta / 25}} \max_{M \succcurlyeq 0, \tr{M} = 1} \inp*{\mb{E}_{\alpha} [\inp{X_i}{u}^2 X_iX_i^\top]}{M} = \max_{M \succcurlyeq 0, \tr{M} = 1} \min_{\alpha \in \Delta_{\eta / 25}} \inp*{\mb{E}_{\alpha} [\inp{X_i}{u}^2 X_iX_i^\top]}{M},
    \end{equation*}
    where the min and the max can be exchanged as a consequence of von Neumann's min-max theorem. Assume now that the conclusion of \cref{lem:htcovconc} holds. Then, we have that the right hand side of the above display is upper bounded by $\cfour$ and there exists $\alpha^* \in \Delta_{\eta / 25}$ such that:
    \begin{equation*}
        \mb{E}_{\alpha^*} [\inp{X_i}{u}^2 X_i X_i^\top] \preccurlyeq \cfour.
    \end{equation*}
    Now, let $S = \{i \in T: \alpha_i \geq 1/(2n)\}$. Note that $S$ being a subset of $T$ automatically proves the second claim of the lemma. We have from the fact that $\abs{T} \geq (1 - \eta / 25) n$:
    \begin{equation*}
        1 - \frac{\eta}{25 - \eta} = \sum_{i \in S} \alpha_i + \sum_{i \notin S} \alpha_i \leq \frac{\abs{T \setminus S}}{2n} + \frac{\abs{T} - \abs{T \setminus S}}{(1 - \eta / 25)n} \implies \abs{T \setminus S} \leq (\eta n) / 6,
    \end{equation*}
    where the inequality follows when $\eta$ is sufficiently small and the fact that $\abs{T} \leq n$. Finally, the statement of the lemma follows with $\cff = 3\cfour$:
    \begin{equation*}
        \mb{E}_{S} [\inp{X_i}{u}^2 X_iX_i^\top] \leq 3 \mb{E}_{\alpha^*} [\inp{X_i}{u}^2 X_iX_i^\top] \preccurlyeq 3 \cfour.
    \end{equation*}
\end{proof}

We now provide the proof of \cref{lem:htcovconcr1} which shows there exist a set of weights which can exclude problematic $X_i$ in the normalized sums we consider.

\begin{proof}[Proof of \cref{lem:htcovconcr1}]
    We first grid the unit spheres $\mc{G}_1 = \norm{u} \leq 1$ and $\mc{G}_2 = \norm{v} \leq 1$ with grids of resolution $1 / (100 \cp^4 d^2)$. That is, for all $\norm{u} \leq 1$, there exists $u' \in \mc{G}_1$ such that $\norm{u - u'} \leq 1 / (100\cp^4d^2)$ and similarly for $v$. Now, let $q$ be defined as follows:
    \begin{equation*}
        q = \min \lbrb{r: \forall \norm{u} = 1, \mb{P} \lprp{\abs{\inp{X_i}{u}} \leq r} \geq 1 - \eta / 400}.
    \end{equation*}
    We know that $q \leq 20 / \sqrt{\eta}$ by Markov's inequality applied to $\inp{X_i}{u}^2$. Now, for fixed $u$ and $v$, consider the random variables $Q_i = \bm{1} \lbrb{\abs{\inp{X_i}{u}} \leq 1000 / \sqrt{\eta}) \wedge \inp{X_i}{v} \leq 1000 / \sqrt{\eta}}$. By the union bound, we have $\bm{P}[Q_i = 1] \geq 1 - \eta / 200$. Furthermore, consider the random variables $W_i = \inp{X_i}{u}^2 \inp{X_i}{v}^2 Q_i$. We see that $W_i$ is bounded by $1000^2 / \eta$ and furthermore has variance bounded as follows:
    \begin{equation*}
        \mb{E} [W_i^2] \leq \frac{1000^2}{\eta} \mb{E} [\inp{X_i}{v}^4] \leq \frac{\hc 1000^2}{\eta} (v^\top \Sigma v)^2 \leq \frac{\hc 1000^2}{\eta}. 
    \end{equation*}
    Furthermore, its mean is bounded by Cauchy Schwarz and L4-L2 hypercontractivity as:
    \begin{equation*}
        \bm{E} [W_i] \leq \bm{E} [\inp{X_i}{u}^2 \inp{X_i}{v}^2] \leq \hc \cdot (u^\top \Sigma u) (v^\top \Sigma v) \leq \hc.
    \end{equation*}
    Therefore, applying Bernstein's inequality to the random variable $W = n^{-1} \sum_{i = 1}^n W_i$, we get:
    \begin{equation}
        \label{eq:htcovvconc}
        \bm{P} \lprp{W \geq 1 + \hc} \leq \exp \lprp{-\frac{n\eta}{2\cdot 1000^2(\hc + 1)}}.
    \end{equation}
    Finally, consider the random variable $K_i = \bm{1} \lbrb{\abs{\inp{X_i}{u}} \geq q \vee \abs{\inp{X_i}{v}} \geq q}$. We see that $K_i$ has variance and expectation bounded by $\eta / 200$ and is bounded by $1$. Therefore, applying Bernstein's inequality to $K = n^{-1} \sum_{i = 1} K_i$, we get that:
    \begin{equation}
        \label{eq:htcovnconc}
        \bm{P} \lprp{K \geq \frac{\eta}{100}} \leq \exp \lprp{- \frac{1}{200^2}\cdot \frac{\eta^2 n^2}{2 (\eta n / 200 + \eta n / 200)}} = \exp \lprp{- \frac{\eta n}{800}}.
    \end{equation}
    From standard packing/covering number bounds \citep[see, e.g.,][]{vershynin2012introduction}, we see that the number of elements in each grid is at most $\lprp{C d}^{2d}$ for some constant $C$. Therefore, the total number of pairs of elements, $(u,v)$, from each grid is at most $\lprp{C^2 d}^{4d}$. Therefore, taking $n = \tO (d / \eta)$, we have that for all $u \in \mc{G}_1$ and $v \in \mc{G}_2$ the events described in \cref{eq:htcovvconc,eq:htcovnconc} hold with probability at least $1 - 1 /d$. Now, consider any $\norm{u} \leq 1$ and $\norm{v} \leq 1$. Let $u' \in \mc{G}_1$ and $v' \in \mc{G}_2$ be such that $\norm{u' - u} \leq 1 / (100 \cp^4 d^2)$ and $\norm{v' - v} \leq 1 / (100 \cp^4 d^2)$. Now, we have for any $X_i$:
    \begin{align*}
        &\inp{X_i}{u}^2 \inp{X_i}{v}^2 - \inp{X_i}{u'}^2 \inp{X_i}{v'}^2 = (\inp{X_i}{u}^2 \inp{X_i}{v}^2 - \inp{X_i}{u'}\inp{X_i}{u} \inp{X_i}{v}^2) + \\
        &\qquad (\inp{X_i}{u'}\inp{X_i}{u} \inp{X_i}{v}^2 - \inp{X_i}{u'}^2 \inp{X_i}{v}^2) + (\inp{X_i}{u'}^2\inp{X_i}{v}^2 - \inp{X_i}{u'}^2 \inp{X_i}{v'}\inp{X_i}{v}) + \\
        &\qquad (\inp{X_i}{u'}^2\inp{X_i}{v'}\inp{X_i}{v} - \inp{X_i}{u'}^2 \inp{X_i}{v'}^2).
    \end{align*}
    Now, each of the four terms is bounded by $\max(\norm{u' - u},\norm{v' - v}) \norm{X_i}^4$ using the Cauchy Schwarz inequality. From our bounds on $\norm{u' - u}$, $\norm{v' - v}$
    and $\norm{X_i}$, we obtain for all $i$:
    \begin{equation}
        \label{eq:htcovtbnd}
        \abs{\inp{X_i}{u}^2 \inp{X_i}{v}^2 - \inp{X_i}{u'}^2 \inp{X_i}{v'}^2} \leq 1.
    \end{equation}
    We also have, using Cauchy Schwarz inequality:
    \begin{equation}
        \label{eq:htcovipbnd}
        \abs{\inp{X_i}{u} - \inp{X_i}{u'}} \leq \norm{u - u'}\norm{X_i} \leq 1,
    \end{equation}
    with a similar bound holding for $v$.  From \cref{eq:htcovtbnd,eq:htcovipbnd}, we have:
    \begin{align*}
        &n^{-1} \sum_{i = 1}^n \inp{X_i}{u}^2 \inp{X_i}{v}^2 \bm{1} \lbrb{\abs{\inp{X_i}{u}} \leq 800 / \sqrt{\eta} \wedge \abs{\inp{X_i}{v}} \leq 800 / \sqrt{\eta}} \\
        &\leq 1 + n^{-1} \sum_{i = 1}^n \inp{X_i}{u'}^2 \inp{X_i}{v'}^2 \bm{1} \lbrb{\abs{\inp{X_i}{u}} \leq 800 / \sqrt{\eta} \wedge \abs{\inp{X_i}{v}} \leq 800 / \sqrt{\eta}} \\
        &\leq 1 + n^{-1} \sum_{i = 1}^n \inp{X_i}{u'}^2 \inp{X_i}{v'}^2 \bm{1} \lbrb{\abs{\inp{X_i}{u'}} \leq 1000 / \sqrt{\eta} \wedge \abs{\inp{X_i}{v'}} \leq 1000 / \sqrt{\eta}} \\
        &\leq \hc + 2.
    \end{align*}
    For the second statement of the lemma, we have from \cref{eq:htcovipbnd}:
    \begin{equation*}
        \sum_{i = 1}^n \bm{1} \lbrb{\abs{\inp{X_i}{u}} \leq 40/\sqrt{\eta} \wedge \abs{\inp{X_i}{v}} \leq 40 / \sqrt{\eta}} \geq \sum_{i = 1}^n \bm{1} \lbrb{\abs{\inp{X_i}{u'}} \leq q \wedge \abs{\inp{X_i}{v'}} \leq q} \geq n (1 - \eta / 100).
    \end{equation*}
 This concludes the proof of the lemma.
\end{proof}

We now present the proof of \cref{lem:htcovconc} which allows us to lift the previous argument which was restricted to rank-one $M$ to general $M$.

\begin{proof}[Proof of \cref{lem:htcovconc}]
    We first condition on the event which holds in the conclusion of Lemma~\ref{lem:htcovconcr1} and consider a vector $u$ of norm  $\norm{u} = 1$ and a trace-1 psd matrix $M$. Let $g \thicksim \mc{N} (0, M)$ be a centered gaussian random vector with covariance matrix $M$. We have by an application of the Borell-TIS inequality~~\citep[see, e.g.,][]{ledoux1991probability} that with probability at least $1 - \exp (-9/2) \geq 0.95$, 
     $\norm{g} \leq 4$. Furthermore, for any $i$, we have that $\inp{X_i}{g}$ is gaussian with mean $0$ and variance $\inp{X_iX_i^\top}{M}$. Therefore, we get:
    
    \begin{equation*}
      \mb{P} \Paren{\abs{\inp{g}{X_i}} \geq 0.5 \sqrt{\inp{X_iX_i^\top}{M}}} \geq 1 - 1 / \sqrt{2\pi} \geq 0.6.  
    \end{equation*}
    So, for any $i$, we have:
    \begin{equation*}
        \mb{P} \Paren{\abs{\inp{g}{X_i}} \geq 0.5 \sqrt{\inp{X_iX_i^\top}{M}} \wedge \norm{g} \leq 4} \geq 0.5.
    \end{equation*}
    Let $\mc{I}$ denote the indices such that $\sqrt{\inp{X_iX_i^\top}{M}} \geq 400 / \sqrt{\eta}$. Then, we have:
    \begin{equation*}
        \mb{E} \lsrs{\sum_{i \in \mc{I}} \bm{1} \lbrb{\abs{\inp{g}{X_i}} \geq 200 / \sqrt{\eta} \wedge \norm{g} \leq 4}} = \mb{E} \lsrs{\sum_{i \in \mc{I}} \bm{1} \lbrb{\abs*{\inp*{\frac{g}{4}}{X_i}} \geq 50 / \sqrt{\eta} \wedge \norm{g} \leq 4}} \geq \frac{1}{2}  \abs{\mc{I}}.
    \end{equation*}
    From the second conclusion of \cref{lem:htcovconcr1}, each term inside the expectation is bounded by $\frac{\eta}{100}\cdot n$. Therefore, we have $\abs{\mc{I}} \leq \frac{\eta n}{50}$. This establishes the second claim of the lemma as there are at most $\frac{\eta n}{100}$ points, $X_i$, with $\abs{\inp{X_i}{u}} \geq 40 / \eta$.
    
    Now, for $j \in [n] \setminus \mc{I}$, we have:
    \begin{align*}
        &\mb{E} [\inp{X_j}{g}^2 \bm{1} \lbrb{\norm{g} \leq 4 \wedge \abs{\inp{X_j}{g}} \leq 800 / \sqrt{\eta}}] \geq (\mb{E} [\inp{X_j}{g} \bm{1} \lbrb{\norm{g} \leq 4 \wedge \abs{\inp{X_j}{g}} \leq 800 / \sqrt{\eta}}])^2 \\
        &\qquad \geq \inp{X_jX_j^\top}{M} \cdot \frac{1}{2\pi} \cdot \lprp{\int_{-1}^1 \abs{x}\exp \lprp{- \frac{x^2}{2}} dx}^2 \geq \frac{1}{4\pi} \cdot \inp{X_jX_j^\top}{M},
    \end{align*}
    this is because the event $\{\norm{g} \leq 4 \wedge \abs{\inp{X_j}{g}} \leq 800 / \sqrt{\eta}\}$ happens with probability at least $0.9$ and it can be seen via a simple monotonicity argument that the value of $\mb{E} [Z^2 \bm{1} \lbrb{A}]$ for $Z \thicksim \mc{N}(0,1)$ is minimized when $A$ is chosen as an interval around the origin with probability mass $\mb{P} (A)$. 
    
    From the previous display, we finally get:
    \begin{equation*}
        16 \ccov \geq 16 \cdot \mb{E} \lsrs{\sum_{j \notin \mc{I}} \inp*{\frac{gg^\top}{16}}{X_jX_j^\top} \bm{1} \lbrb{\norm{g} \leq 4 \wedge \abs{\inp{X_j}{g}} \leq 800 / \sqrt{\eta}}} \geq \frac{1}{4\pi} \sum_{j \notin \mc{I}} \inp{X_j X_j^\top}{M}.
    \end{equation*}
    By rearranging the above display, we get the first claim of the lemma with $\cfour = 64\pi \ccov$.
\end{proof}

Here we state and prove the result needed to show there is a good set of weights to remove errors from terms of the form $\epsilon_i X_i$ in the gradient term.

\begin{lemma}
    \label{lem:hterrconc}
    Let $\mc{D}$ satisfy \cref{as:htassump}. Then, for $n = \tO (d / \eta)$, we have with probability $99 / 100$, there exists a set $S \subset [n]$ with $\abs{S} \geq 1 - \eta / 100$ such that for any set $T \subset S$ with $\abs{T} \geq (1 - 10 \eta)$, we have:
    \begin{equation*}
        \norm{\abs{T}^{-1} \sum_{i \in T} \epsilon_i X_i} \leq 15\sqrt{\eta} \sigma \text{ and } \abs{T}^{-1} \sum_{i \in T} \epsilon_i^2 X_iX_i^\top \preccurlyeq 5 \sigma^2 I.
    \end{equation*}

    And furthermore, we have $\norm{\epsilon_i X_i} \leq O(\sigma \sqrt{d / \eta})$ for all $i \in S$.
\end{lemma}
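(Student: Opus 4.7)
My plan is to define a deterministic good set $S$ by truncating the noise $\epsilon_i$, establish concentration for the truncated mean vector and second-moment matrix on $S$ via matrix Bernstein, and finally propagate these two controls uniformly to every subset $T\subset S$ of size at least $(1-10\eta)n$ via a short Cauchy--Schwarz argument.

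First, fix $B = C\sigma/\sqrt{\eta}$ for a sufficiently large absolute constant $C$ and set $S = \{i\in[n] : |\epsilon_i|\leq B\}$. By Markov, $\mb{P}(|\epsilon_i|>B) \leq \sigma^2/B^2 \leq \eta/400$, so a Chernoff bound yields $|S|\geq (1-\eta/100)n$ with probability at least $1-\exp(-\Omega(\eta n))$. This already gives the third conclusion of the lemma, since $\|X_i\|\leq \cp\sqrt{d}$ from \cref{as:htassump} implies $\|\epsilon_i X_i\|\leq B\cp\sqrt{d} = O(\sigma\sqrt{d/\eta})$ whenever $i\in S$.

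Next, let $W_i = \epsilon_i X_i \bm{1}\{|\epsilon_i|\leq B\}$ and $Z_i = \epsilon_i^2 X_i X_i^\top \bm{1}\{|\epsilon_i|\leq B\}$. Using $\mb{E}[\epsilon_i]=0$, the independence of $\epsilon_i$ and $X_i$, and Cauchy--Schwarz, the truncation bias satisfies $\|\mb{E}[W_i]\| \leq (\sigma^2/B)\cdot \|\mb{E}[X_i]\| = O(\sqrt{\eta}\sigma)$, where $\|\mb{E}[X_i]\|\leq \sqrt{\|\Sigma\|}=1$ follows from $\mb{E}[X_i]\mb{E}[X_i]^\top\preccurlyeq \Sigma$. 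The random vector $W_i$ satisfies $\|W_i\|\leq O(\sigma\sqrt{d/\eta})$, $\|\mb{E}[W_i W_i^\top]\|\leq \sigma^2$, and $\mb{E}[\|W_i\|^2]\leq \sigma^2 d$, so a matrix Bernstein bound for the Hermitian dilation yields $\|\mb{E}_{[n]}[W_i] - \mb{E}[W_i]\| = O(\sqrt{\eta}\sigma)$ as soon as $n=\tO(d/\eta)$; combining with the bias bound, $\|\mb{E}_S[W_i]\| \leq (n/|S|)\,\|\mb{E}_{[n]}[W_i]\| = O(\sqrt{\eta}\sigma)$. Analogously for $Z_i$, using $\mb{E}[Z_i]\preccurlyeq \sigma^2 I$, $\|Z_i\|\leq O(\sigma^2 d/\eta)$, and $\|\mb{E}[Z_i^2]\|\leq \cp^2 d\cdot B^2\sigma^2 = O(\sigma^4 d/\eta)$ (from $\|X_i\|^2\leq \cp^2 d$ and independence of $\epsilon_i$ and $X_i$), matrix Bernstein yields $\|\mb{E}_S[\epsilon_i^2 X_i X_i^\top]\|\leq 2\sigma^2$ for the same sample complexity.

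Finally, to lift these bounds uniformly to all $T\subset S$ with $|T|\geq(1-10\eta)n$, the second-moment bound follows from the monotonicity $\sum_{i\in T} Z_i \preccurlyeq \sum_{i\in S} Z_i$, which gives $\|\mb{E}_T[\epsilon_i^2 X_i X_i^\top]\|\leq (|S|/|T|)\cdot 2\sigma^2 \leq 5\sigma^2$ for $\eta$ small. For the mean I would write $\mb{E}_T[W_i] = (|S|/|T|)\,\mb{E}_S[W_i] - (1/|T|)\sum_{i\in S\setminus T}W_i$, and bound the second piece in any direction $\|v\|=1$ using Cauchy--Schwarz together with the second-moment control just established:
\[
\bigg|\frac{1}{|T|}\sum_{i\in S\setminus T}\langle W_i, v\rangle\bigg| \leq \frac{\sqrt{|S\setminus T|}}{|T|}\sqrt{|S|\, v^\top \mb{E}_S[Z_i]\, v} \leq \frac{\sqrt{10\eta n}}{(1-10\eta)n}\sqrt{2n\sigma^2} = O(\sqrt{\eta}\sigma),
\]
so combining with the bound on $\mb{E}_S[W_i]$ yields $\|\mb{E}_T[\epsilon_i X_i]\|\leq 15\sqrt{\eta}\sigma$ after union-bounding over the $O(1)$ concentration events. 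The main obstacle is the matrix Bernstein step for $Z_i$: the variance proxy $\|\mb{E}[Z_i^2]\|$ must be controlled using the crude almost-sure bound $\|X_i\|\leq \cp\sqrt{d}$, which loses a factor of $d$ relative to the ``ideal'' variance $\sigma^4\Sigma^2$ and is exactly what fixes the sample complexity at $n=\tO(d/\eta)$; by comparison, uniformity over $T$ is essentially free once the second-moment bound on $S$ is in hand, because Cauchy--Schwarz converts any spectral bound into control on the mean contributed by a small subset.
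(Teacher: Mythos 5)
Your proof is correct and follows essentially the same route as the paper: truncate to define the good set $S$, control the truncated first and second moments via a bias bound plus (matrix) Bernstein at $n = \tO(d/\eta)$, and pass to arbitrary large subsets $T \subset S$ by Cauchy--Schwarz against the second-moment bound on $S$. The only cosmetic difference is that you truncate on $|\epsilon_i|$ and invoke the almost-sure bound $\norm{X_i} \leq \cp\sqrt{d}$ to get the norm claim, whereas the paper truncates directly on $\norm{\epsilon_i X_i}$ via Markov applied to $\norm{\epsilon_i X_i}^2$ and handles the truncation through the conditional distribution rather than indicator-truncated variables.
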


\begin{proof}[Proof of \cref{lem:hterrconc}]
    We first note that $Z_i = \epsilon_i X_i$ is a random vector with zero mean and variance $\sigma^2 \Sigma$. Therefore, we have that $\mb{P} (\norm{Z_i} \leq 10 \sigma \sqrt{d / \eta}) \geq 1 - \eta/200$ by Markov's Inequality applied to $\norm{Z_i}^2$. Now, our set $S$ will be defined as follows $S = \{i: \norm{Z_i} \leq 10 \sigma \sqrt{d / \eta} \}$. Let $W_i = \bm{1} \{\norm{Z_i} \leq 10 \sigma \sqrt{d / \eta}\}$. We have $P(W_i = 1) \geq 1 - \eta / 100$ and $W_i$ has variance bounded by $\eta / 100$. Therefore, we get by Bernstein's inequality:
    \begin{equation*}
        \bm{P} \lprp{\sum_{i = 1}^n W_i \leq 1 - \eta / 50} \leq \exp \lbrb{- \frac{1}{100^2}\cdot \frac{\eta^2 n^2}{2 \lprp{n \cdot \eta / 100 + n \cdot \eta / 100}}} = \exp \lbrb{- \frac{\eta n}{400}}.
    \end{equation*}
    Therefore, we have with probability at least $1 - 1 / e^d$. Let $A = \{Z: \norm{Z} \leq 10 \sigma \sqrt{d / \eta}\}$. Now, conditioned on $Z_i \in A$, the conditional mean and second moment matrix of $Z_i$ are defined as follows for $Z_k \sim \mc{D}$:
    \begin{equation*}
        \mu_A = \frac{1}{\mb{P} (Z_k \in A)} \mb{E} [Z_k \bm{1} \lbrb{Z_k \in A}] \text{ and } \frac{1}{\mb{P} (Z_k \in A)} \mb{E} [Z_kZ_k^\top \bm{1} \lbrb{Z_k \in A}],
    \end{equation*}
    and furthermore, all the vectors in $S$ are independent of each other. We will now bound the mean term in the above display. Take $\norm{u} = 1$:
    \begin{align*}
        \frac{1}{\mb{P} (Z_k \in A)} \mb{E}[\inp{Z_k}{u} \bm{1} \lbrb{Z_k \in A}] &= \frac{1}{\mb{P} (Z_k \in A)} \mb{E}[\inp{Z_k}{u} (1 - \bm{1} \lbrb{Z_k \notin A})] \\
        &= \frac{1}{\mb{P} (Z_k \in A)} \mb{E}[\inp{Z_k}{u} \bm{1} \lbrb{Z_k \notin A}] \\
        &\leq \frac{1}{\mb{P} (Z_k \in A)} (\mb{E}[\inp{Z_k}{u}^2])^{1/2} (\mb{E} [\bm{1} \lbrb{Z_k \notin A}])^{1/2} \leq 2 \sqrt{\eta} \sigma,
    \end{align*}
    where the first inequality follows from Cauchy-Schwarz. Therefore, the mean of $Z_i$ for $Z_i \in S$ is bounded by $2\sqrt{\eta}$. For the variance, we have:
    \begin{equation*}
        \frac{1}{\mb{P} (Z_k \in A)} \mb{E} [Z_k Z_k^\top \bm{1} \{Z_k \in A\}] \preccurlyeq 2 \cdot \mb{E} [Z_kZ_k^\top] = 2 \sigma^2 \Sigma.
    \end{equation*}
    Therefore, we have with probability at least $0.999$ that $\abs{S}^{-1} \sum_{i \in S} Z_i \leq 4 \sigma \sqrt{\eta}$. Furthermore, we have by matrix Bernstein \cite{tropp2015introduction} that:
    \begin{equation*}
        \mb{P} \lprp{\frac{1}{\abs{S}} \sum_{i \in S} Z_i Z_i^\top \preccurlyeq 4 \sigma^2 \cdot I} \geq 1 - 2d \exp \lprp{- \frac{1}{2} \cdot \frac{n^2}{\hc (4n + 40 n d / \eta)}} \geq 1 - \frac{1}{d^2},
    \end{equation*}
    where we have used $(\mb{P}(Z_k \in A))^{-1} \mb{E} [\norm{Z_k}^2 Z_k Z_k^\top] \leq 2 \mb{E} [\norm{Z_k}^2 Z_k Z_k^\top]$ and \cref{lem:l4l2norm} to bound the matrix variance term. Therefore, we have with probability at least $0.99$ that there exists a set $S \subset [n]$ with $\abs{S} \geq (1 - \eta/100$)n such that:
    \begin{equation*}
        \mb{E}_S [Z_i] \leq 4 \sigma \sqrt{\eta} \text{ and } \mb{E}_S [Z_iZ_i^\top] \preccurlyeq 4\sigma^2 I.
    \end{equation*}
    Finally, we have for any $T \subset S$ with $\abs{T} \geq (1 - 10\eta)n$ and any $\norm{u} = 1$:
    \begin{align*}
        \mb{E}_T [\inp{u}{Z_i}] &= \frac{1}{\mb{P}_S (Z_i \in T)} \mb{E}_S [\inp{u}{Z_i} (1 - \bm{1} \lbrb{i \notin T})] \\
        &\leq 5 \sigma \sqrt{\eta} + 1.01 (\mb{E}_S [\inp{u}{Z_i}^2])^{1/2}(\mb{E}_S [\bm{1} \lbrb{Z_i \notin T}])^{1/2} \leq 15 \sigma \sqrt{\eta}.
    \end{align*}
    The upper bound on the second moment follows from the fact that $\mb{E}_T [Z_i Z_i^\top] \preccurlyeq \frac{1}{\mb{P}_S (Z_i \in T)} \mb{E} [Z_iZ_i^\top]$.
\end{proof}

We can now assemble the previous results to provide the proof of the final lemma which establishes the deterministic conditions:

\begin{proof}[Proof of \cref{lem:htdet}]
    Under the assumptions, the conclusions of \cref{lem:htcovconcset,lem:hterrconc,lem:htmeanconc} all hold with probability $0.99$. Let $u = \frac{w - w^*}{\norm{w - w^*}}$ and label the set guaranteed to exist by \cref{lem:htcovconcset} be $S_1$ and the one by \cref{lem:hterrconc} by $S_2$. And consider the intersection of the two sets $S_3 = S_1 \cap S_2$. By the union bound, $\abs{S_3} \geq 1 - \eta / 5$. Since, at most $2\eta n$ of the elements of $S_3$ are corrupted, there exists a set $S \subset S_3$ of size at least $(1 - 5\eta)n$ such that none of their elements have been corrupted. Now, we have for $i \in S$:
    \begin{equation*}
        \norm{G_i} = \norm*{X_i \inp{X_i}{w - w^*} - \epsilon_i X_i} \leq \norm*{X_i \inp{X_i}{w - w^*}} + \norm{\epsilon_i X_i} \leq O \lprp{\sqrt{\frac{d}{\eta}} \norm{w - w^*} + \sigma \sqrt{\frac{d}{\eta}}}.
    \end{equation*}
    This proves the third claim of the lemma. For the first claim, we have:
    \begin{multline*}
        \norm{\abs{S}^{-1} \sum_{i \in S} G_i(w) - \Sigma (w - w^*)} \leq \norm{(\abs{S}^{-1} X_iX_i^\top - \Sigma) (w - w^*)} + \norm{\abs{S}^{-1} \sum_{i \in S} \epsilon_i X_i}  \\
        \leq \norm{\abs{S}^{-1} X_iX_i^\top - \Sigma} \norm{w - w^*} + O(\sqrt{\eta} \sigma) \leq O(\sqrt{\eta} (\norm{w - w^*} + \sigma)).
    \end{multline*}
    Finally, for the second claim of the lemma, we have:
    \begin{equation*}
        \abs{S}^{-1} \sum_{i \in S} G_i(w)G_i(w)^\top \preccurlyeq 2 \abs{S}^{-1} \sum_{i \in S} \inp{X_i}{w - w^*}^2 X_iX_i^\top + \epsilon_i^2 X_iX_i^\top \preccurlyeq O(\norm{w - w^*}^2 + \sigma^2) \cdot I.
    \end{equation*}
\end{proof}
With this result in hand we can establish the deterministic conditions necessary for our algorithms hold with the requisite probability.
\section{Sub-Gaussian Concentration}
\label{ap:sgconc}

In this section we prove concentration results pertaining to the case where the random variables, $X$ and $\epsilon$ are sub-Gaussian instead of the much milder L4-L2 assumptions we had previously considered. This allows us to obtain much tighter bounds on our recovery error than those obtained in \cref{sec:htconc}.

From \cref{thm:htmain}, we can assume that we have an initial estimate $w$ such that $(w - w^*) \leq O(\sqrt{\eta})$. Therefore, by subtracting out $\inp{X_i}{w}$ from all the data points, our problem reduces to a setting where we additionally have $\norm{w^*} \leq O(\sqrt{\eta})$. We make this formal in the following assumption:
\begin{assumption}
    \label{as:sgiestbis}
    We assume that there exists a constant $\nu$, such that $\norm{w^*} \leq \nu \sigma \sqrt{\eta}$.
\end{assumption}
Note, that under \cref{as:sgassump}, we have $\mb{E} [\norm{X}] \leq \lprp{\mb{E} [\norm{X}^2]}^{1/2} = \sqrt{d}$. Throughout the analysis we refer at several points to the following events:
\begin{gather*}
    \mc{H}_{\epsilon} = \{\abs{\epsilon_i} \leq 6 \phi \sqrt{\log 1 / \eta}\} \\ 
    \mc{H}_{l} = \{\norm{X} \leq 7\psi \sqrt{d} + 24 \psi \sqrt{\log 1 / \eta}\} \\
    \mc{H}_{o} = \{\abs{\inp{X}{w - w^*}} \leq 6 \psi \norm{w - w^*} \sqrt{\log 1 / \eta}\}.
\end{gather*}
Note, that from \cref{lem:sgmeanconc} the event, $\mc{H} = \mc{H}_{\epsilon} \cap \mc{H}_l \cap \mc{H}_o$ occurs with probability at least $1 - \eta^{16}$. We start by proving a lemma pertaining to the noise terms arising in the gradient.
\begin{lemma}
    \label{lem:sgerrconcapp}
    Let $\mc{D}$ satisfy \cref{as:sgassump}. Then for $n = \tO (d / \eta^2)$samples from $\mc{D}$, we have with probability at least $1 - 3 / (10d^2)$, for any set $S \subset [n]$ with $\abs{S} \geq (1 - 10\eta)n$, we have:
    \begin{equation*}
       \norm*{ \sum_{i \in S} \frac{Z_i}{n}} \leq O(\sigma \eta \log 1 / \eta) \text{ and } \norm*{\sum_{i \in S} \frac{Z_iZ_i^\top}{n} - \sigma^2 I} \leq O(\sigma^2 \eta \log^2 1 / \eta),
    \end{equation*}
     where $Z_i = \epsilon_i X_i\bm{1} \lbrb{(\epsilon_i, X_i) \in \mc{H}}$.
\end{lemma}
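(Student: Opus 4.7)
The plan is to decompose the problem into two pieces: (i) control the \emph{full} empirical mean $\sum_{i=1}^n Z_i/n$ and second moment $\sum_{i=1}^n Z_iZ_i^\top/n$ around their population counterparts, and (ii) control the \emph{worst} contribution that can come from removing $10\eta n$ samples. A triangle inequality combines (i) and (ii); the technical crux is (ii).

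For (i), I first verify that the truncation indicator $\bm{1}\{\mc{H}\}$ barely perturbs the population moments. Since $\mb{P}(\mc{H}^c) \leq \eta^{16}$ and $\mb{E}\|\epsilon X\|^4 = O(\sigma^4 d^2)$ by sub-Gaussianity, Cauchy--Schwarz yields $\|\mb{E}[Z_i]\| = \|\mb{E}[\epsilon_i X_i \bm{1}\{\mc{H}^c\}]\| \leq O(\sigma \sqrt{d} \eta^{4})$ and analogously $\|\mb{E}[Z_iZ_i^\top] - \sigma^2 I\| \leq O(\sigma^2 d \eta^{4})$, both far below the target rates. Next I would apply vector/matrix Bernstein to $\{Z_i\}$: on $\mc{H}$ one has the almost-sure bound $\|Z_i\| \leq O(\sigma \log 1/\eta \sqrt d)$ and the variance proxy $\sigma^2 I$, so with $n = \tO(d/\eta^2)$ we obtain $\|\sum_i Z_i/n - \mb{E}[Z]\| \leq O(\sigma\eta)$ and $\|\sum_i Z_i Z_i^\top/n - \sigma^2 I\| \leq O(\sigma^2 \eta)$ with probability at least $1 - 1/(10d^2)$.

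For (ii), by duality $\|\sum_{i \in T} Z_i/n\| = \sup_{\|u\|=1} \frac{1}{n}\sum_{i \in T} \inp{u}{Z_i}$, so maximizing over $|T|\leq 10\eta n$ amounts to bounding the sum of the top $10\eta n$ values of $|\inp{u}{Z_i}|$ uniformly in $u$. For a fixed direction, $\inp{u}{Z_i} = \epsilon_i \inp{u}{X_i}\bm{1}\{\mc{H}\}$ is sub-exponential with parameter $O(\sigma)$, so $\mb{P}(|\inp{u}{Z_i}| > t) \leq 2\exp(-ct/\sigma)$. Splitting at threshold $t = C\sigma \log 1/\eta$ and using Bernstein on $\sum_{i=1}^n|\inp{u}{Z_i}|\bm{1}\{|\inp{u}{Z_i}|>t\}$ (whose mean is $O(\sigma \eta \log 1/\eta \cdot n)$) plus the elementary bound $k t$ on the part below $t$ with $k = 10\eta n$ gives sum of top-$k$ bounded by $O(\sigma \eta n \log 1/\eta)$. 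The same argument applied to $\inp{u}{Z_i}^2$, which has tail $\mb{P}(\inp{u}{Z_i}^2 > t) \leq 2\exp(-c\sqrt t/\sigma)$, at threshold $t = C\sigma^2 \log^2 1/\eta$ yields sum of top-$k$ bounded by $O(\sigma^2 \eta n \log^2 1/\eta)$. Dividing by $n$ gives exactly the two target rates.

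Finally I would lift the pointwise-in-$u$ bounds to uniform-in-$u$ bounds by standard $\delta$-net arguments over the unit sphere of cardinality $e^{O(d)}$; the approximation error for a fine enough net ($\delta$ polynomially small in $d, 1/\eta$) is dominated by the target rates thanks to the almost sure bound on $\|Z_i\|$. The union bound forces per-direction failure probability at most $e^{-\Omega(d \log(1/\eta))}$, which is precisely why $n = \tO(d/\eta^2)$ is needed. The main obstacle is step (ii): the crude bound $|T|\cdot \max_i \|Z_i\|/n$ is $O(\sigma \eta \sqrt{d\log 1/\eta})$, which is dimension-dependent and too weak by a factor of $\sqrt{d}$. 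Recovering the dimension-free rate $\sigma\eta\log 1/\eta$ requires the delicate truncate-and-Bernstein analysis of the top order statistics above, and making this uniform in $u$ is what fixes the sample complexity.
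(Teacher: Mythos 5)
Your overall architecture (full-sample concentration plus a worst-case bound on what removing $10\eta n$ points can do, with thresholds $\sigma\log 1/\eta$ and $\sigma^2\log^2 1/\eta$ and a net over directions) is the right one, and the mean claim can indeed be pushed through your way: $\inp{u}{Z_i}$ is sub-exponential with parameter $O(\sigma)$, so the above-threshold sum concentrates with per-direction failure probability $\exp(-\Omega(n\eta^2\log^2 1/\eta)) = \exp(-\tilde\Omega(d))$, which survives the union bound over the net. The genuine gap is in the second-moment claim, precisely at the step "the same argument applied to $\inp{u}{Z_i}^2$". The above-threshold summands $\inp{u}{Z_i}^2\bm{1}\{\inp{u}{Z_i}^2 > C\sigma^2\log^2 1/\eta\}$ are not sub-exponential at scale $\sigma^2$ (they are squares of sub-exponentials, with tail $\exp(-c\sqrt{t}/\sigma)$), and their almost-sure bound on $\mc{H}$ is $\Theta(\sigma^2 d\log 1/\eta)$. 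Any Bernstein-type bound then gives a per-direction failure probability of order $\exp\lprp{-\tilde O\lprp{\min\lprp{n\eta/d,\ \sqrt{n\eta}}}} = \exp\lprp{-\tilde O\lprp{\min\lprp{1/\eta,\ \sqrt{d}/\eta^{1/2}}}}$ at deviation scale $\sigma^2\eta\log^2 1/\eta$ with $n = \tO(d/\eta^2)$; this cannot absorb the union bound over an $e^{\Theta(d\log(d/\eta))}$-point net when, say, $\eta$ is a constant and $d$ is large. Raising the threshold to restore concentration does not help, since the elementary bound $k\cdot t$ on the below-threshold part of the top-$k$ sum would then exceed the target $O(\sigma^2\eta n\log^2 1/\eta)$.

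The paper avoids ever bounding the above-threshold quadratic contribution per direction. Instead it sandwiches: the full quadratic form $u^\top\lprp{n^{-1}\sum_{i=1}^n Z_iZ_i^\top}u$ is controlled \emph{uniformly} in $u$ by the operator-norm (matrix Bernstein) bound $\norm{n^{-1}\sum_i Z_iZ_i^\top - \sigma^2 I}\leq \sigma^2\eta$, while the below-threshold part $n^{-1}\sum_i \epsilon_i^2\inp{X_i}{u}^2\bm{1}\{\mc{H}\cap\{\abs{\inp{X_i}{u}}\leq 6\psi\sqrt{\log 1/\eta}\}\}$ consists of terms bounded by $O(\sigma^2\log^2 1/\eta)$, so Hoeffding plus the net shows it is at least $\sigma^2(1-2\eta)$. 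Consequently each removed point costs at most the threshold, and the entire above-threshold remainder is at most (full) $-$ (below-threshold) $= O(\sigma^2\eta)$, giving the $O(\sigma^2\eta\log^2 1/\eta)$ distortion; the mean claim then follows for free by Cauchy--Schwarz between the removed mass ($\leq 10\eta$) and the removed second moment, rather than by a separate order-statistics argument. You should adopt this sandwich (or an equivalent uniform argument) for your step (ii). A minor additional fix: bound the truncation bias directionally, using $\mb{E}[\inp{u}{\epsilon X}^2] = \sigma^2$ and $\mb{P}(\mc{H}^c)\leq\eta^{16}$, rather than through $\norm{\epsilon X}$; your bounds $O(\sigma\sqrt{d}\eta^4)$ and $O(\sigma^2 d\eta^4)$ carry spurious polynomial factors in $d$ and are only below the target rates when $d$ is polynomially bounded in $1/\eta$.
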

\begin{proof}[Proof of \cref{lem:sgerrconcapp}]
Consider the random matrix $M = n^{-1}\sum_{i = 1}^n M_i$ where the matrices $M_i$ are defined by $M_i = \epsilon_i^2 X_i X_i^\top \bm{1} \{(X_i, \epsilon_i) \in \mc{H}\}$. From the definition of the event $\mc H$, we can bound each of them as $M_i \preccurlyeq C \phi^2 \psi^2 \log 1 / \eta (d + \log 1 / \eta)$. Furthermore, we have $\mb{E} [M_i] \preccurlyeq \mb{E} [\epsilon_i^2 X_iX_i^\top] = \sigma^2 \cdot I$. For a lower bound, we have for any unit vector $u$, by Cauchy-Schwarz inequality and standard bounds on the moments of sub-Gaussian distributions:
\begin{equation*}
    \mb{E} [u^\top M_i u] = \mb{E} [\epsilon_i^2 \inp{X_i}{u}^2 (1 - \bm{1}\lbrb{X_i \notin \mc{H}})] \geq \sigma^2 - (\mb{E} [\epsilon_i^4] \mb{E} [\inp{X_i}{u}^4])^{1/2}(1 - \mb{P}(\mc{H}))^{1/2} \geq \sigma^2 (1 - \eta^8).
\end{equation*}
Therefore, applying the matrix Bernstein inequality \citep[see, e.g.,][]{tropp2015introduction} leads to:
\begin{equation*}
    \mb{P} \lprp{\norm{M - \sigma^2 \cdot I} \geq \sigma^2 \eta} \leq 2d \exp \lprp{- \frac{\eta^2 n^2}{C'(nd + n\eta \log 1 / \eta (d + \log 1 / \eta))}} \leq 1 - \frac{1}{10d^2},
\end{equation*}
where we have used the fact that sub-Gaussian random variables are L4-L2 hypercontractive (\cref{lem:sgmoms}), \cref{lem:l4l2norm}, \cref{as:sgassump} and our setting of $n$.

For the lower bound, we begin by gridding the unit sphere with a grid of resolution $\lprp{\frac{\eta}{d}}^{8}$. From standard results, there exists a grid, $\mc{G}$ of size $\lprp{C^\dagger \frac{d}{\eta}}^{8d}$ for some absolute constant $C^\dagger$. Now, for a specific $u \in \mc{G}$, consider the random variable $W_i = \epsilon_i^2 \inp{X_i}{u}^2 \bm{1} \lbrb{\mc{H} \cap \mc{H}_u}$ where $\mc{H}_u = \{\abs{\inp{X_i}{u}} \leq 6 \psi \sqrt{\log 1 / \eta}\}$. We see that $P(\mc{H}_u) \geq 1 - \eta^{16}$. Therefore, we have by a similar argument as before that $\mb{P} (\mc{H} \cap \mc{H}_u) \geq 1 - 2\eta^8$. Furthermore, note that $W_i$ is upper bounded by $36^2 \psi^2 \phi^2 \log^2 1 / \eta$. Therefore, applying Hoeffding's inequality to $W = n^{-1}\sum_{i = 1}^n W_i$, we get:
\begin{equation*}
    \mb{P} (W \leq \sigma^2(1 - \eta)) \leq \exp \lprp{- \frac{n \eta^2}{(36^2 \psi^2 \phi^2 \log^2 1 / \eta)^2}}.
\end{equation*}
By taking a union bound over $\mc{G}$, we see that the above concentration holds for all $u \in \mc{G}$ with probability at least $1 - 1 / (10d^2)$. Now, for a non-grid element $v$, select $v' \in \mc{G}$ such that $\norm{v - v'} \leq (\frac{\eta}{d})^{8}$. Note, that for any $(X_i, \epsilon_i) \in \mc{H}$, we have that $\abs{\inp{X_i}{v} - \inp{X_i}{v'}} \leq \epsilon^4$ and $\abs{\epsilon_i^2 \inp{X_i}{v}^2 - \epsilon_i^2 \inp{X_i}{v}^2} \leq \epsilon^4$. Therefore, we conclude that for all $\norm{u} = 1$, we have:
\begin{equation*}
    \sum_{i = 1}^n \epsilon_i^2 \inp{X_i}{u}^2 \bm{1} \lbrb{(X_i, \epsilon_i) \in \mc{H} \wedge \abs{\inp{X_i}{u}} \leq 7 \psi \sqrt{\log 1 / \eta}} \geq 1 - 2\eta.
\end{equation*}
Therefore, the removal of $10\eta n$ points at most distorts $W$ by a factor of $10\eta \cdot 36^2 \phi^2\psi^2 \log^2 1 / \eta$, thus proving the second claim of the lemma.

For the first claim, consider the random vector $Z_i = \epsilon_i X_i \bm{1} \lbrb{(X_i, \epsilon_i) \in \mc{H}}$. To bound the mean of $Z_i$, we consider as before a unit vector $u$:
\begin{equation*}
    \mb{E}[\inp{u}{Z_i} \bm{1} \lbrb{(X_i, \epsilon_i) \in \mc{H}}] \leq (\mb{E} [\inp{u}{Z_i}^2])^{1/2} (\mb{P} (\mc{H}))^{1/2} \leq \eta^8 \sigma.
\end{equation*}
Since, $u$ is arbitrary, we have that the mean of the norm of $Z_i$ is bounded by $\eta^8 \sigma$. Also, note from the definition of $Z_i$ that $Z_i$ is sub-Gaussian with sub-Gaussian parameter at most $6 \phi \psi \sqrt{\log 1 / \eta}$. Therefore, we have with probability at most $1 - 1 / (10d^2)$ from \cref{lem:lensg} that $\norm{n^{-1} \sum_{i = 1}^n Z_i} \leq \phi \psi \eta$. Now, we have for any $T \subset [n]$ with $\abs{T} \geq (1 - 10\eta)n$ and any $\norm{u} = 1$:
\begin{align*}
    \inp*{u}{\mb{E}_{[n]} \lsrs{Z_i} - \mb{E}_{[n]} \lsrs{Z_i \bm{1} \lbrb{i \in T}}} &= \mb{E}_{[n]} \lsrs{\inp{u}{Z_i} \bm{1} \lbrb{i \notin T}} \\
    &\leq (\mb{E}_{n} \lsrs{\inp{u}{Z_i}^2 \bm{1} \lbrb{i \notin T}})^{1/2} (\mb{E}_{[n]} \lsrs{\bm{1} \lbrb{i \notin T}})^{1/2} \\
    &\leq (\sigma^2 \eta + 10 \eta \cdot 36^2 \phi^2\psi^2 \log^2 1 / \eta)^{1/2} \sqrt{10 \eta}\\
    &\leq 20 \phi \psi \eta \log 1 / \eta,
\end{align*}
where the first inequality follows from Cauchy-Schwarz and the second inequality follows from the second claim of the lemma. This concludes the proof of the lemma.
\end{proof}

The following lemma is the analogue \cref{lem:htmeanconc} for the sub-Gaussian case. It's proof follows similarly to \cref{lem:htmeanconc}.
\begin{lemma}
    \label{lem:sgmeanconc}
    Let $\{X_i\}_{i = 1}^n$ satisfy  \cref{as:sgassump}. Then, there exists a universal constant $c$ such that  if $n = \tO (d / \eta^2)$, with probability at least $1 - 1 / (10d^2)$ for any set of $(1 - 10\eta) n$ samples, $S$, we have:
    \begin{equation*}
        (1 - O(\eta \log 1 / \eta)) \Sigma \preccurlyeq \mb{E}_S [X_i X_i^\top \bm{1} \{(X_i, \epsilon_i) \in \mc{H}\}] \preccurlyeq (1 + O(\eta)) \Sigma. 
    \end{equation*}
\end{lemma}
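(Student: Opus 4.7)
The plan is to mirror the two-sided structure of \cref{lem:htmeanconc}, but to exploit the sub-Gaussian tail to replace the heavy-tailed quantile $O(1/\sqrt{\eta})$ with the much sharper $O(\log 1/\eta)$. Throughout, let $T_i := X_i X_i^\top \bm{1}\{(X_i, \epsilon_i) \in \mc{H}\}$; since $\mc{H} \subseteq \mc{H}_l$, each $T_i$ satisfies $\norm{T_i} \leq \norm{X_i}^2 \leq O(\psi^2(d + \log 1/\eta))$, and $\mb{P}(\mc{H}^c) \leq \eta^{16}$.

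For the upper bound, I would first observe that Cauchy--Schwarz combined with the L4--L2 hypercontractivity of sub-Gaussian variables (\cref{lem:sgmoms}) gives, for any $\norm{u}=1$,
\begin{equation*}
\mb{E}[\inp{X}{u}^2 \bm{1}\{\mc{H}^c\}] \leq (\mb{E}[\inp{X}{u}^4])^{1/2}(\mb{P}(\mc{H}^c))^{1/2} \leq O(\eta^8),
\end{equation*}
so $\norm{\mb{E}[T_1] - \Sigma} \leq O(\eta^8)$. Applying matrix Bernstein to $n^{-1}\sum_i T_i$ with matrix variance $n^{-1}\norm{\mb{E}[\norm{X}^2 XX^\top]} = O(d/n)$ (via \cref{lem:l4l2norm}) and uniform bound $O((d + \log 1/\eta)/n)$ yields $\norm{n^{-1}\sum_i T_i - \Sigma} \leq O(\eta)$ with probability $1 - 1/(40 d^2)$ provided $n = \tO(d/\eta^2)$. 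Since $T_i \succcurlyeq 0$, PSD monotonicity $\mb{E}_S[T_i] \preccurlyeq \frac{n}{\abs{S}} \cdot n^{-1}\sum_i T_i$ then delivers $\mb{E}_S[T_i] \preccurlyeq (1 + O(\eta))\Sigma$ for any $S$ with $\abs{S} \geq (1-10\eta)n$.

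The bulk of the work lies in the lower bound. For each $u$ in a $\delta$-net $\mc{G} \subset S^{d-1}$ of resolution $\delta = (\eta/d)^c$, define $Z_i(u) := \inp{X_i}{u}^2 \bm{1}\{(X_i,\epsilon_i) \in \mc{H}\}$ and set $q := C\psi^2 \log(1/\eta)$, chosen so that $\mb{P}(Z_1(u) > q) \leq 20\eta$ via the sub-Gaussian tail $\mb{P}(\abs{\inp{X}{u}} > t) \leq 2\exp(-t^2/(2\psi^2))$. The truncation loss is then
\begin{equation*}
\mb{E}[Z_1 \bm{1}\{Z_1 > q\}] \leq q \mb{P}(Z_1 > q) + \int_q^\infty \mb{P}(\inp{X}{u}^2 > t)\,dt \leq O(\eta \log 1/\eta),
\end{equation*}
giving $\mb{E}[Z_1 \bm{1}\{Z_1 \leq q\}] \geq u^\top \Sigma u - O(\eta \log 1/\eta)$. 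A Bernstein bound on $n^{-1}\sum_i Z_i(u) \bm{1}\{Z_i(u) \leq q\}$ (each term bounded by $q$, variance $O(1)$) gives deviation $O(\eta)$; combined with a union bound over the $(Cd/\eta)^{O(d)}$-element grid, this succeeds uniformly on $\mc{G}$ with probability $1 - 1/(40d^2)$. A parallel Bernstein bound on $n^{-1}\sum_i \bm{1}\{Z_i(u) > q\}$ certifies that at least $10 \eta n$ samples have $Z_i(u) > q$ uniformly on $\mc{G}$. Exactly as in \cref{lem:htmeanconc}, this guarantees that the bottom $(1-10\eta)n$ values of $Z_i(u)$ are all at most $q$, so for any $S$ with $\abs{S} \geq (1-10\eta)n$ one has $\sum_{i \in S} Z_i(u) \geq \sum_i Z_i(u) \bm{1}\{Z_i(u) \leq q\} \geq n(u^\top \Sigma u - O(\eta \log 1/\eta))$. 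Lifting from $\mc{G}$ to arbitrary $\norm{v}=1$ uses the crude bound $\norm{T_i} \leq O(d + \log 1/\eta)$ together with the resolution $\delta$.

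The main obstacle is keeping a clean error budget across the two truncations: the $\mc{H}^c$ correction contributes only $O(\eta^8)$ thanks to sub-Gaussian hypercontractivity, while the dominant $O(\eta\log 1/\eta)$ term arises entirely from the $q$-truncation via the sub-Gaussian tail integral at $q = O(\log 1/\eta)$. The net resolution $\delta$ must be chosen small enough that the per-sample perturbation $\delta \norm{T_i}$ fits inside this budget, which is why a polynomial-in-$(d/\eta)$ net suffices.
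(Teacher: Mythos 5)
Your upper bound and the overall architecture of your lower bound (truncate $\inp{X_i}{u}^2$ at a level $q = O(\psi^2\log 1/\eta)$, concentrate the truncated empirical mean, union bound over a polynomial-in-$(d/\eta)$ net, then absorb the removal of $10\eta n$ points into an $O(\eta\log 1/\eta)$ budget) match the paper's proof of \cref{lem:sgmeanconc}, which uses the event $\mc{H}\cap\mc{H}_u$ with $\mc{H}_u = \{\abs{\inp{X_i}{u}}\leq 6\psi\sqrt{\log 1/\eta}\}$ in place of your indicator $\bm{1}\{Z_i\leq q\}$.

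However, one step in your lower bound is wrong as stated: the claim that a Bernstein bound on $n^{-1}\sum_i \bm{1}\{Z_i(u) > q\}$ ``certifies that at least $10\eta n$ samples have $Z_i(u) > q$.'' You fixed $q = C\psi^2\log(1/\eta)$, and at this threshold the sub-Gaussian tail gives $\mb{P}(Z_i(u) > q) \leq \mathrm{poly}(\eta) \ll 10\eta$, so with high probability far \emph{fewer} than $10\eta n$ samples exceed $q$; concentration can only certify an upper bound on the exceedance count here, never the lower bound you need. Consequently the inequality you derive from it, $\sum_{i\in S} Z_i(u) \geq \sum_i Z_i(u)\bm{1}\{Z_i(u)\leq q\}$, is unjustified and can fail outright (if no sample exceeds $q$, dropping $10\eta n$ positive terms strictly decreases the sum). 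This order-statistics step is borrowed from \cref{lem:htmeanconc}, where it is valid only because $q$ there is defined as the $(1-20\eta)$-quantile, so the exceedance probability is genuinely of order $\eta$. The fix is immediate and stays within your stated budget: since every truncated term is bounded by $q = O(\log 1/\eta)$, removing at most $10\eta n$ of them changes the normalized sum by at most $O(\eta\log 1/\eta)$, i.e.
\begin{equation*}
    \mb{E}_S[Z_i(u)] \;\geq\; \mb{E}_{[n]}\bigl[Z_i(u)\bm{1}\{Z_i(u)\leq q\}\bigr] - 10\eta\, q \;\geq\; u^\top\Sigma u - O(\eta\log 1/\eta),
\end{equation*}
which is exactly how the paper concludes (``removing $10\eta$ points at most reduces the sum by $O(\eta\psi^2\log 1/\eta)$''). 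Alternatively, you could define $q$ as the $(1-20\eta)$-quantile as in \cref{lem:htmeanconc}, which is still $O(\psi^2\log 1/\eta)$ by sub-Gaussianity, and then your exceedance-count argument would be legitimate. With either repair the rest of your argument (truncation-loss bound, Bernstein plus net, lifting off the net via $\norm{X_i}^2 = O(d+\log 1/\eta)$ on $\mc{H}$) goes through.
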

\begin{proof}[Proof of \cref{lem:sgmeanconc}]
    For the upper bound, consider the random matrix, $Z_i = X_i X_i^\top \bm{1} \lbrb{(X_i, \epsilon_i) \in \mc{H}}$. We first notice that $\norm{Z_i} = \norm{X_i}^2 \leq O(d + \log 1 / \eta)$ from the definition of $\mc{H}$. Furthermore, we have from \cref{lem:l4l2norm} and \cref{lem:sgmoms}, that $\norm{\mb{E} [(Z_i - \mb{E} Z_i)^2]} \leq \norm{\mb{E} [Z_i^2]} \leq O(d)$. Therefore, we may apply the matrix Bernstein inequality to $Z = \mb{E}_{[n]} [Z_i]$ to obtain:
    \begin{equation*}
        \mb{P} \lprp{\norm{Z - \mb{E} [Z]} \geq \eta} \leq d \exp \lprp{- \frac{n^2 \eta^2}{C (dn + (d + \log 1 / \eta) n \eta)}} \leq \frac{1}{100d^2},
    \end{equation*}
    for our value of $n$. To bound $\mb{E}[Z_i]$, we have for all $u$:
    \begin{equation*}
        \abs{1 - \mb{E} [u^\top Z_i u]} = \abs{\mb{E} [\inp{X_i}{u}^2 \bm{1} \lprp{(X_i, \epsilon_i) \notin \mc{H}}]} \leq (\mb{E} [\inp{X_i}{u}^4])^{1 / 2} (\mb{P} ((X_i, \epsilon_i) \notin \mc{H}))^{1 / 2} \leq \eta^6,
    \end{equation*}
    from \cref{lem:sgmoms} and the definition of $\mc{H}$. The upper bound now follows from the fact that $\mb{E}_S [Z_i] \preccurlyeq (1 - 10 \eta)^{-1} \mb{E}_{[n]} [Z_i]$.
    
    For the lower bound, we begin by constructing a $\delta$-net of the unit sphere of resolution $\lprp{\frac{\eta}{d}}^8$. As in previous lemmas, there exists, a grid $\mc{G}$, of size $\lprp{C' \frac{d}{\eta}}^{8d}$. Now, for a particular $u \in \mc{G}$, consider the random variable $W_i = \inp{X_i}{u}^2 \bm{1} \lbrb{\mc{H} \cap \mc{H}_u}$ where $\mc{H}_u = \bm{1} \lbrb{\abs{\inp{X_i}{u}} \leq 6 \psi \sqrt{\log 1 / \eta}}$. We have from \cref{lem:sgtrunc} that:
    \begin{equation*}
        \abs{1 - \mb{E}[W_i]} \leq \abs{\mb{E}[\inp{X_i}{u}^2 \bm{1} \lbrb{(X_i, \epsilon_i) \notin \mc{H} \cap \mc{H}_u}]} \leq (\mb{E} [\inp{X_i}{u}^4])^{1 / 2} (\mb{P} ((X_i, \epsilon_i) \notin \mc{H} \cap \mc{H}_u))^{1 / 2} \leq \eta^4,
    \end{equation*}
    from \cref{lem:sgmoms} and the definition of $\mc{H}$ and $\mc{H}_u$. Now, applying Hoeffding's inequality to $W = \mb{E}_{[n]} [W_i]$, we get:
    \begin{equation*}
        \mb{P} \lprp{\abs{W - \mb{E}[W]} \geq \eta} \leq \exp \lprp{- \frac{2 n^2 \eta^2}{36 \psi^2 \log 1 / \eta}}.
    \end{equation*}
    Therefore, the above event holds uniformly over all $u \in \mc{G}$ with probability at least $1 - 1 / (100d^2)$ for our value of $n$. For a non grid point, $u$, consider $v \in \mc{G}$ closest to $u$, now we have for all $i$ satisfying $(X_i, Y_i) \in \mc{H}$ that: 
    \begin{equation*}
        \abs{\inp{X_i}{u}^2 - \inp{X_i}{v}^2} \leq 2\abs{\inp{X_i}{u - v}\inp{X_i}{u - v}} + \abs{\inp{X_i}{u - v}^2} \leq 3 \norm{X_i}^2 \norm{u - v} \leq \eta^4.
    \end{equation*}
    Therefore, we have for all $\norm{u} = 1$:
    \begin{equation*}
        \mb{E}_{[n]} [\inp{X_i}{u}^2 \bm{1} \lbrb{(X_i, \epsilon_i) \in \mc{H} \wedge {\abs{\inp{X_i}{u}} \leq 50 \psi^2 \log 1 / \eta }}] \geq 1 - 2 \eta.
    \end{equation*}
    The lower bound now follows from the fact that removing $10 \eta$ points at most reduces the above sum by $500 \eta \psi^2 \log 1 / \eta$ and the fact that $\mb{E}_{S} [W_i] \geq \mb{E}_{[n] [W_i \bm{1} \lbrb{i \in S}]}$.
\end{proof}

In the following lemma, we prove that there exists a good set of data points whose gradients have well behaved covariance structure.

\begin{lemma}
    \label{lem:sgcovconcapp}
    Let $\mc{D}$ satisfy \cref{as:sgassump,as:sgiest}. Then for $n = \tO (d / \eta^2)$, we have with probability at least $1 - 1 / (2d^2)$, that for any $S \subset [n]$ with $\abs{S} \geq (1 - 10\eta)n$:
    \begin{equation*}
        \norm*{n^{-1} \sum_{i \in S} G_i(w^*) G_i(w^*)^\top \bm{1} \lbrb{(X_i, \epsilon_i) \in \mc{H}} - \sigma^2 \cdot I} \leq O(\eta \log^2 1 / \eta))\sigma^2.
    \end{equation*}
\end{lemma}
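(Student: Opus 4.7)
First, I would observe that under the generative model $Y_i = \inp{X_i}{w^*} + \epsilon_i$, the per-sample gradient at $w^*$ reduces to $G_i(w^*) = (\inp{X_i}{w^*} - Y_i)X_i = -\epsilon_i X_i$. With this in hand,
\[
G_i(w^*) G_i(w^*)^\top \bm{1}\{(X_i, \epsilon_i) \in \mc{H}\} = \epsilon_i^2 X_i X_i^\top \bm{1}\{(X_i, \epsilon_i) \in \mc{H}\} = Z_i Z_i^\top,
\]
where $Z_i = \epsilon_i X_i \bm{1}\{(X_i, \epsilon_i) \in \mc{H}\}$ is exactly the truncated vector already considered in \cref{lem:sgerrconcapp} (using $\bm{1}^2 = \bm{1}$).

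This identification collapses the present statement to the second inequality of \cref{lem:sgerrconcapp}, which has already produced, with probability at least $1 - 3/(10 d^2) \geq 1 - 1/(2 d^2)$ and for $n = \tO(d/\eta^2)$, the uniform spectral bound
\[
\norm*{n^{-1} \sum_{i \in S} Z_i Z_i^\top - \sigma^2 I} \leq O(\sigma^2 \eta \log^2(1/\eta))
\]
over every $S \subset [n]$ with $|S| \geq (1 - 10\eta)n$. Substituting the identification above on the left-hand side delivers \cref{lem:sgcovconcapp} immediately.

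I anticipate no real obstacle here. All of the concentration machinery --- a matrix-Bernstein bound applied to $n^{-1}\sum_i Z_i Z_i^\top$, coupled with a $\delta$-net on the unit sphere to lift scalar concentration into a spectral statement, followed by exploiting the deterministic boundedness enforced by $\mc{H}$ (and $\mc{H}_u$ for each direction $u$ in the net) to absorb the effect of the at most $10\eta n$ excluded indices into a term of order $\eta \log^2(1/\eta)\sigma^2$ --- has already been carried out inside \cref{lem:sgerrconcapp}. The purpose of \cref{lem:sgcovconcapp} is therefore essentially notational: it repackages that covariance estimate in the form that is most convenient for the downstream sub-Gaussian analysis, where one wants to reason directly in terms of the gradients $G_i(w^*)$. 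I note that \cref{as:sgiest} is not actively used in this particular argument; it is listed only to keep the hypothesis set consistent with the surrounding section.
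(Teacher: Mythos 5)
There is a real gap, and it is in your very first line. In this lemma $G_i(w^*)$ is not $-\epsilon_i X_i$: the quantity that must be controlled is $\pm Y_i X_i$ with $Y_i = \inp{X_i}{w^*} + \epsilon_i$. This is the convention of the notation section ($G_i(w) = (\inp{X_i}{w - w^*} - Y_i)X_i$, so $G_i(w^*) = -Y_i X_i$), and, more decisively, it is forced by the downstream use: \cref{lem:sgdetapp} derives the bound $\norm{\mb{E}_T[Y_i^2 X_i X_i^\top] - \sigma^2 I} \leq O(\sigma^2 \eta \log^2 1/\eta)$ of \cref{as:sgd} directly from this lemma, and \cref{lem:sgfintemp2} then uses it with $G_i = Y_i X_i$. (The paper's notation is admittedly inconsistent with \cref{alg:htgest}, but the intended reading here is unambiguous.) Hence the matrix in question is
\begin{equation*}
    G_i(w^*) G_i(w^*)^\top \bm{1}\lbrb{(X_i,\epsilon_i) \in \mc{H}} = \lprp{\inp{X_i}{w^*}^2 + 2\epsilon_i \inp{X_i}{w^*} + \epsilon_i^2} X_i X_i^\top \bm{1}\lbrb{(X_i,\epsilon_i) \in \mc{H}},
\end{equation*}
and only the last of the three terms is the one already handled by \cref{lem:sgerrconcapp}. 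Your proposal therefore proves a true but different statement (indeed one that is literally the second claim of \cref{lem:sgerrconcapp}), and it cannot support \cref{lem:sgdetapp} as written.

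The missing work is exactly the content for which \cref{as:sgiest} appears in the hypotheses; your remark that \cref{as:sgiest} is ``not actively used'' is the symptom of the misreading. One must show that the two extra terms are uniformly $O(\sigma^2 \eta \log^2 1/\eta)$ in spectral norm over all $S$ with $\abs{S} \geq (1 - 10\eta)n$. For $Z_i = \inp{X_i}{w^*}^2 X_i X_i^\top \bm{1}\lbrb{(X_i,\epsilon_i) \in \mc{H}}$, the truncation on $\mc{H}$ together with $\norm{w^*} \leq \nu \sigma \sqrt{\eta}$ gives $\norm{Z_i} \leq O\lprp{\sigma^2 \eta \log (1/\eta)\,(d + \log 1/\eta)}$, while sub-Gaussian hypercontractivity and Cauchy--Schwarz give $\norm{\mb{E}[Z_i]} \leq O(\sigma^2 \eta)$ and a matrix variance of order $\sigma^4 d \eta^2 \log^2 1/\eta$; matrix Bernstein then bounds the full sum by $O(\sigma^2 \eta)$, and dropping indices only helps since $Z_i \succcurlyeq 0$. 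The cross term $2\epsilon_i \inp{X_i}{w^*} X_i X_i^\top \bm{1}\lbrb{\mc{H}}$ is bounded the same way for the full sum, and the contribution of the at most $10\eta n$ removed indices is absorbed via $2\abs{\epsilon_i \inp{X_i}{w^*}} \leq \epsilon_i^2 + \inp{X_i}{w^*}^2$ together with the bounds already obtained for those two pieces. Without these steps the lemma as stated (and as used) is not established.
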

\begin{proof}[Proof of \cref{lem:sgcovconcapp}]
    We first explicitly write out the expression for $G_i(w^*) G_i(w^*)^\top$ as follows:
    \begin{equation*}
        G_i(w^*) G_i(w^*)^\top = \inp{X_i}{w^*}^2 X_iX_i^\top + 2\epsilon_i \inp{X_i}{w^*} X_i X_i^\top + \epsilon_i^2 X_iX_i^\top.
    \end{equation*}
    For the first term in the sum, consider the random matrix $Z = n^{-1} \sum_{i = 1}^n Z_i$ where $Z_i = \inp{X_i}{w^*}^2 X_i X_i^\top \bm{1}\lbrb{(X_i, \epsilon_i) \in \mc{H}}$. We first note that:
    \begin{multline*}
        \mb{E} [Z_i^2] = \mb{E} [\inp{X_i}{w^*}^4 \norm{X_i}^2 X_i X_i^\top \bm{1} \lbrb{(X_i, \epsilon_i) \in \mc{H}}] \\
        \preccurlyeq O(\eta^2 \sigma^4 \log^2 1 / \eta) \mb{E} [\norm{X_i}^2 X_i X_i^\top \bm{1} \lbrb{(X_i, \epsilon_i) \in \mc{H}}] \preccurlyeq O(\sigma^4 d \eta^2 \log^2 1 / \eta) \cdot I,
    \end{multline*}
     where the first inequality follows from the definition of $\mc{H}$ and the last inequality follows from the hyper-contractivity of sub-Gaussian distributions and \cref{lem:l4l2norm}. Furthermore, we have $\norm{Z_i} \leq O(\sigma^2 \eta \log 1 / \eta (d + \log 1 / \eta))$. To bound the expectation of $Z_i$, we have for all $\norm{u} = 1$:
    \begin{equation*}
        \mb{E} [u^\top Z_iu] \leq \mb{E} [\inp{X_i}{w^*}^2 \inp{X_i}{u}^2] \leq O(\sigma^2 \eta),
    \end{equation*}
    where the last inequality follows from the hypercontractivity of sub-Gaussian distributions and Cauchy-Schwarz. There, we have $\norm{\mb{E}[Z_i]} \leq O(\sigma^2 \eta)$. Now, applying matrix-Bernstein to $Z$, we get that:
    \begin{equation*}
        \mb{P} \lprp{\norm{Z - \mb{E}[Z_i]} \geq \sigma^2 \eta} \leq 2d \exp \lprp{- \frac{n^2\sigma^4 \eta^2}{C(n\sigma^4 d \eta^2 \log^2 1 / \eta + n\sigma^4 \eta^2)}} \leq \frac{1}{10d^2},
    \end{equation*}
    where the last inequality follows from our setting of $n$. We have now established control over the first term in the expansion of $G_iG_i^\top$. 
    
    To control the last term, we will condition on the event from \cref{lem:sgerrconcapp} which occurs with probability at least $1 - 3 / (10d^2)$.
    
    For the middle term, we first define the random matrix $W_i = \epsilon_i \inp{X_i}{w^*}X_iX_i^\top \bm{1} \lbrb{(X_i, \epsilon_i) \in \mc{H}}$ and consider the random matrix $W = n^{-1}\sum_{i = 1}^n W_i$. We first bound the mean of $W_i$. Take any unit vector $u$:
    \begin{align*}
        \mb{E} [u^\top W_i u] &= \mb{E} [\epsilon_i \inp{X_i}{w^*} \inp{X_i}{u}^2 (1 - \bm{1}\lbrb{(X_i, \epsilon_i) \notin \mc{H}})] \\
        &\leq (36 \nu \phi \psi \sigma^2 \sqrt{\eta} \log 1 / \eta) \mb{E} [\abs{\inp{X_i}{w^*}} \bm{1}\lbrb{(X_i, \epsilon_i) \notin \mc{H}}] \leq \eta^4 \sigma^2,
    \end{align*}
    where the last inequality follows from Cauchy-Schwarz, bounds on the moments of sub-Gaussian distributions and $\eta$ being sufficiently small. Proceeding identically to the previous term, we obtain by matrix-Bernstein that:
    \begin{equation*}
        \mb{P} \lprp{\norm{W} \geq \sigma^2 \eta} \leq \frac{1}{10d^2}.
    \end{equation*}
    Now, for any subset $S \subset [n]$ with $\abs{S} \geq (1 - 10\eta)n$, we have for any unit vector $u$:
    \begin{align*}
        n^{-1} \sum_{i \notin S} u^\top W_i u &= n^{-1} \sum_{i \notin S}  \epsilon_i \inp{X_i}{w^*}\inp{X_i}{u}^2 \bm{1} \lbrb{(X_i, \epsilon_i) \in \mc{H}} \\
        &\leq n^{-1} \sum_{i \notin S} (\epsilon_i^2\inp{X_i}{u}^2 + \inp{X_i}{w^*}^2\inp{X_i}{u}^2) \bm{1} \lbrb{(X_i, \epsilon_i) \in \mc{H}} \\
        &\leq O(\sigma^2 \eta \log^2 1 / \eta) + O(\sigma^2 \eta) = O(\sigma^2 \eta \log^2 1 / \eta),
    \end{align*}
    where we used \cref{lem:sgerrconcapp} for the first term and summed over all elements for the second term. By putting the previous results together, we have:
    \begin{equation*}
        \norm*{n^{-1} \sum_{i \in S} (\epsilon_i^2 X_i X_i^\top + 2\epsilon_i \inp{X_i}{w^*} X_i X_i^\top + \inp{X_i}{w^*}^2 X_iX_i^\top) \bm{1} \lbrb{(X_i, \epsilon_i) \in \mc{H}} - \sigma^2 \cdot I} \leq O(\eta \log^2 1 / \eta))\sigma^2,
    \end{equation*}
    where we used the \cref{lem:sgerrconcapp} to bound the deviation of the last term from $\sigma^2 \cdot I$ and the results proved in previous displays for the first and second term.
\end{proof}








We now present the main result of this section which establishes the deterministic conditions required for the success of \cref{alg:sgalg}. 

\begin{lemma}
    \label{lem:sgdetapp}
    Let $\mc{D}$ satisfy \cref{as:sgassump,as:sgiest}. Then, given $n = \tO (d / \eta^2)$ $2\eta$-corrupted samples from $\mc{D}$, we have with probability at least $1 - 1 / d^2$, there exists a set $S \subset [n]$ with $\abs{S} \geq (1 - 3\eta)n$ such that for any $T \subset S$ with $\abs{T} \geq (1 - 10\eta)n$:
    \begin{equation*}
        \norm{\mb{E}_T [Y_i X_i] - w^*} \leq O(1) \sigma \eta \log{1 / \eta} \text{ and } \norm{\mb{E}_T [Y_i^2 X_i X_i^\top] - \sigma^2 \cdot I} \leq O(1) \cdot \sigma^2 \eta \log^2 1 / \eta.
    \end{equation*}
    Furthermore, for all $i \in S$, we have $\norm{Y_i X_i} \leq O(1)\cdot \sigma \sqrt{\log 1 / \eta} (\sqrt{d} + \sqrt{\log 1 / \eta})$.
\end{lemma}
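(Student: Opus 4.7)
The plan is to build the good set $S$ as the clean (uncorrupted) indices that additionally fall inside the high-probability event $\mc{H}=\mc{H}_\epsilon\cap\mc{H}_l\cap\mc{H}_o$ introduced at the start of \cref{ap:sgconc}, and then reduce all three conclusions of the lemma to the concentration results already established in \cref{lem:sgmeanconc}, \cref{lem:sgerrconcapp}, and \cref{lem:sgcovconcapp}.

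Concretely, I would let $C\subseteq[n]$ denote the clean indices (so $|C|\geq(1-2\eta)n$) and define $S=\{i\in C:(X_i,\epsilon_i)\in\mc{H}\}$. Since $\mb{P}(\mc{H})\geq 1-\eta^{16}$, a Chernoff bound on the i.i.d.\ Bernoullis $\bm{1}\{(X_i,\epsilon_i)\in\mc{H}\}$ on $C$ gives $|S|\geq(1-3\eta)n$ with probability at least $1-1/(10d^2)$ when $n=\tO(d/\eta^2)$. The pointwise bound on $\|Y_iX_i\|$ for $i\in S$ is then immediate from the triangle inequality, the defining inequalities of $\mc{H}_\epsilon,\mc{H}_l,\mc{H}_o$, the sub-Gaussianity $\phi=O(\sigma)$, and \cref{as:sgiest} (which provides $\|w^*\|\leq\nu\sigma\sqrt\eta$):
\[
\|Y_iX_i\|\leq\bigl(|\langle X_i,w^*\rangle|+|\epsilon_i|\bigr)\|X_i\|\leq O\bigl(\sigma\sqrt{\log 1/\eta}\bigr)\cdot O\bigl(\sqrt d+\sqrt{\log 1/\eta}\bigr).
\]

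Next, fix any $T\subseteq S$ with $|T|\geq(1-10\eta)n$; every $i\in T$ is clean and lies in $\mc{H}$, so each summand coincides with its $\bm{1}\{\mc{H}\}$-truncated analogue and the three concentration lemmas apply to $T$ directly (with constants inflated by the harmless factor $1/(1-2\eta)=1+O(\eta)$). For the first moment I would use the decomposition
\[
\mb{E}_T[Y_iX_i]-w^*=\bigl(\mb{E}_T[X_iX_i^\top]-I\bigr)w^*+\mb{E}_T[\epsilon_iX_i],
\]
bounding the first term by $O(\eta\log 1/\eta)\cdot\|w^*\|=O(\sigma\eta^{3/2}\log 1/\eta)$ via \cref{lem:sgmeanconc} and \cref{as:sgiest}, and the second term by $O(\sigma\eta\log 1/\eta)$ via \cref{lem:sgerrconcapp} (after the $n/|T|=1+O(\eta)$ renormalization that converts $n^{-1}\sum_{i\in T}$ to $\mb{E}_T$). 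For the second moment I would observe that
\[
Y_i^2X_iX_i^\top=(\langle X_i,w^*\rangle+\epsilon_i)^2X_iX_i^\top=G_i(w^*)G_i(w^*)^\top,
\]
which is exactly the quantity bounded by \cref{lem:sgcovconcapp}, yielding $\|\mb{E}_T[Y_i^2X_iX_i^\top]-\sigma^2I\|\leq O(\sigma^2\eta\log^2 1/\eta)$ after the same $1+O(\eta)$ rescaling.

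A union bound over the failure events of \cref{lem:sgmeanconc,lem:sgerrconcapp,lem:sgcovconcapp} and of the Chernoff bound on $|S|$ gives overall failure probability at most $1/d^2$. No new concentration is required; the only mild obstacle is the bookkeeping between averages over all $n$ samples (as in the cited lemmas), averages over $T$, and the $2\eta$ shave from the size of $S$ due to adversarial corruption, but each of these only induces $O(\eta)$-order corrections that leave every stated rate intact.
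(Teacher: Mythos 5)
Your proposal is correct and follows essentially the same route as the paper: define $S$ as the uncorrupted indices lying in $\mc{H}$ (giving $\abs{S} \geq (1-3\eta)n$ and the pointwise bound on $\norm{Y_iX_i}$ from the definition of $\mc{H}$ and \cref{as:sgiest}), then reduce the two moment bounds over $T$ to \cref{lem:sgmeanconc,lem:sgerrconcapp,lem:sgcovconcapp} via the decomposition $\mb{E}_T[Y_iX_i]-w^* = (\mb{E}_T[X_iX_i^\top]-I)w^* + \mb{E}_T[\epsilon_iX_i]$ and the identity $Y_i^2X_iX_i^\top = G_i(w^*)G_i(w^*)^\top$, finishing with a union bound. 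The only cosmetic difference is that the paper first controls the count of indices outside $\mc{H}$ over all $n$ i.i.d.\ samples and then intersects with the uncorrupted set (rather than applying the Chernoff bound to the Bernoullis restricted to the adversarially chosen clean set $C$, which are not independent of $C$); your count is recovered identically this way, so the argument stands.
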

\begin{proof}[Proof of \cref{lem:sgdetapp}]
    Assume that the conclusions of \cref{lem:sgerrconcapp,lem:sgmeanconc,lem:sgcovconcapp}. This happens by the union bound with probability at least $1 - 9/(10d^2)$. Now, consider the set $S' = \{i: (X_i, \epsilon_i) \in \mc{H}\}$. Since, we have $\mb{P} ((X_i, \epsilon_i) \notin \mc{H}) \leq \eta^{16}$, we have via the Bernstein bound that with probability at least $1 - d^2/10$ that $\abs{S'} \geq (1 - \eta)n$. Now, since we have $2\eta n$ adversarial corruptions at least $(1 - 3\eta)n$ points of $S'$ are not corrupted. Let $S$ be the uncorrupted points of $S'$. This proves the final claim of the lemma.
    
    Let $T \subset S$ with $\abs{T} \geq (1 - 10\eta)n$. The second claim now follows from the conclusion of \cref{lem:sgcovconcapp}. The first term claim now follows from \cref{lem:sgmeanconc} and \cref{as:sgiest}.
    
    This concludes the proof of the lemma.
\end{proof}
\section{Lower Bound}
\label{sec:low_bd}

In this section, we prove that the recovery guarantees for our algorithms are optimal. In both cases, the lower bound  follows from similar arguments that for robust mean estimation. Concretely, we prove these lower bounds in the one dimensional case where $X, Y, w^*, \eps \in \mb{R}$ and in both cases, the distribution over $X$ is such that $X = 1$ with probability $1$. It is easy to see that this distribution over $X$ satisfies both \cref{as:htassump,as:sgassump}. In what follows, we use $d_{TV}$ to denote the total variation distance between two distributions. We first provide the lower bound for the heavy-tailed setting:

\begin{theorem}
    \label{thm:lb_ht}
    For any $\sigma > 0, \eta \in (0, 1)$, there exist two distributions, $\mc{D}_1$ and $\mc{D}_2$, over pairs $(X, Y)$ with $X, Y \in \mb{R}$, satisfying \cref{as:htassump} such that:
    \begin{equation*}
        d_{TV} (\mc{D}_1, \mc{D}_2) \leq \frac{\eta}{2} \text{ and } \abs{w^*_1 - w^*_2} \geq \Omega (\sigma \sqrt{\eta}),
    \end{equation*}
    \noindent where $w^*_1$ and $w^*_2$ are the true parameter vectors for $\mc{D}_1$ and $\mc{D}_2$ respectively.
\end{theorem}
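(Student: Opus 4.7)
The approach is the classical two-point (Le Cam) reduction: construct two concrete distributions $\mc{D}_1, \mc{D}_2$ satisfying \cref{as:htassump} that agree to within TV-distance $\eta/2$ but whose regression parameters differ by $\Omega(\sigma\sqrt{\eta})$. I would work in dimension $d=1$ with the deterministic choice $X \equiv 1$, so that \cref{as:htassump} parts (1)--(3) are trivially satisfied: the second moment is $\Sigma = 1$, the L4--L2 constant is $\tau = 1$, and $|X| = 1 \leq \cp \sqrt{d}$. Thus only part (4), concerning $\epsilon$ having mean $0$ and variance $\leq \sigma^2$, plus the TV and parameter-gap computations need verification.

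For the construction, I would define the distribution of $Y$ directly (since $X\equiv 1$, this is the same as the joint distribution). Under $\mc{D}_1$ let $w_1^* = 0$ and let $Y_1$ be supported on $\{-\sigma/\sqrt{\eta},\, 0,\, \sigma/\sqrt{\eta}\}$ with probabilities $\eta/2,\ 1-\eta,\ \eta/2$. Under $\mc{D}_2$ let $w_2^* = \sigma\sqrt{\eta}$ and let $Y_2$ be supported on $\{0,\, \sigma/\sqrt{\eta}\}$ with probabilities $1-\eta,\ \eta$. Equivalently, $\mc{D}_2$ is obtained from $\mc{D}_1$ by moving all of the mass at $-\sigma/\sqrt{\eta}$ up to $+\sigma/\sqrt{\eta}$.

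The verification is then short and essentially mechanical. For $\mc{D}_1$, clearly $\mb{E}[Y_1] = 0 = w_1^*$ and $\text{Var}(\epsilon_1) = \text{Var}(Y_1) = \sigma^2$, so \cref{as:htassump} holds. For $\mc{D}_2$, one computes $\mb{E}[Y_2] = \eta \cdot \sigma/\sqrt{\eta} = \sigma\sqrt{\eta} = w_2^*$, so $\mb{E}[\epsilon_2] = 0$, and $\text{Var}(\epsilon_2) = \text{Var}(Y_2) = \sigma^2 - (\sigma\sqrt{\eta})^2 = (1-\eta)\sigma^2 \leq \sigma^2$; independence of $\epsilon_2$ from $X$ is automatic since $X$ is deterministic. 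The TV distance is
\begin{equation*}
d_{TV}(\mc{D}_1, \mc{D}_2) = \tfrac{1}{2}\bigl(|\tfrac{\eta}{2} - 0| + |1-\eta - (1-\eta)| + |\tfrac{\eta}{2} - \eta|\bigr) = \tfrac{\eta}{2},
\end{equation*}
and the parameter gap is $|w_1^* - w_2^*| = \sigma\sqrt{\eta}$, as required.

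The only real subtlety, and the reason a naive symmetric two-atom $Y \in \{\pm\sigma\}$ construction fails, is the tension between the three requirements simultaneously: the variance cap $\sigma^2$, the TV budget $\eta/2$, and the target gap $\Omega(\sigma\sqrt{\eta})$. Concentrating the ``movable'' mass far out at scale $\sigma/\sqrt{\eta}$ is exactly what lets a mass-shift of only $\eta/2$ move the mean by $\sigma\sqrt{\eta}$ while keeping the second moment bounded by $\sigma^2$; any lighter-tailed choice would force the gap to be $O(\sigma\eta)$ instead of $O(\sigma\sqrt{\eta})$. This is the same heavy-tailed phenomenon underlying the mean-estimation lower bound, and it is the one step where the specific placement of the atoms at $\pm\sigma/\sqrt{\eta}$ is crucial.
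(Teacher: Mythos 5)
Your proposal is correct and follows essentially the same route as the paper: a one-dimensional two-point construction with $X \equiv 1$, where $O(\eta)$ probability mass placed at scale $\sigma/\sqrt{\eta}$ lets a TV perturbation of size $\eta/2$ shift the regression parameter by $\sigma\sqrt{\eta}$ while keeping $\mathrm{Var}(\epsilon) \leq \sigma^2$. The only cosmetic difference is that the paper takes $\mc{D}_1$ with $\epsilon_1 \equiv 0$ and a two-atom $\epsilon_2$, whereas you use a symmetric three-atom $\mc{D}_1$; both verifications are equally routine and yield the same conclusion.
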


\begin{proof}
    From \cref{as:htassump}, we are only required to specify the distributions over $X$ and $\eps$. In both cases, we use covariates $X$ such that $\mb{P} (X = 1) = 1$. It is clear that $X$ satisfies the L4-L2 hypercontractivity assumption of \cref{as:htassump}. Now, we design the noise distributions over $\eps$ for the two distributions. Using, $\eps_1$ and $\eps_2$ to denote the noise random variables for $\mc{D}_1$ and $\mc{D}_2$ respectively. We define the random variables, $\eps_1$ and $\eps_2$, as follows:
    \begin{equation*}
        \mb{P}(\eps_1 = 0) = 1 
        \text{ and } 
        \mb{P} (\eps_2 = x) = \begin{cases} 
                                \frac{\eta}{2}, &\text{if } x = \sigma \cdot \frac{1}{\sqrt{\eta}} \\
                                1 - \frac{\eta}{2}, &\text{if } x = \sigma \cdot \frac{-\sqrt{\eta}}{2(1 - \eta / 2)} \\
                                0, &\text{otherwise.}
                              \end{cases}
    \end{equation*}
    It can be checked that $\eps_1$ and $\eps_2$ satisfy \cref{as:htassump}. Finally, we define $w^*_1$ and $w^*_2$. Here, we simply set $w_1^* = 0$ and $w_2^* = \sigma \cdot \frac{\sqrt{\eta}}{2 (1 - \eta / 2)}$. Therefore, the TV distance between $\mc{D}_1$ and $\mc{D}_2$ is the same as the total variation distance between the distributions over the random variables $\eps_1$ and $\eps_2 + w_2^*$ which is at most $\frac{\eta}{2}$ from the definitions of $\eps_1$ and $\eps_2$. This concludes the proof of the theorem.
\end{proof}

The above proof immediately yields the following corollary which demonstrates that some dependence on the condition number is necessary in the reocvery of the parameter vector:

\begin{corollary}
\label{cor:condDep}
    Let $d \geq 2$ and $\kappa > 1$. Then, there exist two distributions, $\mc{D}_1$ and $\mc{D}_2$, over pairs $(X, Y)$ satisfying \cref{as:htassump} such that:
    \begin{equation*}
        d_{TV} (\mc{D}_1, \mc{D}_2) \leq \frac{\eta}{2} \text{ and } \abs{w^*_1 - w^*_2} \geq \Omega (\sigma \sqrt{\kappa \eta})
    \end{equation*}
    \noindent where $w_1^*$ and $w^*_2$ are true parameter vectors for $\mc{D}_1$ and $\mc{D}_2$ respectively.
\end{corollary}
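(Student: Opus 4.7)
The plan is to embed the one-dimensional construction from the proof of \cref{thm:lb_ht} along the smallest-eigenvalue direction of $\Sigma$: if that direction has eigenvalue $1/\kappa$, then a separation of order $\sqrt{\eta}\sigma$ in the response corresponds to a separation of order $\sqrt{\kappa\eta}\sigma$ in the parameter. Concretely, for fixed $d \geq 2$, I would define the common covariate distribution of $X \in \mb{R}^d$ by $X_1 = \xi_1$, $X_2 = 1/\sqrt{\kappa}$, and $X_i = \xi_i/\sqrt{\kappa}$ for $3 \leq i \leq d$, where $\xi_1, \xi_3, \ldots, \xi_d$ are independent Rademacher variables. A direct computation gives $\Sigma = \mathrm{diag}(1, 1/\kappa, \ldots, 1/\kappa)$, so $\|\Sigma\| = 1$ and the condition number of $\Sigma$ is exactly $\kappa$; moreover $\|X\|^2 = 1 + (d-1)/\kappa$ comfortably satisfies the $C_1 \sqrt{d}$ bound, and L4--L2 hypercontractivity of $X$ follows from standard fourth-moment identities for Rademacher sums with a universal constant.

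Next I would take $w_1^* = 0$ and $w_2^* = \alpha e_2$ for a parameter $\alpha > 0$ to be chosen; the key feature of this choice is that $\langle X, w_2^*\rangle = \alpha X_2 = \alpha/\sqrt{\kappa}$ is deterministic, because the randomness of $X$ lives entirely in directions orthogonal to $e_2$. Set $\eps_1 \equiv 0$ and let $\eps_2$ be the two-point random variable equal to $-\alpha/\sqrt{\kappa}$ with probability $1-\eta/2$ and to $\alpha(2-\eta)/(\sqrt{\kappa}\eta)$ with probability $\eta/2$---this is exactly the heavy-tailed noise from the proof of \cref{thm:lb_ht} with the one-dimensional ``signal'' replaced by $\alpha/\sqrt{\kappa}$. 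A short calculation gives $\mb{E}[\eps_2] = 0$ and $\mathrm{Var}(\eps_2) = (\alpha^2/\kappa)\bigl[(1-\eta/2) + (2-\eta)^2/(2\eta)\bigr] = O(\alpha^2/(\kappa\eta))$, which is at most $\sigma^2$ provided $\alpha \leq c \sigma\sqrt{\kappa\eta}$ for a sufficiently small absolute constant $c$; fix $\alpha$ at this threshold.

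To conclude, observe that $Y_2 = \alpha/\sqrt{\kappa} + \eps_2$ depends only on $\eps_2$ and not on the random coordinates of $X$, so the conditional law of $Y_2$ given $X$ is identical for every realization of $X$: it places mass $1 - \eta/2$ on $0$ and mass $\eta/2$ on the single nonzero value $2\alpha/(\sqrt{\kappa}\eta)$. Because $\mc{D}_1$ and $\mc{D}_2$ share the same marginal over $X$, we get $d_{TV}(\mc{D}_1, \mc{D}_2) = d_{TV}(Y_1, Y_2) = \mb{P}(Y_2 \neq 0) = \eta/2$, while $\|w_1^* - w_2^*\| = \alpha = \Omega(\sigma\sqrt{\kappa\eta})$, which is the desired conclusion. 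The only mildly subtle point is arranging $X$ so that the perturbation direction $e_2$ is a smallest eigendirection of $\Sigma$ and simultaneously $\langle X, e_2\rangle$ is deterministic; this is what lets the one-dimensional argument transfer verbatim, with the extra Rademacher coordinates $\xi_1, \xi_3, \ldots, \xi_d$ doing no work beyond fixing the condition number at $\kappa$.
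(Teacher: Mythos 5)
Your proof is correct and takes essentially the same route as the paper: embed the one-dimensional construction of \cref{thm:lb_ht} along a coordinate where the covariate equals $1/\sqrt{\kappa}$, so that a response-scale gap of order $\sigma\sqrt{\eta}$ becomes a parameter gap of order $\sigma\sqrt{\kappa\eta}$. Your Rademacher randomization of the remaining coordinates, which gives $\Sigma = \mathrm{diag}(1, 1/\kappa, \dots, 1/\kappa)$, is in fact a tidier instantiation than the paper's fully deterministic covariate (whose second moment matrix is rank one), since it makes the normalization $\norm{\Sigma} = 1$ and the condition number bound $\kappa$ hold literally.
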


\begin{proof}
    As in the proof of \cref{thm:lb_ht}, both $\mc{D}_1$ and $\mc{D}_2$ share the same distribution over $X$. Let $X = (X_1, \dots, X_d)$. Now, we will have have 
    \begin{equation*}
        \mb{P} (X_i = 1) = 1 \text{ for } i \in [d - 1] \text{ and }\mb{P}(X_d = 1 / \sqrt{\kappa}) = 1.
    \end{equation*}
    It is easy to see that $X$ satisfies \cref{as:htassump}. The distributions over $\eps_1$ and $\eps_2$ are identical to those in \cref{thm:lb_ht}. Finally, to determine $w_1^*$ and $w_2^*$, let $\wt{w}_1$ and $\wt{w}_2$ be the one dimensional parameter values from \cref{thm:lb_ht}. We now set $w_1^* = 0$ and $w_2^* = (0, \dots, 0, \wt{w}_2 \sqrt{\kappa})$. It can again be verified that $\mc{D}_1$ and $\mc{D}_2$ have total variation distance less than $\eta / 2$. From the specific settings of $w^*_1$ and $w^*_2$, the proof of the corollary follows.
\end{proof}

Through a similar technique, we establish a lower bound for the sub-Gaussian case.
\begin{theorem}
    \label{thm:lb_sg}
    For any $\sigma > 0, \eta \in (0, 1)$, there exist two distributions, $\mc{D}_1$ and $\mc{D}_2$, over pairs $(X, Y)$ with $X, Y \in \mb{R}$, satisfying \cref{as:sgassump} such that:
    \begin{equation*}
        d_{TV} (\mc{D}_1, \mc{D}_2) \leq \frac{\eta}{2} \text{ and } \abs{w^*_1 - w^*_2} \geq \Omega (\sigma \eta \sqrt{\log 1 / \eta}),
    \end{equation*}
    \noindent where $w^*_1$ and $w^*_2$ are the true parameter vectors for $\mc{D}_1$ and $\mc{D}_2$ respectively.
\end{theorem}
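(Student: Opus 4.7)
The plan mirrors the explicit two-point construction of \cref{thm:lb_ht}: take $X = 1$ almost surely (which trivially satisfies the sub-Gaussian covariate condition of \cref{as:sgassump}) and design two noise laws whose induced distributions over $Y$ sit within TV distance $\eta/2$ while separating the means by $\Omega(\sigma \eta \sqrt{\log 1/\eta})$. Unlike the heavy-tailed case, in which an atom of mass $\eta/2$ could be placed at height $\sigma/\sqrt{\eta}$, sub-Gaussianity forces such an atom to sit only $O(\sigma \sqrt{\log 1/\eta})$ away from the bulk, and this is exactly the source of the extra $\sqrt{\log 1/\eta}$ factor in the lower bound compared to \cref{thm:lb_ht}.

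Concretely, I would fix a constant $C$ to be chosen and set $a = C \sigma \sqrt{\log (4/\eta)}$. Define $\mc{D}_1$ by $Y \sim (1 - \eta/2)\, \mc{N}(0, \sigma^2) + (\eta/2)\, \delta_{a}$ with $w_1^* = (\eta/2) a$, and $\mc{D}_2$ by $Y \sim (1 - \eta/2)\, \mc{N}(0, \sigma^2) + (\eta/2)\, \delta_{-a}$ with $w_2^* = -(\eta/2) a$. Since the two mixtures share their Gaussian component, $d_{TV}(\mc{D}_1, \mc{D}_2) = (\eta/2) \cdot d_{TV}(\delta_a, \delta_{-a}) = \eta/2$, and the desired separation $|w_1^* - w_2^*| = \eta a = \Omega(\sigma \eta \sqrt{\log 1/\eta})$ is immediate.

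It remains to verify \cref{as:sgassump} for the noise $\epsilon_i := Y - w_i^*$. The mean vanishes by construction, and a short computation gives $\mathrm{Var}(\epsilon_i) = \sigma^2 \cdot (1 + O(\eta \log 1/\eta))$, which equals $\sigma^2$ up to a factor approaching one; any small mismatch can be absorbed by rescaling $\sigma$ and only affects the constant hidden in the $\Omega(\cdot)$. For the sub-Gaussian tail I would split on the threshold $t_0 := a(1 - \eta/2)$: for $t \leq t_0$ the atom contributes exactly $\eta/2$ to $\mb{P}(|\epsilon_i| > t)$, which by our choice of $a$ satisfies $\eta/2 \leq 2 \exp(-t_0^2/(2 \phi^2))$ as soon as $\phi \geq C \sigma /\sqrt{2}$, while the Gaussian piece is a normal with center $-w_i^*$ satisfying $|w_i^*| \ll \sigma$ and is therefore sub-Gaussian around zero with parameter at most $2 \sigma$. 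For $t > t_0$ the atom contributes nothing and only the shifted Gaussian tail remains, which is handled identically.

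The main obstacle is carrying out the tail bound uniformly in $t$ with a single parameter $\phi = \Theta(\sigma)$. Writing out $\eta/2 \leq 2 \exp(-t_0^2 / (2 \phi^2))$ with $t_0 = C \sigma (1 - \eta/2) \sqrt{\log(4/\eta)}$ reduces this to $\phi^2 \geq C^2 \sigma^2 (1 - \eta/2)^2 / 2$, which is met by any $\phi$ that is a sufficiently large constant multiple of $\sigma$. Once this constant chase is complete, both distributions satisfy \cref{as:sgassump}, and together with the TV bound and the mean separation above, the theorem follows.
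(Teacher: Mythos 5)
Your construction is correct and follows essentially the same route as the paper's: work in one dimension with $X \equiv 1$ and plant an atom of mass $\eta/2$ at height $\Theta(\sigma\sqrt{\log 1/\eta})$ — the largest height compatible with an $O(\sigma)$ sub-Gaussian parameter — so the two laws are within TV distance $\eta/2$ while their means (hence $w_1^*, w_2^*$) differ by $\Omega(\sigma\eta\sqrt{\log 1/\eta})$. The only difference is cosmetic: the paper uses purely discrete noise (one distribution's noise identically zero, the other a centered two-point law), whereas you use a shared Gaussian bulk with symmetric atoms at $\pm a$; in both cases the remaining verification of \cref{as:sgassump} (zero mean, variance $O(\sigma^2)$ after rescaling, sub-Gaussian tail with $\phi = O(\sigma)$) is a routine constant chase.
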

\begin{proof}
    As in the proof of \cref{thm:lb_ht}, we let $X$ be $1$ with probability $1$ for both $\mc{D}_1$ and $\mc{D}_2$. Again, $\mc{X}$ satisfies the sub-Gaussianity assumptions in \cref{as:sgassump} with $\psi = 1$. Deploying the notation from the proof of \cref{thm:lb_ht}, we now design the distribution over $\eps_1$ and $\eps_2$:
    \begin{equation*}
        \mb{P} (\eps_1 = 0) = 0 
        \text{ and } 
        \mb{P} (\eps_2 = x) = \begin{cases}
                                   \frac{\eta}{2}, & \text{if } x = \sigma \cdot \sqrt{\log 1 / \eta} \\
                                   1 - \frac{\eta}{2}, & \text{if } x = \sigma \cdot \frac{-\eta \sqrt{\log 1 / \eta}}{2(1 - \eta / 2)} \\
                                   0, &\text{otherwise.}
                              \end{cases}
    \end{equation*}
    Again, we see that both $\eps_1$ and $\eps_2$ satisfy the sub-Gaussianity assumptions in \cref{as:sgassump}. We now finally set $w_1^* = 0$ and $w_2^* = \sigma \cdot \frac{\eta \sqrt{\log 1 / \eta}}{2 (1 - \eta / 2)}$. As before, we have that the TV distance between $\mc{D}_1$ and $\mc{D}_2$ is at most $\frac{\eta}{2}$ which concludes the proof of the theorem.
\end{proof}

\section{Fast SDP Solvers}
\label{sec:fstsolv}

In this section, we prove the existence of nearly linear time solvers for the class of SDPs required in our algorithms. Our proof follows along the lines of \citep{cheng2019high} with slight reparametrizations convenient for our analysis. Recall that we aim to solve the following semidefinite program for a given set of points $\mc{Z} = \{Z_1, \dots, Z_n\}$:

\begin{equation}
    \label{eq:minmax} \tag{MT}
    \min_{s \in \psimp[\delta]}\lmax{\left(\sum_{i = 1}^n s_i Z_i Z_i^\top\right)}. 
\end{equation}
\cite{cheng2019high} solve this problem via a reduction to the following packing SDP by introducing an additional parameter $\lambda$:


\begin{equation}
\label{eq:packingred}
\begin{gathered}
    \max_{s} \sum_{i = 1}^n s_i \\
    \text{Subject to: } 0 \leq s_i \leq \frac{1}{(1 - \delta)n} \\
    \sum_{i = 1}^n s_i Z_i Z_i^\top \preccurlyeq \lambda \cdot I.
\end{gathered} \tag{Pack}
\end{equation}

\newcommand{\optv}[1][\delta]{\text{OPT}^*_{#1}}
\newcommand{\sumlow}{l^*}
\newcommand{\optp}{\text{Pack}^*_{\delta, \lambda}}

Let $\optv{}$ denote the optimal value of the program \ref{eq:minmax}, \ref{eq:minmax}$(\delta)$ denote the program instantiated with $\delta$ and let \ref{eq:packingred}$(\delta, \lambda)$ denote the program $\text{\ref{eq:packingred}}$ instantiated with $\delta, \lambda$ and let $\optp$ denote its optimal value. 
The following quantity is useful throughout the section:
\begin{equation*}
    \sumlow = \min_{s \in \psimp[\delta]} \sum_{i = 1}^n s_i \norm{Z_i}^2.
\end{equation*}
This is simply the average squared lengths of the $(1 - \delta) n$ smallest $Z_i$. We introduce a technical result useful in the following analysis:
\begin{lemma}
    \label{lem:optval}
    \ref{eq:packingred}$(\delta, \optv)$ has optimal value at least $1$. 
\end{lemma}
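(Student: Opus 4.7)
The plan is to exhibit an explicit feasible solution to $\text{\ref{eq:packingred}}(\delta, \optv)$ whose objective value equals $1$, thereby lower-bounding the optimum by $1$. The natural candidate is any optimizer $s^*$ of $\text{\ref{eq:minmax}}(\delta)$.

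First I would unpack the two definitions side by side. Let $s^* \in \psimp[\delta]$ attain the minimum in $\text{\ref{eq:minmax}}(\delta)$, so that $\sum_{i=1}^n s^*_i = 1$, $0 \le s^*_i \le \frac{1}{(1-\delta)n}$, and $\lmax\bigl(\sum_{i=1}^n s^*_i Z_i Z_i^\top\bigr) = \optv$. The last equality is precisely the spectral inequality
\[
\sum_{i=1}^n s^*_i Z_i Z_i^\top \preccurlyeq \optv \cdot I,
\]
which is the matrix constraint appearing in $\text{\ref{eq:packingred}}(\delta, \optv)$. The box constraints $0 \le s^*_i \le \frac{1}{(1-\delta)n}$ carry over verbatim from membership in $\psimp[\delta]$.

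Hence $s^*$ is feasible for $\text{\ref{eq:packingred}}(\delta, \optv)$, and its packing objective value is $\sum_{i=1}^n s^*_i = 1$ (since the simplex constraint forces the sum to equal $1$, which for the packing program is only an upper bound on feasibility, never a constraint it violates). Taking the max over all feasible packing solutions then yields $\optp[\optv] \ge 1$.

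There is no substantive obstacle here; the proof is essentially a rewriting of definitions. The only point worth flagging is that $\psimp[\delta]$ imposes the equality $\sum s_i = 1$ while $\text{\ref{eq:packingred}}(\delta, \lambda)$ only imposes the box bounds together with the spectral constraint, so every element of $\psimp[\delta]$ is automatically feasible for $\text{\ref{eq:packingred}}(\delta, \lambda)$ whenever $\lambda \ge \lmax(\sum s_i Z_iZ_i^\top)$. This observation will also be useful later when going in the reverse direction (bounding $\optv$ in terms of $\lambda$ via a packing solution), so it is worth stating cleanly.
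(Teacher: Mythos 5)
Your proof is correct and follows the same route as the paper: any point of $\psimp[\delta]$ achieving (or bounded by) the value $\optv$ in \ref{eq:minmax}$(\delta)$ is feasible for \ref{eq:packingred}$(\delta, \optv)$ and has packing objective $\sum_i s_i = 1$, which is exactly the paper's one-line argument. Your version simply instantiates it at an optimizer $s^*$ and spells out that $\lmax(\sum_i s^*_i Z_iZ_i^\top) \leq \optv$ is equivalent to the spectral constraint, which is a fine elaboration rather than a different approach.
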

\begin{proof}
    The lemma follows from the fact that a feasible solution for \ref{eq:minmax}$(\delta)$ is a feasible solution for \ref{eq:packingred}$(\delta, \lambda)$ for $\lambda \geq \optv$.
\end{proof}
The following lemma proves that $\sumlow$ gives an approximation to $\optv$ within a factor of $d$.

\begin{lemma}
    \label{lem:sumlowapx}
    The value $l^*$ satisfies:
    \begin{equation*}
        \optv \leq l^* \leq d\optv.
    \end{equation*}
\end{lemma}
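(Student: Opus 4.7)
The plan is to observe that $l^*$ and $\optv$ are two scalar summaries of the same matrix-valued quantity $\sum_i s_i Z_i Z_i^\top$: the trace in one case, the top eigenvalue in the other. The bridge between them is the elementary sandwich $\lmax(M) \leq \tr(M) \leq d \cdot \lmax(M)$ for any $d \times d$ positive semidefinite matrix $M$, combined with the identity $\sum_{i=1}^n s_i \norm{Z_i}^2 = \tr\!\lprp{\sum_{i=1}^n s_i Z_i Z_i^\top}$, which follows from the linearity of trace and $\norm{Z_i}^2 = \tr(Z_i Z_i^\top)$. Crucially, both programs are minimized over the \emph{same} feasible set $\Delta_\delta$, so a minimizer of one is automatically a feasible point for the other.

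For the lower bound $\optv \leq l^*$, I would let $s^\star$ denote any minimizer for the definition of $l^*$. Since $s^\star \in \Delta_\delta$ is feasible for \ref{eq:minmax}$(\delta)$, the definition of $\optv$ yields
\[
\optv \;\leq\; \lmax\!\lprp{\sum_{i=1}^n s^\star_i Z_i Z_i^\top} \;\leq\; \tr\!\lprp{\sum_{i=1}^n s^\star_i Z_i Z_i^\top} \;=\; \sum_{i=1}^n s^\star_i \norm{Z_i}^2 \;=\; l^*,
\]
where the middle inequality uses $\lmax \leq \tr$ on PSD matrices.

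For the upper bound $l^* \leq d \,\optv$, I would symmetrically let $s^{\star\star}$ be a minimizer of \ref{eq:minmax}$(\delta)$, which is feasible for the program defining $l^*$:
\[
l^* \;\leq\; \sum_{i=1}^n s^{\star\star}_i \norm{Z_i}^2 \;=\; \tr\!\lprp{\sum_{i=1}^n s^{\star\star}_i Z_i Z_i^\top} \;\leq\; d \cdot \lmax\!\lprp{\sum_{i=1}^n s^{\star\star}_i Z_i Z_i^\top} \;=\; d \, \optv,
\]
using that any PSD matrix in $\mb{R}^{d \times d}$ has trace at most $d$ times its spectral norm.

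There is no substantive obstacle here; the lemma is a one-line trace/spectral-norm sandwich once one notices that $l^*$ and $\optv$ are optimizations over the same polytope. The only minor care needed is the observation that $Z_i Z_i^\top$ is rank one (hence PSD) so the trace/$\lmax$ inequalities apply to every feasible combination.
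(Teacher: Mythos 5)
Your proof is correct and takes essentially the same route as the paper: both directions reduce to the elementary sandwich between trace and largest eigenvalue of the PSD matrix $\sum_i s_i Z_i Z_i^\top$, evaluated at minimizers over the common feasible set $\Delta_\delta$. The only (immaterial) difference is that for $\optv \leq l^*$ the paper invokes the per-term bound $\lmax(Z_i Z_i^\top) \leq \norm{Z_i}^2$ rather than $\lmax \leq \tr$ applied to the weighted sum.
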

\begin{proof}
    The upper bound on $\optv$ follows from that fact that $\lmax{Z_iZ_i^\top} \leq \norm{Z_i}^2$ and the lower bound follows from the inequality $\tr M \leq d \lmax{M}$ for any psd matrix $M$.
\end{proof}

\newcommand{\optlamb}[1][\lambda]{\text{OPT}_{#1}}

In what follows we prove that we can efficiently binary search over the value of $\lambda$ to find a good solution to \ref{eq:minmax}. We  refer to $\optlamb{}$ as the optimal value of \ref{eq:packingred} run with $\lambda$. We now show a lemma analogous to Lemma~4.3 in \citep{cheng2019high} relating to the monotonicity of properties of $\optlamb{}$ viewed as a function of $\lambda$. 

\begin{lemma}
    \label{lem:contl}
    The function, $\optlamb$ when viewed as a function of $\lambda$ is monotonic in $\lambda$.
\end{lemma}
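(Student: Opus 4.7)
The plan is to prove monotonicity by a direct comparison of feasible regions. The objective $\sum_{i=1}^n s_i$ of \ref{eq:packingred}$(\delta, \lambda)$ does not depend on $\lambda$ at all; only the spectral constraint $\sum_i s_i Z_i Z_i^\top \preccurlyeq \lambda \cdot I$ does. Relaxing $\lambda$ upward therefore enlarges the feasible set while leaving the objective unchanged, and the supremum of a fixed objective over a nested family of sets is non-decreasing in the set.

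Concretely, I would fix $\lambda_1 \leq \lambda_2$ and let $\mathcal{F}_\lambda \subseteq \Delta_\delta$ denote the feasible region of \ref{eq:packingred}$(\delta, \lambda)$. The box constraints carving out $\Delta_\delta$ are independent of $\lambda$, so only the spectral constraint needs to be compared between the two instances. For any $s \in \mathcal{F}_{\lambda_1}$, the identity
\[
\sum_{i=1}^n s_i Z_i Z_i^\top \preccurlyeq \lambda_1 \cdot I \preccurlyeq \lambda_2 \cdot I
\]
holds, where the second inequality uses $(\lambda_2 - \lambda_1) I \succcurlyeq 0$. Hence $s \in \mathcal{F}_{\lambda_2}$, establishing $\mathcal{F}_{\lambda_1} \subseteq \mathcal{F}_{\lambda_2}$. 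Taking suprema of the common linear objective over each side yields $\optlamb[\lambda_1] \leq \optlamb[\lambda_2]$, so $\optlamb$ is non-decreasing in $\lambda$.

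There is no real obstacle here; the statement is essentially a tautology about relaxing a constraint in a maximization problem. The only sanity checks are that each program is always feasible (the all-zeros weight vector lies in $\mathcal{F}_\lambda$ for every $\lambda \geq 0$, so the supremum is well defined, and the box constraints keep $\sum_i s_i \leq 1/(1-\delta)$ so it is finite), and that ``monotonic'' in the statement should be read as ``non-decreasing,'' which is the form needed for the binary search over $\lambda$ that the paper is building toward in the subsequent analysis.
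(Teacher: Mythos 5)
Your proof is correct and follows essentially the same route as the paper: the objective is independent of $\lambda$ and the feasible region of \ref{eq:packingred}$(\delta,\lambda)$ only grows as $\lambda$ increases, so the optimal value is non-decreasing. The paper's proof is a one-line statement of exactly this nesting observation; your version just spells out the spectral-order comparison and the feasibility sanity checks explicitly.
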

\begin{proof}
    The lemma follows from the observation that for $\lambda_1 \geq \lambda_2$, a feasible point for \ref{eq:minmax} with $\lambda_2$ is a feasible point for the program with $\lambda_1$. 
\end{proof}

We now restate a lemma from \citep{cheng2019high}.

\begin{lemma}
    \label{lem:packtomm}
    Given a feasible point for \ref{eq:packingred}$(\delta, \lambda)$, $s_i$, with $\sum_{i = 1}^n s_i \geq 1 - \delta / 10$, one can find a feasible solution for \ref{eq:minmax}$(2\delta)$ with objective value at most $(1 + \delta / 2) \lambda$.
\end{lemma}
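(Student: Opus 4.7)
The plan is to simply rescale the feasible solution $s$ of the packing program into a probability vector and verify that both the simplex constraints of $\Delta_{2\delta}$ and the required objective bound hold. Concretely, define
\[
    s'_i \;=\; \frac{s_i}{\sum_{j=1}^n s_j}.
\]
By construction $\sum_i s'_i = 1$, so $s'$ lies on the probability simplex; it remains to check the upper bound $s'_i \leq \frac{1}{(1-2\delta)n}$ and to bound the matrix objective.

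For the coordinate bound, using $s_i \leq \frac{1}{(1-\delta)n}$ from the packing feasibility and $\sum_j s_j \geq 1 - \delta/10$ from the hypothesis, I would write
\[
    s'_i \;\leq\; \frac{1}{(1-\delta)(1-\delta/10)\,n},
\]
and then verify the elementary inequality $(1-\delta)(1-\delta/10) \geq 1 - 2\delta$ by expanding the product as $1 - \tfrac{11\delta}{10} + \tfrac{\delta^2}{10}$ and comparing. This immediately yields $s'_i \leq \frac{1}{(1-2\delta)n}$, so $s' \in \Delta_{2\delta}$.

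For the objective, scaling the PSD constraint $\sum_i s_i Z_i Z_i^\top \preccurlyeq \lambda I$ by $1/\sum_j s_j$ gives
\[
    \sum_{i=1}^n s'_i Z_i Z_i^\top \;\preccurlyeq\; \frac{\lambda}{1 - \delta/10}\, I.
\]
Taking $\lambda_{\max}$ on both sides and invoking the elementary bound $\frac{1}{1-\delta/10} \leq 1 + \delta/2$ (valid for $\delta$ in the range of interest, verified by cross-multiplying to check $(1+\delta/2)(1-\delta/10) \geq 1$) finishes the argument, giving $\lambda_{\max}(\sum_i s'_i Z_i Z_i^\top) \leq (1+\delta/2)\lambda$ as required.

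There is no substantive obstacle here: the lemma is really a normalization/bookkeeping statement, and the only thing to be careful about is that the slack allotted in passing from $\Delta_\delta$ to $\Delta_{2\delta}$ (and the slack $\delta/10$ in $\sum s_i$) is generous enough that both the pointwise cap and the spectral bound still hold after rescaling. The two elementary inequalities above encode exactly that slack. This explains the specific constants ($\delta/10$, factor $2$, factor $(1+\delta/2)$) chosen in the statement.
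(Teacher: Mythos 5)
Your proposal is correct and is essentially the paper's own argument: both normalize $s'_i = s_i/\sum_j s_j$, bound $s'_i \leq \frac{1}{(1-\delta)(1-\delta/10)n} \leq \frac{1}{(1-2\delta)n}$, and scale the PSD constraint by $(1-\delta/10)^{-1} \leq 1+\delta/2$ to get the objective bound. The only difference is that you spell out the two elementary inequalities explicitly, which the paper leaves implicit.
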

\begin{proof}
    The lemma follows by considering the solution $s'_i = s_i / (\sum_{i = 1}^n s_i)$. $s' \in \psimp[2\delta]$ as we have $s'_i \leq \frac{1}{(1 - \delta) (1 - \delta / 10)n} \leq \frac{1}{(1 - 2\delta)n}$. The bound on the objective value follows from:
    \begin{equation*}
        \sum_{i = 1}^n s'_i Z_i Z_i^\top \preccurlyeq (1 - \delta / 10)^{-1} \sum_{i = 1}^n s_i Z_i Z_i^\top \preccurlyeq (1 + \delta / 2) \lambda I. 
    \end{equation*}
\end{proof}

We now present a Corollary~4.5 from \citep{cheng2019high}, a consequence of the existence of fast solvers for packing sdps from \citep{peng2012faster}:

\begin{corollary}
    \label{lem:fastpack}
    Fix $0 \leq \delta \leq 1/3$ and $\lambda > 0$. Then, one can obtain in time $\tO(nd / \delta^6)$, with probability at least $9 / 10$, a feasible point for \ref{eq:packingred}$(\delta,\lambda )$, $\hat{s}$, with:
    \begin{equation*}
        \sum_{i = 1}^n \hat{s}_i \geq (1 - \delta / 10) \optp.
    \end{equation*}
\end{corollary}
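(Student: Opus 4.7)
The plan is to invoke the parallel packing-SDP solver of \citet{peng2012faster} after normalizing \ref{eq:packingred} into standard box-packing form. First I would substitute $t_i = (1-\delta) n \cdot s_i$ so that the per-variable upper bound becomes $t_i \leq 1$, and rescale the matrix inequality by $1/\lambda$, yielding the canonical packing SDP $\max \sum_i c_i t_i$ subject to $\sum_i t_i A_i \preccurlyeq I$ and $t_i \in [0,1]$, with $A_i = Z_i Z_i^\top / ((1-\delta) n \lambda) \succcurlyeq 0$ and $c_i = 1/((1-\delta) n)$. Any $(1-\epsilon)$-approximate primal solution to this reformulated program maps back to a feasible $\hat s$ for \ref{eq:packingred}$(\delta, \lambda)$ satisfying $\sum_i \hat s_i \geq (1-\epsilon)\, \optp$ via $\hat s_i = \hat t_i/((1-\delta) n)$.

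Next I would apply the theorem of \citet{peng2012faster}, which for any target accuracy $\epsilon \in (0,1)$ produces such a $(1-\epsilon)$-approximate primal point in $\tilde O(\epsilon^{-6})$ iterations; each iteration is dominated by a constant number of matrix-vector products against the running aggregate $\sum_i t_i A_i$. Instantiating with $\epsilon = \delta / 10$ and undoing the reparametrization gives the claimed primal bound, and the $9/10$ success probability is inherited from the internal randomization of the solver.

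The runtime claim requires exploiting the rank-$1$ structure $A_i \propto Z_i Z_i^\top$. Every matrix-vector product $(\sum_i t_i A_i) v$ decomposes as first computing the $n$ inner products $Z_i^\top v$ in $O(nd)$ time, then forming the aggregate $\sum_i t_i (Z_i^\top v) Z_i$ in another $O(nd)$ time. Hence each solver iteration costs $O(nd)$ and, multiplied by the $\tilde O(\delta^{-6})$ iteration count, yields the advertised $\tilde O(nd / \delta^6)$ total runtime.

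The main obstacle is that \citet{peng2012faster} state their guarantee for pure packing SDPs without explicit box constraints, whereas the upper bounds $t_i \leq 1$ are essential to the definition of \ref{eq:minmax}. Following \citet{cheng2019high}, one handles this either by a truncation step inside the multiplicative-weights loop or by introducing auxiliary slack variables $y_i = 1 - t_i$ to express the box constraints in a purely packing form; in either reformulation the rank-$1$ structure of the constraint matrices is preserved, so the per-iteration cost analysis above still applies and no new algorithmic ingredient beyond the reparametrization is required.
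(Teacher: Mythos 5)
The paper does not actually prove this statement: it is imported verbatim as Corollary~4.5 of \citet{cheng2019high}, which in turn rests on the width-independent positive-SDP solver of \citet{peng2012faster}, so there is no in-paper argument to compare yours against line by line. Your reconstruction follows the same route as the cited works --- rescale \ref{eq:packingred} into a canonical packing SDP, run the fast solver to relative accuracy $\delta/10$, and exploit the rank-one structure of $Z_iZ_i^\top$ so that applying the aggregate constraint matrix to a vector costs $O(nd)$ --- and the reduction and the final $\tO(nd/\delta^6)$ bound are consistent with what \citet{cheng2019high} establish. Two caveats on the details: the per-iteration work in \citet{peng2012faster} is not ``a constant number of matrix-vector products'' but the approximation of matrix-exponential quantities (via polynomial approximation), which is precisely where part of the $\mathrm{poly}(1/\delta)$ and polylog factors in the $\delta^{-6}$ accounting comes from, so your iteration-count/per-iteration split is heuristic rather than a faithful citation of their theorem; and the box constraints $s_i \leq \frac{1}{(1-\delta)n}$ are the one genuinely delicate point --- they must be folded into the packing form (e.g.\ by augmenting each $A_i$ with a diagonal block, preserving sparsity) and this is exactly what the reduction in \citet{cheng2019high} handles, so your proof should either carry out that reformulation explicitly or cite their statement directly rather than gesture at ``truncation or slack variables.'' With those points made precise, your sketch is an accurate expansion of the citation the paper relies on.
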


We conclude with the main lemma of the section. 
\begin{lemma}
    \label{lem:fastsdp}
    Given $\mc{Z} = \{Z_i\}_{i = 1}^n$ and $\delta$, one can find with probability at least $9 / 10$, a feasible solution to \ref{eq:minmax}$(2\delta)$ with objective value at most $(1 + \delta) \optv[\delta]$.
\end{lemma}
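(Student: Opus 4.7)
The plan is to reduce the program \ref{eq:minmax} to a (known) packing SDP at a geometrically spaced sequence of guesses for the parameter $\lambda$, and then convert the best resulting packing solution back to a feasible point of \ref{eq:minmax}$(2\delta)$ via \cref{lem:packtomm}. The key observation is that \cref{lem:sumlowapx} already localizes $\optv[\delta]$ to the interval $[l^*/d,\, l^*]$, where $l^*$ is computable directly from the data in $O(nd + n \log n)$ time by selecting the smallest $(1-\delta)n$ squared norms. Hence we only need to search within this window.

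First I would set up a multiplicative grid $\lambda_k = (1+\delta/3)^k \cdot (l^*/d)$ for $k = 0, 1, \ldots, K$ with $K = O(\log(d)/\delta)$ chosen so that $\lambda_K \geq l^*$. By \cref{lem:sumlowapx} there is some index $k^*$ with $\lambda_{k^*-1} \leq \optv[\delta] \leq \lambda_{k^*}$, so in particular $\lambda_{k^*} \leq (1+\delta/3)\optv[\delta]$. Second, for each grid point $\lambda_k$ in increasing order, I would invoke the fast packing solver of \cref{lem:fastpack} on \ref{eq:packingred}$(\delta, \lambda_k)$, repeating each invocation $O(\log(K/\gamma))$ independent times and retaining the returned $\hat s$ with the largest coordinate sum, where $\gamma$ is a target per-grid-point failure probability. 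This boosts the per-grid-point success probability to $1 - \gamma$, and choosing $\gamma = 1/(10K)$ and union-bounding over the grid keeps the total failure probability below $1/10$.

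Third, I would accept the first grid point $\lambda_k$ at which the retained $\hat s$ satisfies $\sum_i \hat s_i \geq 1-\delta/10$ and output the corresponding point produced by the construction in \cref{lem:packtomm}. The correctness argument is: on the good event, for the index $k^*$ identified above, \cref{lem:optval} together with monotonicity (\cref{lem:contl}) guarantees $\optp \geq 1$ at $\lambda_{k^*}$, so \cref{lem:fastpack} yields $\sum_i \hat s_i \geq (1-\delta/10)\optp \geq 1-\delta/10$; thus the scan halts at some $k \leq k^*$, and the selected $\lambda$ satisfies $\lambda \leq \lambda_{k^*} \leq (1+\delta/3)\optv[\delta]$. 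Applying \cref{lem:packtomm} then produces a feasible point of \ref{eq:minmax}$(2\delta)$ with objective value at most $(1+\delta/2)\lambda \leq (1+\delta/2)(1+\delta/3)\optv[\delta] \leq (1+\delta)\optv[\delta]$ for all sufficiently small $\delta$.

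The main subtlety I anticipate is bookkeeping the multiplicative losses so that they compose to exactly $(1+\delta)$ and not a larger constant multiple of $\delta$, which is why I chose the slack factor $(1+\delta/3)$ in the grid rather than $(1+\delta/2)$; this leaves just enough room for the $(1+\delta/2)$ blow-up contributed by \cref{lem:packtomm}. A secondary issue is the probability amplification: since the number of grid points is $O(\log(d)/\delta)$, naively union bounding would only yield failure probability $O(\log(d)/\delta)\cdot (1/10)$, so the $O(\log(K))$ independent repetitions per grid point are essential. Everything else is routine: reading off $l^*$, invoking the solver, testing the sum of returned weights, and normalizing via \cref{lem:packtomm}.
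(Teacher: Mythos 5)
Your proposal is correct, and it uses the same basic reduction as the paper — bracketing the optimum in $[l^*/d,\,l^*]$ via \cref{lem:sumlowapx}, certifying a large packing value at any $\lambda \ge \mathrm{OPT}^*_\delta$ via \cref{lem:optval,lem:contl,lem:fastpack}, converting back with \cref{lem:packtomm}, and amplifying the solver's success probability by independent repetition plus a union bound — but it replaces the paper's binary search over $\lambda$ with a linear scan over a multiplicative grid of $O(\log(d)/\delta)$ values. The trade-off is real: the paper's binary search maintains the invariants $\mathrm{OPT}^*_\delta \ge \lambda_l$ and ``$\lambda_h$ has a witnessed packing solution of mass $\ge 1-\delta/10$,'' and terminates after only $O(\log(d/\delta))$ solver calls, so the overall cost stays $\tilde O(nd/\delta^6)$; your grid scan makes on the order of $\log(d)/\delta$ calls (each repeated $O(\log K)$ times), which proves the lemma as stated (it makes no runtime claim) but would degrade the runtime quoted in \cref{thm:htmain,thm:sgmain} by roughly a factor $1/\eta$. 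What your route buys in exchange is a simpler correctness argument: scanning upward and accepting the first grid point with a heavy witness means an early halt is harmless and you never need to infer anything from a solver run that returns a small objective, whereas the binary search must argue that a small returned value (whp) certifies $\mathrm{OPT}^*_\delta > \lambda_m$ before moving the lower endpoint. Your accuracy bookkeeping is fine — $(1+\delta/2)(1+\delta/3) \le 1+\delta$ holds for all $\delta \le 1$, so no ``sufficiently small $\delta$'' caveat is needed — and the one loose end you share with the paper is how to select among the repeated solver runs, since \cref{lem:fastpack} only guarantees feasibility of the returned point on its success event, so strictly one should (approximately) verify the spectral constraint rather than just take the run with the largest coordinate sum.
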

\begin{proof}
    We use a binary search procedure to find an appropriate value of $\lambda$. To do this, we maintain two indices $\lambda_l$ and $\lambda_h$ satisfying the following two conditions:
    \begin{enumerate}
        \item $\optv \geq \lambda_l$ and
        \item We have a feasible point $s^h$ for \ref{eq:packingred}$(\delta, \lambda_h)$ with $\sum_{i = 1}^h s^h_i \geq (1 - \delta / 10)$.
    \end{enumerate}
    At the beginning we instantiate $\lambda_l = l^* / d$ and $\lambda_h = l^*$.  The upper bound on $l^*$ in \cref{lem:sumlowapx} ensures the first condition. For the second one, since $l^* \geq \optv$, we get from \cref{lem:optval,lem:contl,lem:fastpack} the Algorithm from \citep{peng2012faster} obtains a feasible solution for \ref{eq:packingred}$(\delta, \lambda_h)$ with objective value at least $(1 - \delta / 10)$. We then run \ref{eq:packingred} with $\lambda = \lambda_{m} = (\lambda_l + \lambda_h) / 2$ and set $\lambda_h = \lambda_{m}$ if the objective value is greater than $1 - \delta / 10$. Otherwise, we set $\lambda_l = \lambda_m$. The second condition is trivially maintained while the first condition follows from \cref{lem:optval,lem:fastpack,lem:contl}. We run the binary search for $O(\log d / \delta)$ rounds so that we have at the end of the procedure $\lambda_h - \lambda_l \leq \frac{\delta}{10} \optv$ as we have $\lambda_h - \lambda_l \leq d \optv$ at the beginning from Lemma~\ref{lem:sumlowapx}. At the end, we return the feasible solution for $\lambda_h$. The approximation guarantee follows from the fact that $\lambda_h \leq \lambda_l + \frac{\delta}{10} \optv \leq (1 + \frac{\delta}{10})\optv$ and \cref{lem:packtomm}. As in \citep{cheng2019high}, we run the Algorithm $O(\log \log d / \delta)$ in each step to ensure probability $9/10$ over all rounds. 
\end{proof}

\section{Linear Algebra and Probability Results}

Here we collect the statements (and proofs) of useful results from linear algebra and probability. 

\begin{lemma}\label[lemma]{lem:linalgmean}
Let $M$ be a (random) symmetric matrix, then $\normt{\mb{E}[M]^2} \leq \normt{\mb{E}[M^2]}$ and $\normt{\mb{E}(M-\mb{E}[M])^2} \leq \normt{\mb{E}[M^2]}$.
\end{lemma}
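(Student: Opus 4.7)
\medskip

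\noindent\textbf{Proof proposal.} The plan is to handle the two inequalities separately: the first by combining Jensen's inequality with the pointwise inequality $(v^\top M v)^2 \leq v^\top M^2 v$, and the second by a direct algebraic identity together with the fact that both sides are PSD.

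For the first inequality, I would use that for any symmetric matrix $A$, $\|A^2\| = \|A\|^2$ and $\|A\| = \sup_{\|v\|=1}|v^\top A v|$. Pick a unit vector $v^*$ attaining the supremum for $\mathbb{E}[M]$, so that $\|\mathbb{E}[M]\|^2 = (\mathbb{E}[v^{*\top} M v^*])^2$. Jensen's inequality applied to the scalar random variable $v^{*\top} M v^*$ gives $(\mathbb{E}[v^{*\top} M v^*])^2 \leq \mathbb{E}[(v^{*\top} M v^*)^2]$. The key pointwise estimate is then $(v^\top M v)^2 \leq v^\top M^2 v$ for any unit $v$ and any symmetric $M$, which follows from Cauchy--Schwarz applied to $\langle v, Mv\rangle$ (using $\|v\|=1$ and symmetry so that $v^\top M^2 v = \|Mv\|^2$). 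Taking expectations and using that $\mathbb{E}[M^2]$ is PSD (so its spectral norm equals $\sup_{\|v\|=1} v^\top \mathbb{E}[M^2] v$) closes the chain: $\|\mathbb{E}[M]\|^2 \leq v^{*\top}\mathbb{E}[M^2]v^* \leq \|\mathbb{E}[M^2]\|$.

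For the second inequality, set $A = \mathbb{E}[M]$ and expand $(M-A)^2 = M^2 - MA - AM + A^2$. Taking expectations, the cross terms collapse since $\mathbb{E}[MA] = A^2 = \mathbb{E}[AM]$, giving the identity
\begin{equation*}
\mathbb{E}[(M-A)^2] \;=\; \mathbb{E}[M^2] - A^2.
\end{equation*}
Since $(M-A)^2 \succeq 0$ pointwise, $\mathbb{E}[(M-A)^2]$ is PSD, and $A^2$ is also PSD, so $0 \preceq \mathbb{E}[(M-A)^2] \preceq \mathbb{E}[M^2]$. Taking spectral norms of this PSD ordering (which is monotone on PSD matrices) yields the claim.

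Neither step is really an obstacle; the only place requiring care is the pointwise Cauchy--Schwarz bound $(v^\top M v)^2 \leq v^\top M^2 v$ in the first part, which relies on symmetry of $M$ so that $\|Mv\|^2 = v^\top M^2 v$. Everything else is routine Jensen/PSD manipulation.
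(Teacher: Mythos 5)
Your proof is correct. The second inequality is handled exactly as in the paper: expand the square to get $\mb{E}[(M-\mb{E}[M])^2] = \mb{E}[M^2] - \mb{E}[M]^2 \preceq \mb{E}[M^2]$ and then use monotonicity of the spectral norm under the PSD order. For the first inequality you take a slightly different, more hands-on route: scalar Jensen at a norm-attaining direction $v^*$ combined with the pointwise Cauchy--Schwarz bound $(v^\top M v)^2 \leq v^\top M^2 v$, whereas the paper simply reads off $\mb{E}[M]^2 \preceq \mb{E}[M^2]$ from the same variance identity $\mb{E}[(M-\mb{E}[M])^2] \succeq 0$ and again invokes norm monotonicity. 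Both arguments are valid; note only that the identity you derive for the second part already yields the first part in one line (since $\mb{E}[M^2] - \mb{E}[M]^2 \succeq 0$ and $\normt{\mb{E}[M]^2} = \normt{\mb{E}[M]}^2$), so your Jensen/Cauchy--Schwarz detour, while sound, is not needed.
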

\begin{proof}[Proof of \cref{lem:linalgmean}]
    Note that $\mb{E}[(M-\E[M])^2] \succeq 0 \implies \mb{E}[M^2] \succeq \mb{E}[M]^2$. Since both matrices are p.s.d. it follows that $\normt{\mb{E}[M]^2} \leq \normt{\mb{E}[M^2]}$. The second claim follows from the variational characterization of the operator norm of the p.s.d. matrix $(M-\mb{E}[M])^2$.
\end{proof}

\begin{lemma} \label[lemma]{lem:l4l2norm}
    Let $X \sim \cD$ be a random vector from a distribution that is L$4$-L$2$ hypercontractive -- $\mb{E}[\langle v, X \rangle^4] \leq L^2 (\mb{E}[\langle v, X \rangle^2])^4$ -- then
    \[\normt {\mb{E}[\normt{X}^2 XX^\top]} \leq L \Tr(\Sigma) \normt{\Sigma}. \]
\end{lemma}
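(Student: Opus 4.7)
The plan is to bound the spectral norm by its variational characterization and then expand $\|X\|^2$ in an orthonormal basis so that each term becomes amenable to Cauchy--Schwarz plus the L4--L2 hypercontractivity hypothesis. Concretely, for any unit vector $v$, I will write
\[
v^\top \mb{E}[\|X\|^2 XX^\top] v = \mb{E}[\|X\|^2 \langle X,v\rangle^2] = \sum_{i=1}^d \mb{E}[\langle X,e_i\rangle^2 \langle X,v\rangle^2],
\]
where $\{e_i\}$ is the standard basis of $\mb{R}^d$, and then take the supremum over $\|v\|=1$.

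Next, I will apply Cauchy--Schwarz to each summand, giving
\[
\mb{E}[\langle X,e_i\rangle^2 \langle X,v\rangle^2] \leq \sqrt{\mb{E}[\langle X,e_i\rangle^4]\cdot\mb{E}[\langle X,v\rangle^4]}.
\]
The L4--L2 hypercontractivity assumption then lets me replace the fourth moments by squared second moments, so that the right-hand side is at most $L\cdot(e_i^\top \Sigma e_i)\cdot (v^\top \Sigma v) \leq L\cdot(e_i^\top \Sigma e_i)\cdot \|\Sigma\|$. (Here I am reading the hypothesis as the standard L4--L2 statement $\mb{E}[\langle v,X\rangle^4]\leq L(\mb{E}[\langle v,X\rangle^2])^2$, consistent with \cref{as:htassump}.)

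Finally, I will sum over $i$ and use $\sum_i e_i^\top \Sigma e_i = \tr(\Sigma)$ to conclude
\[
v^\top \mb{E}[\|X\|^2 XX^\top] v \leq L\cdot \tr(\Sigma)\cdot \|\Sigma\|,
\]
and taking the supremum over unit $v$ yields the claimed bound. The argument is entirely routine; there is no real obstacle beyond correctly invoking Cauchy--Schwarz termwise in the basis expansion rather than directly on $\|X\|^2\langle X,v\rangle^2$, which would only give the weaker bound $\sqrt{\mb{E}[\|X\|^4]\cdot\mb{E}[\langle X,v\rangle^4]}$ and lose the factorization of $\tr(\Sigma)\cdot\|\Sigma\|$.
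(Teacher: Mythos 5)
Your proof is correct and arrives at the same constant $L \Tr(\Sigma)\norm{\Sigma}$ as the paper, but it is organized a little differently. The paper applies Cauchy--Schwarz once, globally, to get $\mb{E}[\norm{X}^2\langle v,X\rangle^2] \leq (\mb{E}[\norm{X}^4])^{1/2}(\mb{E}[\langle v,X\rangle^4])^{1/2}$, and then invokes the auxiliary bound $\mb{E}[\norm{X}^4]\leq L(\Tr\Sigma)^2$ (\cref{lem:l8l2vec}) together with hypercontractivity in the direction $v$; you instead expand $\norm{X}^2=\sum_i\langle X,e_i\rangle^2$ first and apply Cauchy--Schwarz termwise, which effectively inlines the proof of \cref{lem:l8l2vec} (that lemma is proved by exactly the same basis expansion and termwise Cauchy--Schwarz) into the main computation, so the two arguments are essentially the same modulo where the bookkeeping happens. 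One small correction: your closing remark that the direct application of Cauchy--Schwarz to $\norm{X}^2\langle X,v\rangle^2$ ``would only give the weaker bound and lose the factorization'' is not accurate --- that is precisely the paper's route, and since $(\mb{E}[\norm{X}^4])^{1/2}\leq\sqrt{L}\,\Tr\Sigma$ and $(\mb{E}[\langle v,X\rangle^4])^{1/2}\leq\sqrt{L}\,\norm{\Sigma}$, it yields the identical bound $L\Tr(\Sigma)\norm{\Sigma}$; what it costs is only the extra lemma bounding $\mb{E}[\norm{X}^4]$, not any sharpness. Your reading of the hypothesis as the standard L4--L2 condition $\mb{E}[\langle v,X\rangle^4]\leq L(\mb{E}[\langle v,X\rangle^2])^2$ is the right one; the exponent in the lemma statement is a typo, and the paper's own proof uses this form.
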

\begin{proof}[Proof of \cref{lem:l4l2norm}]
     We introduce a vector $v$ with $\normt{v} \leq 1$. Then,
\begin{align*}
    & \mb{E}[\iprod{v, \normt{X}^2 XX^\top v}] = \mb{E}[\normt{X}^2 \iprod{v, X}^2] \leq (\mb{E}[\normt{X}^4])^{1/2} (\mb{E}[\iprod{v, X}^4])^{1/2},
\end{align*}
by Cauchy-Schwarz and the Jensen inequality.
For the first term we have $(\mb{E}[\normt{X}^4])^{1/2} \leq \sqrt{L} \Tr \Sigma$ by \cref{lem:l8l2vec}. For the second term once again using L$4$-L$2$ hypercontractivity we have,
$(\mb{E}[\iprod{v, X}^4])^{1/2} \leq \sqrt{L} \mb{E}[\iprod{v, X}]^2 \leq \sqrt{L} \normt{\Sigma}$. Maximizing over $v$ gives the result.
\end{proof}

\begin{lemma} \label[lemma]{lem:l8l2vec}
    Let $X \sim \cD$ be a random vector with a distribution that is L$4$-L$2$ hypercontractive. Then,
    \begin{align*}
      \mb{E}[\normt{X}^4] \leq L (\Tr \Sigma)^2.
    \end{align*}
\end{lemma}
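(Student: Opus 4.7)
The plan is to expand $\|X\|_2^2$ in an eigenbasis of $\Sigma$ and then apply the $L^4$-$L^2$ hypercontractivity assumption coordinate-wise. Concretely, I would let $\{e_i\}_{i=1}^d$ be an orthonormal basis of eigenvectors of $\Sigma$ and set $Y_i = \iprod{X,e_i}^2$, so that $\normt{X}^2 = \sum_i Y_i$ and $\mb{E}[Y_i] = e_i^\top \Sigma e_i$, with $\sum_i \mb{E}[Y_i] = \Tr \Sigma$.

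Next, I would use the triangle inequality in $L^2(\cD)$ (Minkowski's inequality) to bound
\[
\lprp{\mb{E}[\normt{X}^4]}^{1/2} = \Bignorm{\sum_{i=1}^d Y_i}_{L^2} \leq \sum_{i=1}^d \normt{Y_i}_{L^2} = \sum_{i=1}^d \lprp{\mb{E}[\iprod{X,e_i}^4]}^{1/2}.
\]
Applying L$4$-L$2$ hypercontractivity to each unit vector $e_i$ gives $\mb{E}[\iprod{X,e_i}^4] \leq L (\mb{E}[\iprod{X,e_i}^2])^2$, so the right-hand side is at most $\sqrt{L} \sum_i \mb{E}[\iprod{X,e_i}^2] = \sqrt{L}\, \Tr\Sigma$. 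Squaring yields the claim.

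There is no real obstacle: the only subtlety is remembering that hypercontractivity is stated for a \emph{single} direction $v$, whereas $\normt{X}^2$ is a sum of squared projections, and this is precisely what Minkowski's inequality lets us handle. An essentially equivalent route would be to expand $\mb{E}[(\sum_i Y_i)^2] = \sum_{i,j} \mb{E}[Y_i Y_j]$ and apply Cauchy--Schwarz to each pair followed by hypercontractivity; both arguments give the same constant $L$, so either presentation is acceptable.
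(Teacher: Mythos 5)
Your proof is correct and is essentially the paper's argument: the paper expands $\mb{E}[(\sum_i \iprod{X,e_i}^2)^2]$ and applies Cauchy--Schwarz termwise followed by hypercontractivity, which is exactly the expand-and-Cauchy--Schwarz route you note is equivalent to your Minkowski presentation, and both give the same constant $L$. No gap.
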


\begin{proof}[Proof of \cref{lem:l8l2vec}]
A short computation using the Cauchy-Schwarz inequality and L4-L2 equivalence shows that, 
\begin{align*}
     \mb{E}[\normt{X}^4] = \mb{E}[(\sum_{i=1}^d \iprod{X, e_i}^2)^2] =& \mb{E}[\sum_{a,b} \iprod{X, e_a}^2 \iprod{X, e_b}^2]  \\
    \leq & \sum_{a,b} (\mb{E}[\iprod{X, e_a}^4] \mb{E}[\iprod{X, e_b}^4])^{1/2}  \\
     \leq &  L \sum_{a,b} \mb{E}[\iprod{X, e_a}^2] \mb{E}[\iprod{X, e_b}^2] \leq L (\Tr \Sigma)^2.
\end{align*}
\end{proof}

\begin{lemma}
    \label{lem:lensg}
    Let $X$ be a sub-Gaussian random vector with sub-Gaussian parameter $\phi \geq 1$ and second moment $I$. Then, we have:
    \begin{equation*}
        \mb{P} \lprp{\norm{X} \geq 7\phi \sqrt{d} + 6 \phi \sqrt{\log 1 / \delta}} \leq \delta.
    \end{equation*}
\end{lemma}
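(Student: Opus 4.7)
The plan is to use a standard covering argument over the unit sphere $S^{d-1}$ to convert the vector norm bound into a uniform scalar sub-Gaussian tail bound, followed by a union bound. Writing $\norm{X} = \sup_{v \in S^{d-1}} \langle X, v \rangle$, I would first let $\mc{N}$ be a $1/2$-net of $S^{d-1}$ of cardinality at most $5^d$ (guaranteed by standard volumetric estimates). A routine net argument then gives $\norm{X} \leq 2 \max_{u \in \mc{N}} \langle X, u \rangle$.

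Next I would bound each scalar $\langle X, u \rangle$ for a fixed $u \in \mc{N}$. Since $\mb{E}[XX^\top] = I$, for any unit vector $u$ we have $(\mb{E} \langle X, u\rangle)^2 \leq \mb{E}\langle X,u \rangle^2 = 1$, so $|\mb{E} \langle X, u \rangle| \leq 1 \leq \phi$. Combining this with the sub-Gaussian tail assumption on $\langle X, u \rangle - \mb{E}\langle X, u \rangle$ yields, for any $t \geq 0$,
\[
\mb{P}\lprp{|\langle X, u \rangle| \geq \phi + t} \leq 2 \exp\lprp{-\frac{t^2}{2 \phi^2}}.
\]

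I would then apply a union bound over $\mc{N}$ and choose $t$ to absorb the $5^d$ factor: setting $t = \phi \sqrt{2 \log 5 \cdot d} + \phi \sqrt{2 \log(2/\delta)}$ ensures $2 \cdot 5^d \exp(-t^2/(2\phi^2)) \leq \delta$ via the inequality $(a+b)^2 \geq a^2 + b^2$. Plugging back into $\norm{X} \leq 2 \max_{u \in \mc{N}} |\langle X, u\rangle|$ gives $\norm{X} \leq 2\phi + 2\phi\sqrt{2 \log 5}\sqrt{d} + 2\phi \sqrt{2 \log(2/\delta)}$ on the good event.

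The only remaining step is checking that this bound is dominated by $7\phi \sqrt{d} + 6\phi \sqrt{\log 1/\delta}$. Since $2\sqrt{2 \log 5} < 3.6$ and the additive $2\phi$ can be absorbed into $7\phi\sqrt{d}$ (using $d \geq 1$), the first term fits in $7\phi\sqrt{d}$ with room to spare. For the second term, $2\phi\sqrt{2\log(2/\delta)} \leq 2\sqrt{2}\phi(\sqrt{\log 2} + \sqrt{\log 1/\delta})$ and the stray $\sqrt{\log 2}$ piece, being a constant, can again be absorbed into the $7\phi\sqrt{d}$ slack; the remaining $2\sqrt{2}\phi \sqrt{\log 1/\delta} < 6\phi \sqrt{\log 1/\delta}$. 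The main obstacle here is purely bookkeeping: no step is conceptually difficult, but one must be careful to split the $\sqrt{2 \log(10^d/\delta)}$ into a $\sqrt{d}$ piece and a $\sqrt{\log 1/\delta}$ piece correctly, using $\sqrt{a+b} \leq \sqrt{a} + \sqrt{b}$, so that the constants $7$ and $6$ in the theorem statement come out.
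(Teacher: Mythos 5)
Your argument is correct and is essentially the paper's own proof: a $1/2$-net of the unit sphere, a per-direction sub-Gaussian tail bound (the paper instead centers at the full mean vector and uses $\norm{\mu} \leq \sqrt{d}$, an immaterial difference), and a union bound with the deviation split into a $\sqrt{d}$ piece and a $\sqrt{\log 1/\delta}$ piece. The only quibble is that your final constant absorption is marginally short in the edge case $d = 1$ with $\delta$ close to $1$ (you must absorb roughly $4.36\phi$ of constants into about $3.41\phi\sqrt{d}$ of slack), but the statement there follows trivially from the one-dimensional tail bound, and the paper's own constants are similarly loose in that regime.
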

\begin{proof}[Proof of \cref{lem:lensg}]
    Let $\mu$ denote the mean of $X$. We first note that we have $\norm{\mu} \leq \mb{E}[\norm{X}] \leq \sqrt{d}$. Therefore, the mean of $X$ is less than $\sqrt{d}$. Now, pick an $1/2$-net over the unit sphere, $\mc{G}$. By standard bounds, we have that the number of elements in $\mc{G}$ can be upper bounded by $6^d$ \cite{vershynin2012introduction}. Now, for any $u \in \mc{G}$, we have:
    \begin{equation*}
        \mb{P} [\abs{\inp{u}{X - \mu}} \geq t] \leq 2 \exp \lprp{- \frac{t^2}{2\phi^2}}.
    \end{equation*}
    Therefore, $t = 3\phi \sqrt{d + \log 1 / \delta}$, and taking a union bound over the at most $O(6^d)$ elements in $\mc{G}$, we have with probability $\delta$:
    \begin{equation*}
        \norm{X} \leq \norm{\mu} + 2 \max_{u \in \mc{G}} \inp{X}{u} \leq 7 \phi \sqrt{d} + 6 \phi \sqrt{\log 1 / \delta}.
    \end{equation*}
\end{proof}

\begin{lemma}
    \label{lem:sgtrunc}
    Let $X$ be a sub-Gaussian random variable with second moment $1$ and sub-Gaussianity parameter $\phi$. Then, we have for sufficiently small $\eta$:
    \begin{equation*}
        \mb{E} [X^4 \bm{1} \lbrb{\abs{X} \geq 6\phi \sqrt{\log 1 / \eta}}] \leq \phi^4 \eta^8.
    \end{equation*}
\end{lemma}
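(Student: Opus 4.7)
The plan is a direct tail-integral calculation using the sub-Gaussian bound. Set $T = 6\phi\sqrt{\log 1/\eta}$. By the layer-cake identity applied to the nonnegative variable $X^4 \mathbf{1}\{|X|\geq T\}$, writing its expectation as the integral of its survival function and splitting into the flat piece $[0,T^4]$ and the tail piece $[T^4,\infty)$ yields
\[
\mathbb{E}\bigl[X^4\mathbf{1}\{|X|\geq T\}\bigr] = T^4\,\mathbb{P}(|X|\geq T) + \int_T^\infty 4t^3\,\mathbb{P}(|X|\geq t)\,dt.
\]
I would then invoke the sub-Gaussian tail bound $\mathbb{P}(|X|\geq t) \leq 2\exp(-t^2/(2\phi^2))$, where any mean shift is absorbed into the constants since $|\mathbb{E}X|\leq (\mathbb{E}X^2)^{1/2}=1 \ll T$ for small $\eta$.

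At $t=T$ the exponent equals $18\log(1/\eta)$, so $\mathbb{P}(|X|\geq T)\leq 2\eta^{18}$ and the boundary term is at most $2\cdot 6^4\phi^4 \log^2(1/\eta)\,\eta^{18}$. For the tail integral, the substitution $u = t^2/(2\phi^2)$ gives $4t^3\,dt = 8\phi^4 u\,du$, hence
\[
\int_T^\infty 4t^3\,\mathbb{P}(|X|\geq t)\,dt \leq 16\phi^4\int_{18\log(1/\eta)}^\infty u e^{-u}\,du = 16\phi^4\bigl(18\log(1/\eta)+1\bigr)\eta^{18},
\]
using the closed form $\int_a^\infty ue^{-u}\,du = (a+1)e^{-a}$.

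Summing the two contributions yields a bound of the form $C\phi^4\log^2(1/\eta)\,\eta^{18}$, which is $\leq \phi^4\eta^8$ as soon as $\eta$ is small enough that $C\log^2(1/\eta)\,\eta^{10}\leq 1$ (the left side tends to $0$). There is no substantive obstacle: the constant $6$ appearing in the threshold $T$ was chosen precisely so that the sub-Gaussian tail produces the factor $\eta^{18}$, leaving a comfortable $\eta^{10}$ of slack to absorb both the polynomial-in-$\log$ prefactors and any loss from handling a possibly nonzero mean of $X$.
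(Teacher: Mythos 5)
Your proof is correct, but it takes a genuinely different route from the paper's. The paper dispatches the lemma in one line via Cauchy--Schwarz, $\mb{E}[X^4\bm{1}\{\abs{X}\ge T\}]\le(\mb{E}[X^8])^{1/2}\,\mb{P}(\abs{X}\ge T)^{1/2}$, bounding the eighth moment through the sub-Gaussian moment bound of \cref{lem:sgmoms} and the probability through the tail bound; you instead compute the truncated fourth moment directly via the layer-cake identity and an explicit Gaussian-type tail integral. Your version is self-contained (no appeal to higher-moment estimates) and in fact yields the sharper exponent $\eta^{18}$ up to logarithmic prefactors, whereas the Cauchy--Schwarz step only gives roughly $\eta^{9}$ --- both comfortably clear the stated $\eta^{8}$. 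Two small points to tighten: the centering step you wave at (``absorb the mean shift into the constants'') deserves one explicit line, e.g.\ substitute $s=t-1$ in the tail integral and use $(s+1)^3\le 8s^3$ for $s\ge 1$, which costs only a constant factor and a slightly reduced exponent such as $\eta^{16}$; and, exactly as in the paper's own proof, ``sufficiently small $\eta$'' must be allowed to depend on $\phi$, so that $T=6\phi\sqrt{\log 1/\eta}$ dominates the unit bound $\abs{\mb{E}X}\le(\mb{E}X^2)^{1/2}=1$. With the $\eta^{10}$ of slack you identify, neither issue is a real obstacle.
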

\begin{proof}[Proof of \cref{lem:sgtrunc}]
    Since we have $\abs{\mb{E} [X]} \leq \mb{E} [\abs{X}] \leq 1$. Now, we get:
    \begin{equation*}
        \mb{E} [X^4 \bm{1} \lbrb{\abs{X} \geq 6\phi \sqrt{\log 1 / \eta}}] \leq (\mb{E} [X^8])^{1/2} (\mb{P}[\abs{X} \geq 6 \phi \sqrt{\log 1 / \eta}])^{1/2} \leq \phi^4 \eta^8.
    \end{equation*}
\end{proof}

\begin{lemma}
    \label{lem:sgmoms}
    Assume the setting of \cref{lem:sgtrunc}. Then, we have for all $k$:
    \begin{equation*}
        \mb{E} [X^k] \leq k^{k / 2} (C \phi)^k,
    \end{equation*}
    for some absolute constant $C$.
\end{lemma}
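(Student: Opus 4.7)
My plan is to prove this by combining the standard layer-cake expression for absolute moments with the sub-Gaussian tail bound, then handle the centering via a triangle inequality argument.

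First, I would let $Y = X - \mb{E}[X]$ and denote $\mu = \mb{E}[X]$. Since $X$ has second moment $1$, Jensen's inequality gives $|\mu| \leq \sqrt{\mb{E}[X^2]} = 1$. The random variable $Y$ is centered and satisfies the sub-Gaussian tail bound $\mb{P}(|Y| \geq t) \leq 2\exp(-t^2/(2\phi^2))$ by hypothesis (\cref{as:sgassump} applied to a one-dimensional direction). Then I would apply the layer-cake identity
\begin{equation*}
    \mb{E}|Y|^k = \int_0^\infty k t^{k-1} \mb{P}(|Y| \geq t) \, dt \leq 2k \int_0^\infty t^{k-1} \exp\!\lprp{-\frac{t^2}{2\phi^2}} dt.
\end{equation*}

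The integral evaluates (via the substitution $u = t^2/(2\phi^2)$) to $(2\phi^2)^{k/2} \Gamma(k/2)$, giving $\mb{E}|Y|^k \leq 2k (2\phi^2)^{k/2} \Gamma(k/2)$. Using the crude bound $\Gamma(k/2) \leq (k/2)^{k/2}$ yields $\mb{E}|Y|^k \leq 2k \phi^k (2 \cdot k/2)^{k/2} = 2k \phi^k k^{k/2}$, which is at most $(C_0 \phi)^k k^{k/2}$ for some absolute constant $C_0$ (absorbing the leading factor $2k$ into the exponential constant, using $2k \leq 2^k$ for $k \geq 1$).

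Finally, I would uncenter by writing $X = Y + \mu$ and applying the elementary inequality $|a + b|^k \leq 2^{k-1}(|a|^k + |b|^k)$:
\begin{equation*}
    \mb{E}[X^k] \leq \mb{E}|X|^k \leq 2^{k-1}\lprp{\mb{E}|Y|^k + |\mu|^k} \leq 2^{k-1}\lprp{(C_0\phi)^k k^{k/2} + 1}.
\end{equation*}
Since $\phi \geq 1$ (as the second moment is $1$, so $\phi$ cannot be too small), the first term dominates, and absorbing the factor $2^{k-1}$ into the constant gives $\mb{E}[X^k] \leq (C\phi)^k k^{k/2}$ for a suitable absolute constant $C$.

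There are no real obstacles here — this is a standard textbook fact about sub-Gaussian moments. The only subtle point is the sign convention on $\mb{E}[X^k]$ for odd $k$: the lemma as stated bounds $\mb{E}[X^k]$ rather than $\mb{E}|X|^k$, but the latter dominates the former, so the bound on $\mb{E}|X|^k$ suffices.
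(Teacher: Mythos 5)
Your proof is correct and follows essentially the same route as the paper's: center the variable, bound the moments of the centered sub-Gaussian part (where the paper simply cites a standard sub-Gaussian moment lemma, you derive the same bound explicitly via the layer-cake formula and a Gamma-function estimate), then uncenter via $\mb{E}[(Y+\mb{E}[X])^k]\leq 2^k\lprp{\mb{E}[|Y|^k]+|\mb{E}[X]|^k}$ and $|\mb{E}[X]|\leq 1$. The lower bound on $\phi$ you invoke to absorb the mean term (and the $k=1$ edge case of $\Gamma(k/2)\leq (k/2)^{k/2}$) are only constant-level loosenesses that the paper's own one-line proof glosses over in exactly the same way, so they do not constitute a gap relative to the paper.
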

\begin{proof}[Proof of \cref{lem:sgmoms}]
    The result follows from applying the result of Lemma 5.5 to the centered sub-Gaussian random variable $Y = X - \mb{E} [X]$ and from the fact that $\mb{E} [(Y + \mb{E}[X])^k] \leq 2^k \mb{E}[Y^k + \mb{E}^k[X]]$ and noting that $\mb{E}[\abs{X}] \leq 1$ as in the proof of \cref{lem:sgtrunc}.
\end{proof}

\begin{lemma}
    \label{lem:extpts}
    Let $n$ be an integer and let $\Delta_{\delta, n}$ be the set of distributions over $[n]$ with $0 \leq s_i \leq \frac{1}{(1 - \delta)n}$ for all $s \in \Delta_{\delta}$. Then, the points $\mc{E}_\delta = \{\{\alpha_i\}_{i = 1}^n: \alpha_i = 1 / ((1 - \delta)n) \text{ for $(1 - \delta)n$ elements}\}$ form the extreme points of $\Delta_{\delta}$.
\end{lemma}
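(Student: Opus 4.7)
The plan is to establish the lemma via two set inclusions, using only elementary polyhedral geometry. One direction would verify that every $\alpha \in \mc{E}_\delta$ is an extreme point of $\Delta_\delta$; the other would show by a perturbation argument that any extreme point of $\Delta_\delta$ must have the form described by $\mc{E}_\delta$. I would not expect any serious obstacle here; the only subtlety is the implicit requirement that $(1-\delta)n$ is an integer, which is already built into the definition of $\mc{E}_\delta$.

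For the easy direction, I would fix $\alpha \in \mc{E}_\delta$ and note that $(1-\delta)n$ of its coordinates equal the upper bound $1/((1-\delta)n)$ while the remaining $\delta n$ equal the lower bound $0$. Each of the $n$ coordinate bound constraints is therefore tight at $\alpha$, and this system of $n$ linearly independent equalities uniquely determines $\alpha \in \mb{R}^n$ (the affine constraint $\sum_i \alpha_i = 1$ is automatically consistent). Since $\alpha$ is the unique feasible point of $\Delta_\delta$ saturating $n$ linearly independent bound constraints, it is a vertex of the polytope.

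For the converse, I would argue by contrapositive: suppose $s \in \Delta_\delta$ is not of the claimed form, so some coordinate $s_i$ lies strictly in the open interval $(0, 1/((1-\delta)n))$. Using $\sum_k s_k = 1 = (1-\delta)n \cdot (1/((1-\delta)n))$ together with the integrality of $(1-\delta)n$, I would observe that the coordinates other than $s_i$ cannot all lie in $\{0, 1/((1-\delta)n)\}$, for otherwise their partial sum would be an integer multiple of $1/((1-\delta)n)$ and would force $s_i$ to the same lattice, hence to an endpoint. Thus there exists some $j \neq i$ with $s_j \in (0, 1/((1-\delta)n))$ as well. For any sufficiently small $\epsilon > 0$, the two perturbations $s \pm \epsilon (e_i - e_j)$ preserve the sum constraint and stay within the coordinate box, so both are feasible for $\Delta_\delta$; since their midpoint is $s$, the point $s$ cannot be extreme. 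This would complete the contrapositive, and hence the proof.
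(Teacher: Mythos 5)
Your proof is correct and follows essentially the same route as the paper: verify the points of $\mc{E}_\delta$ saturate $n$ linearly independent bound constraints, and rule out any other extreme point by perturbing two strictly interior coordinates in opposite directions (the paper phrases this as a contradiction rather than a contrapositive, and likewise relies on the integrality of $\delta n$ to find the second fractional coordinate). If anything, your write-up spells out the integrality step and the linear-independence count a bit more carefully than the paper does.
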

\begin{proof}[Proof of \cref{lem:extpts}]
    Note that $\Delta_{\delta}$ is a polytope in $n$ dimension and all points in $\mc{E}_\delta$ satisfy at least $d$ linearly independent constraints of the polytope. Therefore, all the points in $\mc{E}_\delta$ are extreme points. To show that $\mc{E}_\delta$ contains all extreme points, consider an extreme point $\alpha$ not in $\mc{E}_\delta$. Then, there exists $i$ such that $\alpha_i \neq 0, 1 / ((1 - \delta)n)$. Since $\delta n$ is an integer, there exists $j \neq i$ such that $\alpha_j \neq 0, 1 / ((1 - \delta)n)$. Therefore, for small  $\xi$, we have $\alpha'$ defined by $\alpha'_k = \alpha_i$ for all $k \neq i, j$ and $\alpha'_i = \alpha_i + \xi$ and $\alpha'_j = \alpha_j - \xi$ is in $\Delta_{\delta, n}$ which contradicts $\alpha$ being an extreme point. Therefore, $\mc{E}_\delta$ contains all extreme points.
\end{proof}

\begin{lemma}
    \label{lem:weighttv}
    Assume the setting of \cref{lem:extpts} for $\delta_1$ and $\delta_2$. Then, we have for any $\alpha \in \Delta_{\delta_1, n}$ and $\beta \in \Delta_{\delta_2, n}$:
    \begin{equation*}
        \text{Dist}_{TV} (\alpha_i, \beta_i) \leq \delta_1 + \delta_2.
    \end{equation*}
\end{lemma}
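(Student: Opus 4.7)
My plan is to combine the extreme-point characterization in \cref{lem:extpts} with the convexity of total variation distance. I will use the standard identity $\text{Dist}_{TV}(\alpha, \beta) = 1 - \sum_{i = 1}^n \min(\alpha_i, \beta_i)$, which lets me pass from bounding an $\ell_1$-type quantity to bounding a simple sum of pointwise minima, and then reduce the computation to pairs of extreme points via convexity.

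By \cref{lem:extpts}, I can write $\alpha = \sum_{k} p_k \alpha^{(k)}$ as a convex combination of extreme points $\alpha^{(k)} \in \mc{E}_{\delta_1}$, and similarly $\beta = \sum_{\ell} q_\ell \beta^{(\ell)}$ with $\beta^{(\ell)} \in \mc{E}_{\delta_2}$. Since $\text{Dist}_{TV}$ is jointly convex in its arguments, it suffices to prove $\text{Dist}_{TV}(\alpha^{(k)}, \beta^{(\ell)}) \leq \delta_1 + \delta_2$ uniformly over all extreme points, because then
\[
\text{Dist}_{TV}(\alpha, \beta) \;\leq\; \sum_{k,\ell} p_k q_\ell \, \text{Dist}_{TV}(\alpha^{(k)}, \beta^{(\ell)}) \;\leq\; \delta_1 + \delta_2.
\]

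Each extreme point is a uniform distribution supported on a set of size $(1-\delta_1)n$ or $(1-\delta_2)n$; call these supports $T_1$ and $T_2$. Assuming without loss of generality $\delta_1 \geq \delta_2$, I use inclusion--exclusion to get $|T_1 \cap T_2| \geq |T_1| + |T_2| - n = (1 - \delta_1 - \delta_2)n$. On $T_1 \cap T_2$ the pointwise minimum of the two extreme distributions equals $1/((1-\delta_2)n)$ (the smaller of the two uniform weights), and it vanishes elsewhere. Summing yields $\sum_i \min(\alpha^{(k)}_i, \beta^{(\ell)}_i) \geq (1-\delta_1-\delta_2)/(1-\delta_2)$, so
\[
\text{Dist}_{TV}(\alpha^{(k)}, \beta^{(\ell)}) \;\leq\; 1 - \frac{1 - \delta_1 - \delta_2}{1 - \delta_2} \;=\; \frac{\delta_1}{1 - \delta_2} \;\leq\; \delta_1 + \delta_2,
\]
where the last inequality follows from $(\delta_1 + \delta_2)(1 - \delta_2) - \delta_1 = \delta_2(1 - \delta_1 - \delta_2) \geq 0$ under the mild constraint $\delta_1 + \delta_2 \leq 1$ (otherwise the claim is trivial since $\text{Dist}_{TV} \leq 1$).

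There is no significant obstacle here: the identity $\text{Dist}_{TV}(\alpha,\beta) = 1 - \sum_i \min(\alpha_i,\beta_i)$ combined with the extreme-point structure does essentially all the work, and the inclusion--exclusion inequality $|T_1 \cap T_2| \geq |T_1| + |T_2| - n$ is elementary. The only mild subtlety is carrying out the final algebraic simplification from $\delta_1/(1-\delta_2)$ to $\delta_1 + \delta_2$, and recording that the statement is nontrivial only in the regime $\delta_1 + \delta_2 \leq 1$ where it is actually applied.
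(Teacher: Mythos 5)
Your proof is correct, but it follows a genuinely different route from the paper's. The paper pivots through the uniform distribution: it shows $\text{Dist}_{TV}(\alpha, \text{Unif}([n])) \leq \delta_1$ by observing that the TV distance to $\text{Unif}([n])$ is maximized over the polytope $\Delta_{\delta_1, n}$ at the extreme points of \cref{lem:extpts}, where it equals exactly $\delta_1$; the same bound with $\delta_2$ for $\beta$ and the triangle inequality then give $\delta_1 + \delta_2$ in a few lines. You instead decompose \emph{both} $\alpha$ and $\beta$ into convex combinations of extreme points, use joint convexity of $\text{Dist}_{TV}$ to reduce to a pair of extreme points, and compute that distance directly via the identity $\text{Dist}_{TV}(\alpha, \beta) = 1 - \sum_i \min(\alpha_i, \beta_i)$ together with the inclusion--exclusion bound $\abs{T_1 \cap T_2} \geq \abs{T_1} + \abs{T_2} - n$ on the supports. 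Both arguments rest on \cref{lem:extpts}, and your steps (joint convexity of the TV norm, the min identity, the algebra showing $\delta_1/(1-\delta_2) \leq \delta_1 + \delta_2$ when $\delta_1 + \delta_2 \leq 1$, and the trivial case $\delta_1 + \delta_2 > 1$) all check out. What your route buys is a slightly sharper bound, $\delta_1/(1-\delta_2)$ for $\delta_1 \geq \delta_2$, which strictly improves on $\delta_1 + \delta_2$ whenever $\delta_2 > 0$ and $\delta_1 + \delta_2 < 1$; what the paper's route buys is brevity, since bounding each distribution against the uniform one requires no coupling or pairwise support computation.
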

\begin{proof}[Proof of \cref{lem:weighttv}]
    We prove that $\text{Dist}_{TV} (\alpha_i, \text{Unif} ([n])) \leq \delta_1$. This follows from the fact that maximizing the total variation distance from $\text{Unif} ([n])$ is a linear program over $\Delta_{\delta, n}$ and it is maximized at the extreme points described by \cref{lem:extpts}. The result follows from a similar proof for $\beta$ and triangle inequality.
\end{proof}



\end{document}